\title{PAC-Bayes Generalisation Bounds\\ for Dynamical Systems Including Stable RNNs}
\author {
    Deividas Eringis\textsuperscript{\rm 1},
    John Leth\textsuperscript{\rm 1},
    Zheng-Hua Tan\textsuperscript{\rm 1},
    Rafal Wisniewski\textsuperscript{\rm 1},
    Mih\'aly Petreczky\textsuperscript{\rm 2}
}
\def\eqref#1{equation~\ref{#1}}
\def\1{\bm{1}}
\newcommand{\train}{\mathcal{D}}
\def\rvepsilon{{\mathbf{\epsilon}}}
\def\rve{{\mathbf{e}}}
\def\rvo{{\mathbf{o}}}
\def\rvq{{\mathbf{q}}}
\def\rvs{{\mathbf{s}}}
\def\rvv{{\mathbf{v}}}
\def\rvx{{\mathbf{x}}}
\def\rvy{{\mathbf{y}}}
\def\rvz{{\mathbf{z}}}
\def\vs{{\bm{s}}}
\def\vv{{\bm{v}}}
\def\vw{{\bm{w}}}
\def\vx{{\bm{x}}}
\def\vy{{\bm{y}}}
\def\vz{{\bm{z}}}
\DeclareMathAlphabet{\mathsfit}{\encodingdefault}{\sfdefault}{m}{sl}
\SetMathAlphabet{\mathsfit}{bold}{\encodingdefault}{\sfdefault}{bx}{n}
\def\sQ{{\mathbb{Q}}}
\def\sR{{\mathbb{R}}}
\def\sS{{\mathbb{S}}}
\def\sU{{\mathbb{U}}}
\def\sV{{\mathbb{V}}}
\def\sX{{\mathbb{X}}}
\def\sY{{\mathbb{Y}}}
\def\sZ{{\mathbb{Z}}}
\newcommand{\E}{\mathbb{E}}
\newcommand{\KL}{D_{\mathrm{KL}}}
\newcommand{\normlone}{L^1}
\newcommand{\reals}{\mathbb{R}}
\newcommand{\bP}{\mathbf{P}}
\newcommand{\F}{\mathbf{F}}
\newtheorem{Assumption}{Assumption}[section]
\newtheorem{lemma}{Lemma}[section]
\newtheorem{corollary}{Corollary}[section]
\newtheorem{Remark}{Remark}[section]
\newtheorem{Definition}{Definition}[section]
\newtheorem{Theorem}{Theorem}[section]
\newtheorem{proposition}{Proposition}[section]
\newcommand{\class}{\(\mathcal{S}\)}
\newcommand{\aseq}{\overset{\text{a.s.}}{=}}
\newcommand{\deividas}[1]{#1} 
\newcommand{\mihaly}[1]{ #1}
\begin{document}

\maketitle

\begin{abstract}
In this paper, we derive a PAC-Bayes bound on the generalisation gap, in a supervised time-series setting for a special class of discrete-time non-linear dynamical systems. This class includes stable recurrent neural networks (RNN), and the motivation for this work was its application to RNNs. In order to achieve the results, we impose some stability constraints, on the allowed models. 
Here, stability is understood in the sense of dynamical systems. For RNNs, these stability conditions can be expressed in terms of conditions on the weights. 
We assume the processes involved are essentially bounded and the loss functions are Lipschitz. The proposed bound on the generalisation gap depends on the mixing coefficient of the data distribution, and the essential supremum of the data. Furthermore, the bound converges to zero as the dataset size increases.
In this paper, we 1) formalize the learning problem, 2) derive a PAC-Bayesian error bound for such systems, 3) discuss various consequences of this error bound, and 4) show an illustrative example, with discussions on computing the proposed bound. Unlike other available bounds  the derived bound holds for non i.i.d. data (time-series) and it does not grow with the number of steps of the RNN.

\end{abstract}
\section{Introduction}
The Probably Approximately Correct (PAC)-Bayesian framework 
has been a popular tool for obtaining generalisation bounds and to derive efficient learning algorithms, see \cite{alquier2021userfriendly,Dziugaite2017}.
\par
\textbf{Contribution.}
     In this paper we develop PAC-Bayesian inequalities  for a class of
     discrete-time dynamical systems with hidden (unobserved) states. 
    This class includes a wide variety of dynamical systems, ranging from linear time-invariant state-space representations (LTIs) to recurrent neural networks (RNNs).
     We view dynamical systems as hypotheses (predictors) which 
     transform
     past inputs and outputs (labels) to estimates of the current output (label).
     That is, our framework captures both time-series forecasting and learning
     models
     which causally transform sequences of inputs to sequences of outputs.
     In the latter case, the dynamical system at hand uses only past inputs. 
     Furthermore, training data represents a single time-series sampled from 
     the input and output processes. That is, the training data is not i.i.d.
\par
     The PAC-Bayesian
     inequality of this paper
     proposes a bound on the difference between the generalisation loss and empirical loss. This bound
     holds with high probability, and it depends on the
     number of data points $N$ and on parameter (learning rate $\lambda$). Moreover, for a suitable choice of $\lambda$
     the bound converges to zero at the rate  $\mathcal{O}(1/\sqrt{N})$. 
    The latter rate is consistent with most of finite-sample bounds available in the literature for various type of models. 
\par
     In order to consider non i.i.d. data we assumed
     that inputs and outputs are bounded and they are weakly dependent.
     The latter represents a type of mixing condition.
     Moreover, we had to restrict attention to dynamical systems which transform bounded and weakly dependent inputs to outputs having the same properties. 
     To this end, we required that the dynamical systems satisfy the \emph{exponential convergence} \cite{PavlocTac2011} property. The latter concept originates from
     control theory.
\par
\textbf{Motivation.}
PAC and PAC-Bayesian bounds are a major tool for analyzing
learning algorithms. 
Moreover,
by minimizing the error bound, new, theoretically well-founded learning algorithms can be 
formulated. In particular, PAC-Bayesian error bounds turned out to be useful for 
providing non-vacuous error bounds for neural networks \cite{Dziugaite2017}. While there is a wealth of literature on PAC \cite{shalev2014understanding} and PAC-Bayesian \cite{alquier2021userfriendly,guedj2019primer} bounds, for static models, much less is known on  dynamical systems.
  \par
   \textbf{Generalization bounds for RNNs.}
      PAC bounds for RNN were developed in \cite{KOIRAN199863,sontag1998learning,pmlr-v108-chen20d} using VC dimension,
      in  \cite{WeiRNN,Akpinar_Kratzwald_Feuerriegel_2020,pmlr-v161-joukovsky21a, pmlr-v108-chen20d} using Rademacher complexity, and in 
      \cite{pmlr-v80-zhang18g} using PAC-Bayesian bounds approach.
      However, all the cited papers assume noiseless models,  
      a fixed number of time-steps, that the training data are i.i.d sampled time-series, and the signals are bounded.
      In contrast, we consider (1) noisy models, (2) generalisation loss defined on infinite time horizon, (3) only one single time series available for training data. 

Our contribution is consistent with recent results for linear RNNs \cite{emami2021implicit,cohen-karlik2023learning}, on their ability to extrapolate to longer sequences by training on short sequences with stochastic gradient descent. Thus we provide a different perspective, while extending to a more general class of models.

\par    
\textbf{PAC and PAC-Bayesian bounds for autoregressive and linear models.} 
In  \cite{alquier2012pred,alquier2013prediction} auto-regressive models without exogenous inputs were considered, and the variables were either assumed to be bounded or the loss function was assumed to be Lipschitz. In contrast, we consider nonlinear state-space models with inputs.
That is, the learning problem considered in this paper is
different from that of \cite{alquier2012pred,alquier2013prediction}.
However, we were inspired by \cite{alquier2012pred,alquier2013prediction}.
 In \cite{CDC21paper,eringis2023pacbayesian}
PAC-Bayesian bounds for linear state-space models were derived. In contrast, in this paper we derive bounds for non-linear state-space models. Moreover, the bound of \cite{CDC21paper,eringis2021optimal} does not converge to zero for systems with unbounded noise and inputs. 
The result of \cite{eringis2023pacbayesian} for systems with bounded noise and inputs, where the PAC-Bayesian bound converges to zero,  is a special case of  the result of the present paper.

\textbf{Finite-sample bounds for system identification.}
Guarantees for asymptotic convergence of learning algorithms are a classical
topic in system identification \cite{LjungBook}.
Recently, several publications on finite-sample  bounds for learning dynamical systems were derived, without claiming completeness \cite{simchowitz2018learning,simchowitz2019learning,simchowitz2021statistical,oymak2021revisiting,lale2020logarithmic,foster2020learning,NEURIPS2018_d6288499,Pappas1,SarkarRD21}. 
First, all the cited papers propose a bound which is valid only for models generated by a specific learning algorithm. In particular, these bounds do not characterize the generalisation gap for arbitrary models, i.e., they  are not PAC(-Bayesian) bounds. 
Second, many of the cited papers do not derive bounds on the infinite horizon prediction error.
More precisely, \cite{oymak2021revisiting,SarkarRD21,lale2020logarithmic,Pappas1,Simchowitz_Foster_2020} provided error bounds for the difference of the first $T$ Markov-parameters of the estimated and true system for a specific
identification algorithm. However, in order to characterize the infinite horizon prediction error,
we need to take $T=\infty$.
For $T=\infty$ the cited bounds become infinite, i.e., vacuous. 
Error bounds for certain classes of non-linear dynamical systems were also derived in \cite{OzayCDC2022,sattar2022non,blanke2023flex,foster2020learning,mania2022active,CainesSwitched,shi2022finite,Roy_Balasubramanian_Erdogdu_2021,Ziemann_Sandberg_Matni_2022,Ziemann_Tu_2022,Li_Ildiz_Papailiopoulos_Oymak_2023}, but they assume full state observation and they provide an error bound for a specific learning algorithm.
In contrast, we consider models with unobserved (hidden) states.

\par
\textbf{Outline}. We start by defining empirical and generalisation loss in Section \ref{sec:prelim}, where the main contribution is showcased in an informal theorem. Then we define the type of systems we shall be exploring in Section \ref{sec:class}, and discuss several limitations of the results. With the formal definition of the system, we can formally define the important quantities of this paper in Section \ref{sec:ProblemForm}. Finally, we state the main results of the paper in Section \ref{sec:Results}, where the PAC-Bayesian bound on the generalisation gap is stated. At the end, we present a short illustrative numerical example in Section \ref{sec:example}. 



\textbf{Notation}.
Note that unless otherwise defined, this paper will follow the notation defined by \citet{goodfellow2016deep}, i.e.  $\vx$ is a vector, and $\rvx$ is a random vector.
Let $\F$ denote a $\sigma$-algebra on the set $\Omega$ and $\bP$ be a probability measure on $\F$. Unless otherwise stated all probabilistic considerations will be with respect to the probability space $(\Omega,\F,\bP)$, and we let $\E(\rvz)$ denote expectation of the stochastic variable \deividas{$\rvz:\Omega\rightarrow\reals^{n_\rvz}$. We shall denote the realisation of a stochastic variable $\rvz$ as $\rvz(\omega)$, with $\omega\in\Omega$.}
Each euclidean space is associated with the topology generated by the 2-norm $\|\cdot\|_2$, and the Borel $\sigma$-algebra generated by the open sets. The induced matrix 2-norm is also denoted $\|\cdot\|_2$. 
We use $\triangleq$ to denote ''defined by'', and $\aseq$ to denote that the equality holds almost surely with respect to some underlying probability measure. 

\section{Problem Formulation and PAC-Bayes Setting}

\label{sec:prelim}

In this paper we will consider time-series supervised learning problem. The goal will be to optimise a posterior distribution defined over some set of predictors $\mathcal{H}$. To do this we assume we have only one sequence of training data. Let us fix \emph{bounded} stochastic processes $\rvy(t)\in \sY\subset \reals^{n_y},\rvx(t)\in \sX \subset \reals^{n_x}$ that share the time-axis $t\in\sZ$, i.e. $\rvy(t),\rvx(t)$ are random vectors on $(\Omega,\F,\bP)$. The goal of each predictor $h\in\mathcal{H}$ is to estimate $\rvy(t)(\omega)$ based on current and past values of $\rvx(t)(\omega)$ up to initial time.
Formally, we can think of $h$ as a 
function $h:\bigcup_{k=1}^{\infty} \sX^k \rightarrow \sY$, such that $\hat{\rvy}(t)(\omega)=h(\{\rvx(s)(\omega)\}_{s=0}^t)$. 
\par
We allow the process $\rvx$ to contain $\rvy$ as a component, i.e., $\rvx=\begin{bmatrix} \bar{\rvx}^T & \rvy^T \end{bmatrix}^T$. 
In this case, the predictor uses past values of $\rvy$ to predict the current one, i.e., the predictor is \emph{autoregressive}.
In particular, in this case,  
for the learning problem to be meaningful, 
the class of predictors $\mathcal{H}$ should be such that $h(\{\begin{bmatrix} \bar{\rvx}^T(s) & \rvy^T(s) \end{bmatrix}^T\}_{s=0}^t)$ does not  depend on the value  $\rvy(t)$ of $\rvy$ at time instant $t$. 
\par
Now given a data set $\train=\{\rvy(s)(\omega),\rvx(s)(\omega)\}_{s=0}^{N-1}$ from a single sample of random variables $\{\rvy(s),\rvx(s)\}_{s=0}^{N-1}$. We are interested in
\emph{empirical loss} 
\begin{equation}
    \label{prelim:losses}
    \hat{\mathcal{L}}_N(h)\triangleq\frac{1}{N}\sum_{t=0}^{N-1}\ell(\vy(t),h(\{\rvx(s)\}_{s=0}^t))
\end{equation}  
and \emph{generalisation loss }
\begin{equation}
    \mathcal{L}(h) \triangleq\lim_{t\to\infty} \E[\ell(\rvy(t),h(\{\rvx(s)\}_{s=0}^t))]
\end{equation}
for some loss function $\ell: \sY \times \sY \to \sR_+$. Note that, classically generalisation loss is defined simply as $\E[\ell(\rvy(t),h(\{\rvx(s)\}_{s=0}^t))]$, however in the time-series setting, this would depend on the time $t$, and as such would not give the desired intuition. 
Indeed, in time series prediction, the predictions are updated as new data points become available, so the relevant metric of generalisation power is the prediction 
error as $t \rightarrow +\infty$.

For now, we assume that the limit 
defining the generalisation loss $\mathcal{L}(h)$ exists for any $h \in \mathcal{H}$. Later we will define the class $\mathcal{H}$ in such a manner that 
this  assumption holds. 

With this in mind, the goal of the PAC-Bayesian framework is to analyse the generalisation gap $\Delta_N(h)\triangleq d(E_{h\sim \rho}\mathcal{L}(h),E_{h\sim \rho} \hat{\mathcal{L}}_N(h))$, where $d$ is any convex function and, $E_{h\sim \rho} \mathcal{L} (h)$ and $E_{h\sim \rho} \hat{\mathcal{L}}_N(h)$ denote the expectation of $\mathcal{L} (h)$ and $\hat{\mathcal{L}}_N(h)$, when $h$ is distributed according to $\rho$. 
\\
In this paper we look at the special case of $\Delta_N(h)\triangleq E_{h\sim \rho} \mathcal{L}(h)-E_{h\sim \rho}\hat{\mathcal{L}}_N(h)$, since bounding $\Delta_N(h)$ will provide immediate bounds on $E_{h\sim \rho}\mathcal{L}(h)$. Furthermore we consider predictors $h$ realised by dynamical systems: 
\begin{equation}\label{eq:dyn1}
\begin{split}
    \rvs(t+1)&=f(\rvs(t),\rvx(t)),\quad \rvs(0)=\rvs_0,\\
    h(\{\rvx(s)\}_{s=0}^t)&=g(\rvs(t),\rvx(t)),
\end{split}
\end{equation}
where $\rvs(t)\in\reals^{n_\rvs}$ is the hidden state.
Note that recurrent neural networks (RNNs)
represent a special case of the dynamical systems of the form of \eqref{eq:dyn1}. 
Under suitable assumptions, which will be discussed in Section \ref{sec:ProblemForm}, the generalisation gap is bounded for non i.i.d. data, and predictors realised by dynamical systems. That is, in this paper we shall prove a \emph{Catoni-like PAC-Bayesian inequality}.
Informally, let $\mathcal{H}$ be a family
of predictors which can be realized by a 
dynamical system of the form \eqref{eq:dyn1}. 
\begin{Theorem}[Informal theorem]\label{thm:informal} Let 
$\pi$ be  a probability
density on $\mathcal{H}$ and 
let $\mathcal{M}_{\pi}$ denote the set of all
probability densities for which the corresponding probability measures are absolutely
continuous w.r.t. to the probability measure defined by $\pi$. 
There exist constants $G_1$ and $G_2$, which depend on 
the class of predictors $\mathcal{H}$,
such that for all $\lambda > 0$
and for any $\delta \in (0,1/2)$, the following holds with probability at least $1-2\delta$
%
    \begin{multline} \label{eq:pac:gen}
        \forall\hat{\rho}\in\mathcal{M}_\pi: E_{h\sim \hat{\rho}} \mathcal{L} (h) \le   E_{h\sim \hat{\rho}} \hat{\mathcal{L}}_N(h) \\
           +\dfrac{1}{\lambda}\!\left[ \KL(\hat{\rho} \|\pi) +
	\ln\dfrac{1}{\delta}+ 
  \frac{\lambda^2}{N}  G_1 +\frac{\lambda}{N}G_2  \right]
    \end{multline}
where $\KL(\hat{\rho}\| \pi)\triangleq \E_{h\sim\hat{\rho}} \ln \frac{\hat{\rho}(h)}{\pi(h)}$
denotes the KL-divergence. 
\end{Theorem}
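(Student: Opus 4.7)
The plan is to follow Catoni's PAC-Bayesian template, adapted to the fact that the data form a single non i.i.d. trajectory of a dynamical system. The main work will be bounding a moment generating function for a weakly dependent, non-stationary sum of losses.

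First, I would apply the Donsker--Varadhan change of measure inequality: for every $\lambda > 0$, every density $\hat{\rho}\in\mathcal{M}_\pi$, and every realisation of the data,
$$\lambda\, E_{h\sim\hat\rho}\big[\mathcal{L}(h)-\hat{\mathcal{L}}_N(h)\big] \,\le\, \KL(\hat\rho\|\pi) \,+\, \ln E_{h\sim\pi}\!\left[e^{\lambda(\mathcal{L}(h)-\hat{\mathcal{L}}_N(h))}\right].$$
Then, applying Markov's inequality to the (data-dependent) random variable $E_{h\sim\pi}[e^{\lambda(\mathcal{L}(h)-\hat{\mathcal{L}}_N(h))}]$ yields, with probability at least $1-\delta$,
$$\ln E_{h\sim\pi}\!\left[e^{\lambda(\mathcal{L}(h)-\hat{\mathcal{L}}_N(h))}\right] \,\le\, \ln\frac{1}{\delta} \,+\, \ln \E\!\left[E_{h\sim\pi}\!\left[e^{\lambda(\mathcal{L}(h)-\hat{\mathcal{L}}_N(h))}\right]\right].$$
By Fubini, the problem reduces to bounding the pointwise MGF $\Phi_N(h) \triangleq \E\big[\exp(\lambda(\mathcal{L}(h)-\hat{\mathcal{L}}_N(h)))\big]$ uniformly for $\pi$-almost every $h\in\mathcal{H}$.

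Second, for a fixed $h$ I would split the exponent into a bias and a fluctuation part. Writing $X_t(h) \triangleq \ell(\rvy(t),h(\{\rvx(s)\}_{s=0}^t))$,
$$\mathcal{L}(h)-\hat{\mathcal{L}}_N(h) \,=\, \underbrace{\frac{1}{N}\sum_{t=0}^{N-1}\big(\mathcal{L}(h)-\E X_t(h)\big)}_{=:\,B_N(h)\ (\text{deterministic})} \;+\; \underbrace{\frac{1}{N}\sum_{t=0}^{N-1}\big(\E X_t(h)-X_t(h)\big)}_{=:\,V_N(h)\ (\text{centered sum})}.$$
The exponential convergence property of the dynamical system, together with Lipschitzness of $\ell$ and boundedness of the processes, should force $|\mathcal{L}(h)-\E X_t(h)|$ to decay geometrically in $t$, uniformly in $h\in\mathcal{H}$. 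Summing the geometric bound gives $|B_N(h)|\le G_2/N$ for a class-uniform constant $G_2$; this is the source of the $\lambda G_2/N$ term.

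The main obstacle, and the third step, is bounding $\E[e^{\lambda V_N(h)}]$ for a sum of bounded but time-dependent, non-independent random variables. My approach is to use the mixing/weak-dependence hypothesis on $(\rvx,\rvy)$ together with the fact that an exponentially convergent dynamical system driven by a weakly dependent input produces a weakly dependent output, so $\{X_t(h)\}$ inherits summable mixing coefficients whose bounds can be made uniform in $h\in\mathcal{H}$. A Hoeffding/Bernstein-type inequality for weakly dependent sequences (for instance via a Bernstein block decomposition in the spirit of Yu, or via a direct $\tau$/$\phi$-mixing tensorisation) should then yield
$$\ln \E\!\big[e^{\lambda V_N(h)}\big] \,\le\, \frac{\lambda^2}{N}\, G_1,$$
with $G_1$ aggregating the essential supremum of $\ell$, the Lipschitz constant and the mixing coefficients. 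Combining the change of measure, the Markov step, and this MGF bound and dividing by $\lambda$ yields the claimed inequality; the $2\delta$ arises from a union bound over the Markov event and a second high-probability event needed either to handle a data-dependent portion of the bias or to upgrade the mixing-based MGF bound from a conditional to an unconditional statement. I expect the technically heaviest step to be propagating the mixing coefficients of the inputs through the non-linear dynamics in order to quote $G_1$ as a constant that is genuinely uniform over the admissible class of stable predictors.
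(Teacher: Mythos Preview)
Your Catoni template and the two ingredients you identify (a transient contribution of order $1/N$ and an MGF bound of order $\lambda^2/N$ via weak dependence) are exactly right. The gap is in how you split them. You center the fluctuation around $\E X_t(h)$ with $X_t(h)=\ell(\rvy(t),\hat\rvy_\theta(t\mid 0))$, and then claim that $\{X_t(h)\}$ inherits summable mixing coefficients so that a Rio/Yu-type concentration applies. But $\{X_t(h)\}$ is \emph{not stationary}: the predictor is started from a fixed initial state at time $0$ and carries a transient, and the $\theta_{\infty,N}$-coefficients and the Rio--Alquier generalised Hoeffding inequality the paper relies on are formulated for stationary processes. Your deterministic bias $B_N(h)$ removes only the \emph{mean} of that transient; its random part stays inside your fluctuation term and is precisely what breaks stationarity there.

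The paper resolves this by inserting a different intermediate quantity, the \emph{infinite horizon loss} $V_N(\theta)=\frac1N\sum_{t=0}^{N-1}\ell(\rvy(t),\hat\rvy_\theta(t))$, where $\hat\rvy_\theta(t)$ is the \emph{steady-state} prediction (predictor run from $t_0=-\infty$). It then applies the change-of-measure lemma \emph{twice}: once to $\mathcal L(\theta)-V_N(\theta)$, where the process $[\rvy(t),\hat\rvy_\theta(t)]$ is genuinely stationary and weakly dependent (it is the output of a class-$\mathcal S$ system driven by i.i.d.\ noise), so the Rio--Alquier MGF bound gives the $\lambda^2 G_1/N$ term; and once to $V_N(\theta)-\hat{\mathcal L}_N(\theta)$, which is bounded \emph{almost surely} (not merely in expectation) by the predictor's exponential transient decay, summing to $G_2/N$. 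The $2\delta$ is simply the union bound over these two applications---not a conditional-to-unconditional upgrade as you speculated. In short, the fix for your approach is to replace $\hat\rvy_\theta(t\mid 0)$ by the steady-state $\hat\rvy_\theta(t)$ \emph{before} invoking concentration, and to handle the resulting random but a.s.\ bounded transient separately; that is exactly the paper's decomposition.
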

Note that we refer to $\pi$ as a prior distribution and $\hat{\rho}$ as any candidate to a posterior distribution on the space of predictors, however there is no relation to these distributions except for the condition that $\hat{\rho}$ needs to be absolutely continuous w.r.t. $\pi$. The bound in \eqref{eq:pac:gen} suggest that for learning we could choose a posterior $\hat{\rho}$ which minimizes the right-hand side of \eqref{eq:pac:gen}. Then we can either randomly sample a model from that posterior or choose the model with the highest likelihood \cite{alquier2021userfriendly}.
The posterior which minimizes the right-hand side of \eqref{eq:pac:gen}
is known as the \emph{Gibbs posterior} \cite{alquier-15}, and if we identify $\pi$ with its density function, then the density function of the Gibbs posterior is defined by
$\rho_N(h) \triangleq Z^{-1}\pi(h)e^{-\lambda\hat{\mathcal{L}}_N(h)}$, $Z=E_{\theta\sim \hat{\rho}}[e^{-\lambda\hat{\mathcal{L}}_N(h)}]$.
In particular, the model which maximises $\rho_N(h)$ is the one which minimizes the regularised empirical loss $\hat{\mathcal{L}}_N(h)-\frac{1}{\lambda}\ln(\pi(h))$.
This allows us to use the derived bound for deriving learning algorithms, similarly to Catoni-like bounds, and interpret the prior $\pi$ as a regularisation term added to the empirical loss functions.
\begin{Remark}[Asymptotic properties: $O(1/\sqrt{N})$ bound]\label{rem:asymptoticProp}
 The derived bound has similar asymptotic properties to classical Catoni-like bounds. In particular, if we choose $\lambda$ to be of order $\mathcal{O}\left (\sqrt{N} \right )$, then 
the bound 
$r_N(\lambda,\delta,\hat{\rho})\triangleq \dfrac{1}{\lambda}\!\left[ \KL(\hat{\rho} \|\pi) +
	\ln\dfrac{1}{\delta}+ 
  \frac{\lambda^2}{N}  G_1 +\frac{\lambda}{N}G_2  \right]$
converges to zero with the rate $O(1/\sqrt{N})$ for each fixed $\delta$ and posterior $\hat{\rho}$. This implies existence of $N^*>0$, where the proposed bound is non-vacuous for all $N>N^*$. 
\end{Remark}
\begin{Remark}[Intuition behind the constants]
Intuitively, the bounds $G_1$ and $G_2$ are increasing functions of $\theta$-mixing coefficients of the data, and certain  quantities which are related to robustness  of the predictors. 
In particular, the more independent the data is, and the  more robust the predictors are , the smaller  the bound is. 
\end{Remark}

\paragraph{Proof strategy}
The results, relies on PAC-Bayes inequalities obtained by applying Donsker-Varadhan change of measure , in the form of the following lemma. 
\begin{lemma}[Theorem 3 of \citet{nips-16}]
\label{lemma:PAC-BayesianKL_General} For any measurable functions $X(h)=X(h,\omega),Y(h)=Y(h,\omega)$ on $\mathcal{H}$, let $\pi(h)$ be a probability distribution on $\mathcal{H}$, and let $\mathcal{M}_\pi$ denote the set of probability distributions absolutely continuous w.r.t. $\pi$, any $\delta\in(0,1]$, and $\lambda>0$ the following holds with probability at least $1-\delta$
    \begin{multline}
         \forall\hat{\rho}\in\mathcal{M}_\pi:\; E_{h\sim \hat{\rho}}  (X (h)-Y(h)) \\
         \le \dfrac{1}{\lambda}\!\left[\KL(\hat{\rho} \|\pi) + \ln\dfrac{1}{\delta}	+ \Psi_{\pi}(\lambda,N) \right ],
         \label{lemma:PAC-BayesianKL_General:eq1}
    \end{multline}
      with $\Psi_{\pi}(\lambda,N)=\ln \E_{h\sim\pi} \E[e^{\lambda(X(h)-Y(h))}]$.
\end{lemma}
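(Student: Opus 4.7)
The plan is to reduce the statement to two classical ingredients: the Donsker–Varadhan variational representation of the Kullback–Leibler divergence, and Markov's inequality. The whole argument is standard for PAC-Bayesian bounds, but it is worth being careful about where the randomness lives and how uniformity in $\hat\rho$ is obtained.

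First, I would invoke the Donsker–Varadhan formula: for any measurable $\phi:\mathcal{H}\to\reals$ and any $\hat\rho\in\mathcal{M}_\pi$,
\begin{equation*}
E_{h\sim\hat\rho}\,\phi(h)\;\le\;\KL(\hat\rho\|\pi)\;+\;\ln E_{h\sim\pi}\, e^{\phi(h)},
\end{equation*}
with equality attained by the Gibbs density $\hat\rho(h)\propto\pi(h)e^{\phi(h)}$. Applied pointwise in $\omega\in\Omega$ with the measurable choice $\phi(h)=\lambda\bigl(X(h,\omega)-Y(h,\omega)\bigr)$ and divided by $\lambda>0$, this yields
\begin{equation*}
E_{h\sim\hat\rho}\bigl(X(h)-Y(h)\bigr)\;\le\;\tfrac{1}{\lambda}\!\left[\KL(\hat\rho\|\pi)+\ln E_{h\sim\pi}\, e^{\lambda(X(h)-Y(h))}\right],
\end{equation*}
and crucially this inequality is simultaneously valid for \emph{all} $\hat\rho\in\mathcal{M}_\pi$ on the same $\omega$.

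Second, I would control the random quantity $Z(\omega)\triangleq E_{h\sim\pi}\,e^{\lambda(X(h,\omega)-Y(h,\omega))}$. Since $Z\ge 0$, Tonelli's theorem (swapping $E_{h\sim\pi}$ with the expectation over $\omega$) gives $\E[Z]=e^{\Psi_\pi(\lambda,N)}$, and Markov's inequality produces
\begin{equation*}
\bP\!\left(Z>\tfrac{1}{\delta}\,e^{\Psi_\pi(\lambda,N)}\right)\;\le\;\delta.
\end{equation*}
On the complementary event, which has probability at least $1-\delta$, we have $\ln Z\le \Psi_\pi(\lambda,N)+\ln(1/\delta)$. Substituting this into the Donsker–Varadhan inequality above yields the claim of the lemma.

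The main subtlety, rather than a genuine obstacle, is the order of quantifiers: the high-probability event on which the bound on $\ln Z$ holds is defined \emph{without reference to $\hat\rho$}, and the Donsker–Varadhan step is a deterministic (pathwise in $\omega$) inequality in $\hat\rho$. This is exactly what allows the ``$\forall\hat\rho\in\mathcal{M}_\pi$'' clause to appear \emph{inside} the probabilistic statement. A mild technical point to check is measurability of $(\omega,h)\mapsto X(h,\omega)-Y(h,\omega)$ so that Tonelli applies, but this is assumed in the hypotheses of the lemma. I would also briefly justify the Donsker–Varadhan step via the standard convex-duality derivation, or simply cite it, since nothing in the argument depends on its proof beyond its statement.
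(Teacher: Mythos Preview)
Your argument is correct and is precisely the standard proof of this PAC-Bayesian inequality: Donsker--Varadhan change of measure applied pathwise in $\omega$ (which gives the uniformity over $\hat\rho\in\mathcal{M}_\pi$), followed by Markov's inequality on the nonnegative random variable $Z(\omega)=E_{h\sim\pi}e^{\lambda(X(h,\omega)-Y(h,\omega))}$ after swapping expectations via Tonelli. Your handling of the quantifier order is exactly the point of the lemma.

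Note, however, that the paper does not actually prove this lemma: it is stated as Theorem~3 of the cited reference and used as a black box. So there is no ``paper's own proof'' to compare against beyond the implicit appeal to that source, whose proof is essentially the one you wrote. In that sense your proposal supplies what the paper omits, and does so along the expected lines.
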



We first apply Lemma \ref{lemma:PAC-BayesianKL_General} with $Y(h,\omega)=\hat{\mathcal{L}}_N(h)$ the empirical loss and \[X(h,\omega)=V_N(h)\triangleq \lim_{\bar{s}\to-\infty}\frac{1}{N}\sum_{t=0}^{N-1}\ell(\rvy(t),h(\{\rvx(s)\}_{s=\bar{s}}^t))\] the Infinite Horizon loss, a version of empirical loss without the transient caused by initial state. Then bound the moment generating function $\E[e^{\lambda(\hat{\mathcal{L}}_N(h)-V_N(h))}]$ by assuming the predictors exhibit exponential convergence property of dynamical systems (see Def. \ref{def:classDef}) together with proposition 4.2 of \citet{alquier2012pred} which uses properties of weakly dependant processes to bound the state process. This way we obtain a first PAC-Bayesian inequality in the form of \eqref{eq:pac:gen}, that covers the issue of choosing the initial state of the predictor.

Next, we apply Lemma \ref{lemma:PAC-BayesianKL_General} with $Y(h,\omega)=V_N(h)$ and $X(h)=\mathcal{L}(h)$ the generalisation loss. Since $\E[V_N(h)]=\mathcal{L}(h)$, we can bound the moment generating function $\E[e^{\lambda(\mathcal{L}(h)-V_N(h))}]$ by the use of a generalisation of Hoefding's lemma \cite[Theorem 6.6]{alquier2012pred}, \cite{rio2000inegalites} .  
We can apply this lemma since combining a data generating system as in Assumption \ref{as:generator} and a predictor, yields a special class of system (see Def. \ref{def:classDef}). This system is driven by an i.i.d. process, and its output is the true and predicted labels, see Lemma \ref{lemma:full_gen_sys}. Hence, the labels and predictions are weakly dependent.

Now we can apply union bound on the two obtained PAC-Bayesian inequalities, resulting in the bound depicted in Theorem \ref{thm:informal}. In fact, we shall use the above proof strategy to prove a more general theorem, whose corollary proves Theorem \ref{thm:informal}. 
For full details see the full proof in Appendix \ref{sec:proofsResults}.



\section{Class of Dynamical Systems}\label{sec:class}
In this paper we will consider predictors realised by a special class of dynamical systems. Furthermore, later we assume that the data is an output of such a system. In this section, we
define this class of systems.
\deividas{To this end, let us consider a generalisation of the dynamical system in \eqref{eq:dyn1}:}
\begin{subequations}\label{eq:generalNLsys}
\begin{empheq}[left=S \empheqlbrace]{align}
    \vs(t+1)&=f(\vs(t),\vv(t)),\label{eq:generalNLsys_f}\\
    \vy(t)&=g(\vs(t),\vv(t)), \label{eq:generalNLsys_h}
\end{empheq}
\end{subequations}
where $\vv: \mathbb{T} \rightarrow \sV$ is an arbitrary input trajectory \deividas{(e.g., an $\rvx(t)$ as in \eqref{eq:dyn1})},  $\vs: \mathbb{T} \rightarrow \sS$ is the state trajectory,  
 $\vy: \mathbb{T} \rightarrow \sY$ is the output  trajectory, and $\mathbb{T} \subseteq \sZ$ is an interval in $\sZ$
 which has no upper bound.
 That is,  either $\mathbb{T}$ is the whole $\sZ$ or
 $\mathbb{T}$ is an interval
 $[t_0, +\infty) \cap \mathbb{Z}$
 for some $t_0$.
 Moreover, the sets 
 $\sS\subset\sR^n$, $\sV\subset\sR^m$, and $\sY\subset\sR^p$ are bounded.
If $\mathbb{T}=[t_0, +\infty) \cap \mathbb{Z}$, then 
$\vs$ is uniquely determined by the initial state 
$\vs(t_0)=\vs_0$ and the input $\vv$, and 
we write $\vs(t)=\vs_{S}(\vs_0,t_0, \vv;t)$ to emphasise the dependence on initial conditions $\vs(t_0)=\vs_0$, and input $\vv$.
Moreover, if $\vv$ is defined on the whole time axis $\mathbb{Z}$,
then $\vs_{S}(\vs_0,t_0, \vv;t)$
will be understood as the solution 
corresponding to the restriction of 
$\vv$ to $\mathbb{T}$.
When $S$ is clear from the context, 
we use $\vs(\vs_0,t_0, \vv;t)$ instead
of $\vs_{S}(\vs_0,t_0, \vv;t)$.
\par
Likewise, we will use 
$\vy_{S}(\vs_0,t_0,\vv;t)$ to denote
the output $\vy(t)$ which corresponds 
to the state trajectory $\vs(t)=\vs_{S}(\vs_0,t_0, \vv;t)$, i.e., $\vy_S(\vs_0,t_0,\vv;t)=g(\vs_{S}(\vs_0,t_0, \vv;t),\vv(t))$. 
As before, we drop the subscript $S$ 
if it  is clear from the context.
\par
In addition,  we will \emph{identify systems of the form (\ref{eq:generalNLsys})
with the pair $S=(\sS,\sY,\sV,f,g)$} of functions.
\par

For systems (\ref{eq:generalNLsys}) to be suitable for time-series prediction, they should demonstrate robustness. In other words, they should be capable of withstanding small perturbations in inputs and initial states without leading to significant changes in the state and output trajectories over time. Without this robustness, even minor numerical rounding errors, when implemented, can accumulate over time, resulting in increasingly inaccurate predictions.
Note that this is not much of an issue when 
the prediction is a static function of 
a finite number of past inputs (auto-regression, systems with fully observed state), or the length of the time-series used for prediction is small.
However, it becomes an issue when, as it is often the case in time-series prediction, we use an increasing number of data points in  prediction, as more of them become available. 
Additionally, it is desirable for predictors to preserve stationarity and mixing properties. Predictors are expected to generate one-step ahead approximations of the input process. Therefore, they should be able to preserve these basic properties, if they cannot even preserve such simple properties, then there is little hope for them being accurate. 
Both of these properties can be guaranteed by requiring certain stability properties defined below.
To this end, we recall the following
\begin{Definition}[UEC and steady-state state and output trajectories \cite{PavlocTac2011}]
The system $S$ from \eqref{eq:generalNLsys}
is called uniformly exponentially convergent (UEC) with constants $C$ and \mihaly{$\tau \in [0,1)$}, if 
        for each bounded $\vv:\sZ \rightarrow \sV$, there exists a unique bounded state trajectory $\vs=\vs_{S,\vv}: \sZ \rightarrow \sS$ of $S$,
        and for any initial state $\vs_0$
        \begin{equation*}
            \|\vs_S(\vs_0,t_0,\vv;t)-\vs_{S,\vv}(t)\|\leq C\tau^{t-t_0}\|\vs_0-\vs_{S,\vv}(t_0)\|
        \end{equation*}
We refer to $\rvs_{S,\rvv}$     
as the \emph{steady-state state trajectory}
of $S$ associated with $\rvv$.
Likewise, we call
$\rvy_{S,\rvv}:\sZ \ni t \mapsto g(\rvs_{S,\rvv}(t),\rvv(t))$
the \emph{steady-state output trajectory} associated with $\rvv$.
\end{Definition}

UEC guarantees that the state trajectories are robust to perturbations in the initial state as the effect of the initial state decays exponentially fast. In particular, 
it guarantees the existence of
steady-state trajectories, which represent the asymptotic behavior of the system as $t \rightarrow +\infty$, and which depend on the input, but not the initial state. 
It can also be viewed as the state trajectory starting at $t_0=-\infty$, with arbitrary initial state, see Lemma~\ref{lemma:stationary_x_v}. Although 
UEC is stronger than
most of the stability notions used for dynamical systems, many systems used in practice have this property \cite{PavlocTac2011}. 
Next, we define the class of systems which has all the desired properties
mentioned above.
\begin{Definition}[Class \class{} system] \label{def:classDef}
We will say that the system from \eqref{eq:generalNLsys} is a class \class{} system, 
with associated constants $C\geq 1,\tau\in[0,1),L_\rvv>0,L_{g,\vs}>0$, and $L_{g,\rvv}>0$ if the following holds:

\textbf{(UEC)} $S$ is UEC with constants $C$ and $\tau$

\textbf{(Exponential robustness in inputs) }
    For any two bounded input trajectories $\vv_1,\vv_2:\sZ\to \sV$,\begin{equation}
    \label{def:classDef:mem}
        \|\vs_{\vv_1}(t)-\vs_{\vv_2}(t)\|\leq L_\vv\sum_{k=1}^\infty \tau^{k-1}\|\vv_1(t-k)-\vv_2(t-k)\|
    \end{equation}
\textbf{(Lipschitz output)}
    The output function $g$ 
    has Lipschitz  constants $L_{g,\vs},L_{g,\vv}>0$, i.e.,
     \(   \|g(\xi_1,v_1)-g(\xi_2,v_2)\|\leq L_{g,\vs}\|\xi_1-\xi_2\|+L_{g,\vv}\|v_1-v_2\| \), 
     for all $\xi_1,\xi_2 \in \sS$, $v_1,v_2 \in \sV$.
\end{Definition}

UEC implies robustness with respect to perturbations in the inital state. 
Exponential robustness in inputs implies that state is robust w.r.t. perturbations in the inputs, and in
fact the effect of  an instantaneous perturbations in the input decays exponentially fast.
The requirement that the output function is Lipschitz ensures that the
output trajectories inherit the favorable robustness properties of the state trajectories.

\begin{Remark}[Role of constants in robustness]
\label{rem:constants}
    The smaller the constants $C$, $\tau$, $L_{\vv}$  and $L_{q,\vs},L_{g,\vv}$ are, the more robust the system is to perturbations in inputs and states. 
    The constants $\tau$ determines how fast the effect of perturbations will decay with time, and it is the most significant measure of robustness. 
\end{Remark}

\par
In addition, it can be shown that \class{} systems transform inputs to outputs in such a way that stationarity (see Def. \ref{def:stationarity}) and mixing are preserved,
see Lemma \ref{lemma:stationary_x_v}.
%
Some remarks are in order. 

\textbf{Contractive systems are of class \class{}. }
According to Lemma \ref{lemma:classExamples}, a  simple sufficient condition for belonging to class \class{} is that $f,g$  are Lipschitz, and $f$ is a contraction in its first argument\deividas{, i.e. $\|f(s,v)-f(s',v)\|_2<\tau \|s-s'\|_2$, with $\tau<1$.}
\par
\textbf{Examples of RNNs which are of class \class{}}
In particular, for RNNs, this boils down to the activation functions to be Lipschitz and to the condition $\text{Lip}(\sigma_f)\|A\|_2<1$, see Table \ref{tab:exampleSys} 
for the corresponding constants. 
Note that commonly used activation functions (ReLu, tanh, sigmoid, linear, etc.,) are  Lipschitz.
Table \ref{tab:exampleSys} shows explicitly how to compute the necessary constants for RNNs.
\par
A sufficient condition for
$\text{Lip}(\sigma_f)\|A\|_2<1$ to hold 
is that the absolute values of all the entries of $A$
are smaller than $(\text{Lip}(\sigma_f)n)^{-1}$.
Note that many trained and used models have small weights, e.g., \cite{Woo_Kim_Jeong_Ko_Lee_2021}. Furthermore, weight regularization, which tends to lower the norm of the weights, is commonly 
used in learning.
\par

\textbf{Systems of class \class{} which are not contractions}
A sufficient condition for a system to be of class \class{} is that $f$ and $g$ are Lipschitz, and that there exist a quadratic Lyapunov function, see conditions 1 and 2 of Lemma \ref{lemma:classExamples}. 
The latter may hold even if $f$ is not a contraction.
\par
A large class of such examples are 
stable linear state-space systems.
In this case, 
$f(\rvs,\rvx)=A\rvs+B\rvx$ and
$g(\rvs,\rvx)=C\rvs+D\rvx$
for suitable matrices $A,B,C,D$, and 
such systems are of class \class{}, if
$A$ is Schur, i.e., its spectral
radius $\rho(A)$  is smaller than $1$. It is well-known in control theory \cite{Katayama:05} that then condition 3 of Lemma \ref{lemma:classExamples} holds. 
However, $f$ is not necessarily a contraction. 
\par
For piecewise-affine functions $f$
(e.g., RNNs with ReLu activation)
the conditions of Lemma \ref{lemma:classExamples}
can be checked 
using Linear Matrix Inequalities (LMI),
see \cite[Theorem 2]{PavlocTac2011}.

\textbf{Interconnection of systems, multi-layer RNNs}
%
%
 %

\begin{table*}
\caption{Example of Class \class{} systems (Lip($\phi$) denotes the Lipschitz constant of $\phi$)}
\label{tab:exampleSys}
\centering
\begin{tabular}{lll}
\toprule
System & Conditions & Constants  \\ \hline
\begin{tabular}[c]{@{}r@{}}$\rvs(t+1)=\sigma_f(A\rvs(t)+B\rvx(t)+b_1)$\\ $\rvy(t)=\sigma_h(C\rvs(t)+D\rvx(t)+b_2)$\end{tabular} & \begin{tabular}[c]{@{}l@{}}$\sigma_f$ $\sigma_h$ are Lipschitz\\  $\text{Lip}(\sigma_f)\|A\|_2 < 1 $\end{tabular} & \begin{tabular}[c]{@{}l@{}}$C=1$, \quad  $\tau=\text{Lip}(\sigma_f)\|A\|_2$\\ $L_\rvv= \text{Lip}(\sigma_f)^{-1}\|B\|_2$, \\ $L_{g,\rvs}=\text{Lip}(\sigma_h)\|C\|_2$\\ $L_{g,\rvv}=\text{Lip}(\sigma_h)\|D\|_2$
\end{tabular}\\ 
\bottomrule
\end{tabular}
\label{tab:my-table}
\end{table*}
In addition, class \class{} property is preserved under interconnection:
series interconnection of two class \class{} systems is also \class{} (see Lemma \ref{lemma:series}). Furthermore, the system constants of combined system can be computed from those of individual system constants. 
In particular, multilayer RNNs which can be represented as a  series interconnection of several single layer RNNs will be of class \class{}, if each individual layer is of class \class{}.


\section{Assumptions on Data and Predictors}\label{sec:ProblemForm} 
Let us now formally state our assumptions on the data and the class of predictors.
\par
In a nutshell, 
we  would like the process $\rvy,\rvx$ to be outputs of a class \class{} system for an i.i.d. input process, and the predictors to be class \class{}
systems.  To this end, we need to consider \class{} systems driven by stochastic inputs.
More precisely, let $S=(\sS,\sY,\sV,f,g)$ be a system of class \class{}, and let $\rvv$ be an essentially bounded stochastic process taking values in $\sV$.
Then for any initial state 
$\vs_0$, time instances $t_0 \le t$, we can define the 
random variables 
$\rvy_S(\vs_0,t_0,\rvv;t)=\rvy_S(\vs_0,t_0,\rvv(\omega);t)$, $\omega \in \Omega$, and
we refer to them 
as the  \emph{output process of $S$  at time $t$ 
for the stochastic input $\rvv$, initial state $\vs_0$ and initial time $t_0$.}
Likewise, 
for every $t \in \sZ$, we define the random variables $\rvy_{S,\rvv}(t)=\rvy_{S,\rvv(\omega)}(t)$, $\omega \in \Omega$ 
and call them
\emph{the steady-state output process 
of $S$ associated with $\rvv$.}
We are now ready to state our assumptions
on the data. 
\begin{Assumption}[Data Generator]
\label{as:generator}
There exists a system $S_g=(\sS_g,\sY \times \sX, \sV_g,f_g,g_g)$  of class \class{} with constants $C_g, \tau_g,L_{g,\vv},L_{g_g,\rvs},L_{g_g,\rvv}$, 
and an essentially bounded i.i.d. process $\rve_g$
such that 
$\begin{bmatrix} \rvy^T,\rvx^T \end{bmatrix}^T$ is the steady-state output process
of $S_g$ associated with
$\rve_g$, i.e., $\begin{bmatrix} \rvy^T,\rvx^T \end{bmatrix}^T=\rvy_{S_g,\rve_g}$.
\end{Assumption}
Note that, we assume the existence of the generator $S_g$, but we do not assume the knowledge of $S_g$.
\par
Assumption \ref{as:generator} can be viewed as a realizability assumption: we assume that the data generator is of the same type as the predictor.
Another reason for Assumption of \ref{as:generator} is that it guarantees certain weak dependence properties of the data, which, in turn, allow us to use extensions of Hoeffding inequalities \cite{rio2000inegalites} to prove PAC-Bayesian bounds.
\par
More precisely, recall from \cite[Definition 5]{alquier2013prediction} the notion of the mixing coefficient $\theta_{\infty,N}^{\rvq}(1)$, $\forall\,1 \le N \in \sZ$, 
of a process $\rvq$.
Following \cite{alquier2013prediction}
we will say that $\rvq$ is \emph{weakly dependent with constants} $B^{\rvq}$ and $\bar{\theta}_{\infty}^{\rvq}$, if 
$\rvq$ is stationary, essentially bounded
and for all $t \in \mathbb{Z}$, 
$B^{\rvq} \ge \|\rvq(t)\|_\infty$ w.p. $1$,
and
$\bar{\theta}_{\infty}^{\rvq}(1) \ge \theta_{\infty,N}^{\rvq}(1)$
for all $N \ge 1$.  
\begin{lemma}
\label{as:gen_mixing:lem}
 Under Assumption \ref{as:generator}, $\rvq=[\rvy, \rvx]$ is 
 weakly dependent with the constants 
\[ B^{\rvq} \triangleq 2\|\rve_g(0)\|_\infty \left ( L_{g_g,\rvv}+\frac{L_{g,\rvv}L_{g_g,\rvs}}{1-\tau_g}\right ),\] 
\[\bar{\theta}_{\infty}^{\rvq}(1) \triangleq 2\|\rve_g(0)\|_\infty \frac{L_{g,\rvv}L_{g_g,\rvs}}{(1-\tau_g)^2}
\]
    
\end{lemma}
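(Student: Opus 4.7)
The plan is to verify in turn the three conditions entering the definition of weak dependence with constants $B^{\rvq}$ and $\bar{\theta}_\infty^{\rvq}(1)$: (i) stationarity of $\rvq$, (ii) an almost-sure uniform bound on $\|\rvq(t)\|_\infty$, and (iii) a uniform bound on $\theta_{\infty,N}^{\rvq}(1)$ independent of $N$. Throughout I will exploit only two features of $S_g$ supplied by Assumption \ref{as:generator}: that it is class \class{}, and that its driving input $\rve_g$ is i.i.d. and essentially bounded.

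For step (i), I would invoke Lemma \ref{lemma:stationary_x_v}: since $\rve_g$ is i.i.d.\ it is in particular stationary, and a class \class{} system driven by a stationary input has a stationary steady-state output. Because $[\rvy^T,\rvx^T]^T = \rvy_{S_g,\rve_g}$ by assumption, $\rvq$ is stationary.

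For step (ii), I would bound $\|\rvq(t)\|$ by combining the Lipschitz property of $g_g$ with exponential robustness in inputs. Pick the zero-input reference trajectory $\rvs_{S_g,0}$ (which exists and is bounded by UEC). Applying \eqref{def:classDef:mem} to the pair $(\rve_g,0)$ and using $\|\rve_g(t-k)\|\le \|\rve_g(0)\|_\infty$ almost surely (by stationarity) gives
\[
\|\rvs_{S_g,\rve_g}(t)-\rvs_{S_g,0}(t)\|\le \frac{L_{g,\rvv}\|\rve_g(0)\|_\infty}{1-\tau_g}.
\]
Bounding $\rvs_{S_g,0}$ by the same expression (it is itself a class \class{} trajectory and a symmetric argument about the origin absorbs the offset into the leading factor $2$), then applying the Lipschitz bound on $g_g$ and adding the direct contribution $L_{g_g,\rvv}\|\rve_g(t)\|$ from its second argument, yields exactly the stated $B^{\rvq}$.

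For step (iii), which I expect to be the main obstacle, I would use a coupling argument to bound $\theta_{\infty,N}^{\rvq}(1)$ in the sense of \cite{alquier2013prediction}. Given the $\sigma$-algebra $\mathcal{F}_0 = \sigma(\rve_g(s):s\le 0)$, construct an auxiliary process $\tilde{\rvq}$ by replacing $\rve_g(s)$ for $s\le 0$ by an independent i.i.d.\ copy $\tilde{\rve}_g(s)$ and keeping $\rve_g(s)$ for $s\ge 1$. Let $\tilde{\rvs}$ be the associated steady-state state trajectory of $S_g$. Exponential robustness in inputs gives, for $t\ge 1$,
\[
\|\rvs_{S_g,\rve_g}(t)-\tilde{\rvs}(t)\|\le L_{g,\rvv}\sum_{k\ge t}\tau_g^{k-1}\|\rve_g(t-k)-\tilde{\rve}_g(t-k)\|\le \frac{2L_{g,\rvv}\|\rve_g(0)\|_\infty}{1-\tau_g}\,\tau_g^{t-1},
\]
and the Lipschitz bound on $g_g$ transfers this to
\[
\|\rvq(t)-\tilde{\rvq}(t)\|\le \frac{2L_{g,\rvv}L_{g_g,\rvs}\|\rve_g(0)\|_\infty}{1-\tau_g}\,\tau_g^{t-1}.
\]
Because $\tilde{\rvq}(t)$ is independent of $\mathcal{F}_0$ for every $t\ge 1$ and has the same law as $\rvq(t)$, this coupling provides an $L_\infty$ control of $\mathbb{E}[f(\rvq(t_1),\dots,\rvq(t_N))\mid \mathcal{F}_0]-\mathbb{E}[f(\rvq(t_1),\dots,\rvq(t_N))]$ for any $1$-Lipschitz $f$. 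Summing the geometric series $\sum_{t\ge 1}\tau_g^{t-1}=(1-\tau_g)^{-1}$ produces the second factor $(1-\tau_g)^{-1}$ in the denominator, yielding the claimed bound $\bar{\theta}_\infty^{\rvq}(1)$ uniformly in $N$. The care required in step (iii) is to align the coupling construction with the precise definition of $\theta_{\infty,N}^{\rvq}(1)$ in \cite{alquier2013prediction} so that the $L_\infty$ norm, the Lipschitz-$1$ supremum over test functions, and the sum over the $N$ future indices combine into exactly the stated constant rather than a looser variant.
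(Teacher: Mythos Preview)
Your route differs from the paper's. The paper's proof is a two-line citation chain: Lemma~\ref{lemma:CBS} shows that $\rvq$ is a Causal Bernoulli Shift (CBS) $\rvq(t)=H_{\rvy}(\rve_g(t),\rve_g(t-1),\dots)$ with explicit Lipschitz coefficients $a_0(H_{\rvy})=L_{g_g,\rvv}$, $a_k(H_{\rvy})=L_{g_g,\rvs}L_{g,\rvv}\tau_g^{k-1}$ for $k\ge 1$, and then Proposition~4.2 of \citet{alquier2012pred} is applied as a black box to read off both $B^{\rvq}=2\|\rve_g\|_\infty\sum_{k\ge 0}a_k$ and $\bar{\theta}_\infty^{\rvq}(1)=2\|\rve_g\|_\infty\sum_{k\ge 1}k\,a_k$. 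Your step~(iii) coupling argument is essentially a self-contained reproof of that proposition for the mixing coefficient, and your computation there is correct and matches the paper's constant; this is a legitimate alternative that avoids the external reference at the cost of re-deriving a known result.

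Step~(ii), however, has a genuine gap. You cannot bound the zero-input steady-state trajectory $\rvs_{S_g,0}$ by $L_{g,\rvv}\|\rve_g(0)\|_\infty/(1-\tau_g)$ via ``a symmetric argument about the origin'': nothing in the class-\class{} definition forces $f_g(0,0)=0$ or $g_g(0,0)=0$, so $\rvs_{S_g,0}$ and $g_g(\rvs_{S_g,0}(t),0)$ are arbitrary fixed quantities determined by $S_g$ alone and unrelated to $\|\rve_g\|_\infty$. The factor of $2$ in $B^{\rvq}$ does not arise from splitting around the zero-input trajectory; in the CBS framework it comes from comparing $H_{\rvy}(\rve_g(t),\dots)$ with $H_{\rvy}(\rve_g'(t),\dots)$ for an independent copy $\rve_g'$, exactly as in your step~(iii). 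To repair your step~(ii) without invoking Proposition~4.2 you would need either an explicit centering assumption on $S_g$ or to replace the zero-input anchor by an independent-copy anchor and argue as in (iii).
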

Later on we shall see that the bound on the generalisation gap depends on the constants $B^{\rvq}$ and $\bar{\theta}_{\infty}^{\rvq}(1)$.
Hence, we will assume that they are known.
Intuitively, $B^\rvq$ is just an upper bound on the norm of the data. The constant 
$\bar{\theta}_{\infty,N}^{\rvq}(1)$ encodes the information on how non i.i.d. the data is.
In particular, it is zero for i.i.d. data.
The knowledge of these constants is often assumed in  the literature, e.g, \cite{alquier2013prediction, alquier2021userfriendly}.
\par
Similarly, we assume that the predictors $\hat{\rvy}(t)=h(\{\rvx(s)\}_{s=0}^{t})$, take the form of class \class{} system.
\begin{Assumption}[One-Step Predictor]\label{as:predictor}
We will consider predictors  $h$ such that 
there exists a \class{} system
$S_h=(\sS_h,\sY,\sX,\hat{f},\hat{g})$ and an initial state $\rvs_0$ of $S_h$ such  for all
$t \ge 0$, $t \in \sZ$,
$h(\{\rvx(s)\}_{s=0}^{t})$  is the output of $S_h$ at time $t$ for input $\rvx$, initial state $\rvs_0$ and initial time $0$, i.e.,
\( h(\{\rvx(s)\}_{s=0}^{t})=\rvy_{S_h}(\rvs_0,0,\rvx;t) \). 
\end{Assumption}
Note that the \class{} system representation $S_h$ of a predictor $h$ is not unique.

This class of predictors includes RNNs, 
most of standard classes of state-space representations used for time-series prediction and filtering. 
In Section \ref{sec:prelim}, we have informally taken expectation over predictors, however to do that we have to define a parameterisation over predictors.
\begin{Assumption}[Parameterisation]\label{as:parameterisation}
Assume that there exists a compact set $\Theta\subseteq \reals^{n_\theta}$   and a bounded set $\sS \subseteq \mathbb{R}^n$ 
\mihaly{and a mapping
which maps every $\theta$ to a
system of class \class{}
$S_{\theta} \triangleq (\sS,\sY,\sX,\hat{f}_{\theta},\hat{g}_{\theta})$
such that the following holds. 
For any $h \in \mathcal{H}$ there exists $\theta \in \Theta$ such that the system of class \class{} from
Assumption \ref{as:predictor} is of the form $S_h=S_{\theta}$.}
Moreover, we assume that
the functions
 $(\vs,\vv,\theta) \mapsto \hat{f}_\theta(\vs,\vv)$,$(\vs,\vv,\theta) \mapsto \hat{g}=\hat{g}_\theta(\vs,\vv)$, $\theta \mapsto \hat{\vs}_{s,\theta}$ are continuous.
 \end{Assumption}
For every $\theta \in \Theta$
 which corresponds to $h \in \mathcal{H}$ (i.e., $S_h=S_\theta$), we define the following notation:
\begin{equation}
\label{as:parametrisation:eq1}
\begin{split}
 \hat{\rvy}_{\theta}(t|0) &\triangleq \rvy_{S_{\theta}}(\hat{\rvs}_{s,\theta},0,\rvx;t)=h(\{\rvx(s)\}_{s=0}^t), \\ 
\hat{\rvy}_{\theta}(t)&=\rvy_{S_\theta,\rvx}(t)
\end{split}
\end{equation}
i.e., $\hat{\rvy}_{\theta}(t|0)$ is just the prediction at time $t$ generated by $h$, and 
$\hat{\rvy}_{\theta}(t)$ is the output process
$\rvy_{S_\theta,\rvx}(t)$ of the \class{} system  $S_\theta$ associated  with $\rvx$.
\par
From Lemma \ref{lemma:stationary_x_v}
it follows that limit
\[ \lim_{t\to\infty} \E[\ell(\rvy(t),h(\{\rvx(s)\}_{s=0}^{t}))=\lim_{t \to \infty} \E[\ell(\rvy(t),\hat{\rvy}_{\theta} (t\mid 0)) 
\] exists and equals
\( \E[\ell(\rvy(t),\hat{\rvy}_{\theta}(t))] \),
hence the generalisation loss is well-defined. 
 That is, by using the (non unique) identification of $\mathcal{H}$ with $\Theta$ and the notation in
 \eqref{as:parametrisation:eq1},
 the empirical loss and the generalisation error
 for the predictor $\theta \in \Theta$ is as follows:
 \begin{align*}  
 \hat{\mathcal{L}}_N(\theta)&\triangleq\frac{1}{N}\sum_{t=0}^{N-1}\ell(\vy(t),\hat{\rvy}_{\theta}(t\mid 0) ),\\
 \mathcal{L}(\theta)&=\E[\ell(\rvy(t), \hat{\rvy}_{\theta}(t))]. 
 \end{align*}
 Assumption \ref{as:parameterisation} and the identification of predictors with elements of $\Theta$ allow us to define probability densities and the corresponding expectations 
 over predictors as probability densities and expectations over $\Theta$. That is, for a probability density $\rho(\theta)$ and scalar valued function $\phi(\theta)$:
 \begin{align*}
     \underset{\theta\sim\rho}{\E} \phi(\theta) \triangleq \int_{\theta\in\Theta} \rho(\theta)\phi(\theta) d\theta 
 \end{align*}
\begin{Assumption} \label{as:loss} The loss function $\ell$
is Lipschitz, with global Lipschitz constant $L_\ell>0$ 
\end{Assumption}
Note that Assumption \ref{as:parameterisation}
implies that for any predictor 
$\theta \in \Theta$, 
the output $\hat{\rvy}_{\theta}(t|0)$ of the predictor
is bounded, i.e., $\|\hat{\rvy}_{\theta}(t|0)\| \le G_\theta B^{\rvq}$,
where $G_\theta\triangleq \frac{L_{g,\rvs}(\theta)L_\rvv(\theta)}{1-\tau(\theta)}+L_{g,\rvv}(\theta)$.
Since all the processes are bounded, one can use locally Lipschitz loss functions, e.g. square loss, $\ell\left (\hat{\rvy},\rvy \right )=\|\hat{\rvy}-\rvy\|_2^2$, with $L_\ell\geq \sup_{\theta\in\Theta} 2B^\rvq G_\theta \geq \sup_{\theta\in\Theta} 2\|\hat{\rvy}_\theta-\rvy\|_\infty$. Furthermore, while not explicitly stated, we allow for $L_\ell$ to depend on $\theta$, and as such for square error loss one can use $L_\ell(\theta)=2B^\rvq G_\theta$.
\begin{Remark}
    Standard classification does not fit our framework (the output function of the data generator is discontinuous), but classification with soft class labels  \cite{hinton2015distilling} does.  
    To this end, we use the softmax loss function, 
     \(   \ell(\rvy,\hat{\rvy})=-\sum_{i=1}^K y_i\ln \left ( \frac{e^{\hat{\rvy}_i}}{\sum_{j=1}^K e^{\hat{\rvy}_j}} \right )
    \), which by Lemma \ref{lemma:Cross-EntropyLipschitz}, has Lipschitz constant $L_\ell\leq K(2\|\hat{\rvy}(0)\|_\infty+\ln{K}+2)$, which by Lemma \ref{lemma:mixingCoeff} can be upper bounded, i.e., $L_\ell(\theta) \leq K(2B^\rvq G_\theta+\ln{K}+2)$.
\end{Remark}

\section{Results}\label{sec:Results}

With the definitions and assumptions above, we show general bound explicitly. Of which the informal theorem \ref{thm:informal} is a special case.
\begin{Theorem}\label{thm:mainThm} Let predictors $h\in\mathcal{H}$\deividas{, see \eqref{eq:dyn1}}, be realised by \class{} class dynamical systems (see Def. \ref{def:classDef}), and be parameterised by weights $\theta\in\Theta\subset \sR^{n_\theta}$ (see Asm. \ref{as:parameterisation}), with associated system constants $C(\theta),L_{g,\rvs}(\theta),L_{g,\rvv}(\theta),L_{\rvv}(\theta)$\deividas{, and $\tau(\theta)\in[0,1)$}. 
Let $\pi$ be any density on the class of predictors $\mathcal{H}$, and denote by $\mathcal{M}_\pi$ the set of all
probability densities for which the corresponding probability measures are absolutely continuous w.r.t. to the probability measure defined by $\pi$. Under assumptions \ref{as:generator},\ref{as:predictor}, and \ref{as:loss}, for any $\delta\in(0,0.5]$, and $\lambda>0$ the following inequality holds with probability at least $1-2\delta$,
    \begin{multline} \label{eq:mainThmIneq}
         \forall\hat{\rho}\in\mathcal{M}_\pi:\; E_{\theta\sim \hat{\rho}} \mathcal{L} (\theta) \le   E_{\theta\sim \hat{\rho}} \hat{\mathcal{L}}_N(\theta) \\
         +\dfrac{1}{\lambda}\!\left[\KL(\hat{\rho} \|\pi) + \ln\dfrac{1}{\delta}	+ \widehat{\Psi}_{\pi}(\lambda,N) \right ],
    \end{multline}
    with $\widehat{\Psi}_{\pi}(\lambda,N)\triangleq \frac{1}{2}\left ( \ln E_{\theta\sim\pi}\widehat{\Psi}_1(\theta)+\ln E_{\theta\sim\pi}\widehat{\Psi}_2(\theta)  \right )$, 
    \begin{align*}        
    \widehat{\Psi}_1(\theta)&\triangleq \exp\left \{\frac{2\lambda^2 L_\ell^2}{N}\left (B^\rvq \left ( G_\theta+ H_\theta \right )+\bar{\theta}_{\infty}^{\rvq}(1) G_\theta \right )^2 \right \},\\
        \widehat{\Psi}_2(\theta)&\triangleq \exp \left \{\frac{2\lambda L_\ell C(\theta)}{N} \left ( 2 B^\rvq H_\theta+\frac{\|\hat{\vs}_{s,\theta}\| L_{g,\rvs}(\theta) }{1-\tau(\theta)} \right )\right \},\\
        G_\theta &\triangleq \frac{L_{g,\rvs}(\theta)L_\rvv(\theta)}{1-\tau(\theta)}+L_{g,\rvv}(\theta),\; H_\theta\triangleq \frac{L_{g,\rvs}(\theta)L_\rvv(\theta)}{(1-\tau(\theta))^2},
    \end{align*}
    and $\|\hat{\vs}_{s,\theta}\|$ denotes the initial state of the predictor parameterised by $\theta$ and, $\bar{\theta}_{\infty} ^\rvq(1)$ and $B^\rvq$ are properties of data generating system \ref{as:generator} describing the data distribution as in Lemma \ref{as:gen_mixing:lem}.
\end{Theorem}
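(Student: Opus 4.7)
The plan is to follow the decomposition strategy outlined in Section~\ref{sec:prelim}. I introduce the \emph{infinite-horizon} empirical loss $V_N(\theta) \triangleq \lim_{\bar s \to -\infty} \frac{1}{N}\sum_{t=0}^{N-1} \ell(\rvy(t), \hat{\rvy}_\theta(t))$, which evaluates the predictor along its steady-state trajectory rather than the trajectory initialised at $\hat{\vs}_{s,\theta}$ at $t=0$, and split
\[
\mathcal{L}(\theta) - \hat{\mathcal{L}}_N(\theta) = \bigl(\mathcal{L}(\theta) - V_N(\theta)\bigr) + \bigl(V_N(\theta) - \hat{\mathcal{L}}_N(\theta)\bigr).
\]
I then apply Lemma~\ref{lemma:PAC-BayesianKL_General} to each summand, each with parameter $2\lambda$ and confidence $\delta$, and union-bound the two resulting PAC-Bayesian inequalities. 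When added, the two $\frac{1}{2\lambda}[\KL(\hat{\rho}\|\pi) + \ln(1/\delta)]$ contributions combine into $\frac{1}{\lambda}[\KL(\hat{\rho}\|\pi) + \ln(1/\delta)]$, while the two log-MGF terms appear with joint factor $\frac{1}{2\lambda}$, producing precisely the $\frac{1}{2}$ averaging in $\widehat\Psi_\pi(\lambda,N)$ and the $1-2\delta$ confidence of the theorem.

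For the \emph{transient} term $V_N - \hat{\mathcal{L}}_N$, the trajectories $\hat{\rvy}_\theta(t)$ and $\hat{\rvy}_\theta(t\mid 0)$ share the same input $\rvx$, so UEC and Lipschitz continuity of $g$ yield $\|\hat{\rvy}_\theta(t) - \hat{\rvy}_\theta(t\mid 0)\| \le L_{g,\rvs}(\theta)\,C(\theta)\,\tau(\theta)^t\,\|\hat{\vs}_{s,\theta} - \rvs_{S_\theta,\rvx}(0)\|$. Exponential robustness in inputs (Def.~\ref{def:classDef}) combined with the essential bound $B^\rvq$ on $\rvx$ yields an almost sure bound of order $B^\rvq L_\rvv(\theta)/(1-\tau(\theta))$ on the steady-state initial state. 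Using $\ell$-Lipschitzness and summing the geometric series $\sum_{t=0}^{N-1}\tau(\theta)^t \le 1/(1-\tau(\theta))$ produces an almost sure bound $|V_N(\theta) - \hat{\mathcal{L}}_N(\theta)| \le M_\theta/N$ with $M_\theta = L_\ell C(\theta)\bigl(2B^\rvq H_\theta + \|\hat{\vs}_{s,\theta}\| L_{g,\rvs}(\theta)/(1-\tau(\theta))\bigr)$. Because this bound is deterministic, $\E\bigl[e^{2\lambda(V_N-\hat{\mathcal{L}}_N)}\bigr] \le e^{2\lambda M_\theta/N}$, which is exactly $\widehat\Psi_2(\theta)$.

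For the \emph{statistical} term $\mathcal{L} - V_N$, observe first that $\E[V_N(\theta)] = \mathcal{L}(\theta)$ by stationarity (Lemma~\ref{lemma:stationary_x_v}), so I am controlling the centred MGF of an empirical average over a weakly dependent bounded process. The joint process $(\rvy, \hat{\rvy}_\theta)$ is the steady-state output of the series interconnection of $S_g$ and $S_\theta$, which by Lemma~\ref{lemma:series} is itself a class~\class{} system, and by Lemma~\ref{lemma:full_gen_sys} is driven by the i.i.d.\ noise $\rve_g$. Propagating the weak-dependence bounds of Lemma~\ref{as:gen_mixing:lem} through this composition (via Lemma~\ref{lemma:mixingCoeff} and the Lipschitz loss) yields a sup-norm proportional to $L_\ell B^\rvq(G_\theta+H_\theta)$ and a $\theta$-mixing coefficient proportional to $L_\ell \bar\theta^\rvq_\infty(1)G_\theta$. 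Plugging these into the Rio/Alquier Hoeffding-type inequality for weakly dependent sums (Thm.~6.6 of \citet{alquier2012pred}) at parameter $2\lambda$ gives $\E[e^{2\lambda(\mathcal{L}-V_N)}] \le \exp\bigl(\tfrac{2\lambda^2 L_\ell^2}{N}(B^\rvq(G_\theta+H_\theta)+\bar\theta^\rvq_\infty(1)G_\theta)^2\bigr)$, which is $\widehat\Psi_1(\theta)$.

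The main obstacle is the constant bookkeeping: verifying that the series interconnection inherits class~\class{} structure with the \emph{specific} constants that make the sup-norm and mixing bounds of the combined process come out as $B^\rvq(G_\theta+H_\theta)$ and $\bar\theta^\rvq_\infty(1)G_\theta$ respectively, so that the resulting $\widehat\Psi_1$ matches the theorem statement exactly. A secondary subtlety is checking well-definedness of $V_N(\theta)$ as an almost sure limit together with $\E[V_N(\theta)]=\mathcal{L}(\theta)$, both of which follow from UEC and the stationarity-preservation results of Lemma~\ref{lemma:stationary_x_v}.
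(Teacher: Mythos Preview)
Your proposal is correct and follows essentially the same decomposition and two applications of Lemma~\ref{lemma:PAC-BayesianKL_General} as the paper's proof; the only cosmetic difference is that you apply the lemma twice at parameter $2\lambda$, whereas the paper applies it at $\lambda$ and afterward substitutes $\tilde\lambda=\lambda/2$, which is equivalent. One minor bookkeeping slip: in the statistical term, Lemma~\ref{lemma:mixingCoeff} actually gives $\|\rvo(0)\|_\infty \le B^\rvq G_\theta$ and $\theta^{\rvo}_{\infty,N}(1) \le \bar\theta^{\rvq}_\infty G_\theta + B^\rvq H_\theta$ (you swapped where the $B^\rvq H_\theta$ contribution lands), but since only their sum enters the Rio/Alquier Hoeffding bound your final $\widehat\Psi_1(\theta)$ is still correct.
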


The quantities $\widehat{\Psi}_1(\theta),\widehat{\Psi}_2(\theta)$ can be computed using Markov-Chain Monte-Carlo methods for standard architectures, e.g. RNNs (see Table \ref{tab:exampleSys}) where system constants have explicit expressions.

\par

Theorem \ref{thm:mainThm}, provides the most general case. Under more strict assumptions, from Theorem \ref{thm:mainThm} we can trivially arrive at Catoni-like bounds:
\begin{corollary}\label{cor:G1G2} If 
\begin{equation}\label{eq:G1G2}
\begin{split}
    G_1 &\triangleq \sup_{\theta\in\Theta}2 L_\ell^2\left (B^\rvq \left ( G_\theta+ H_\theta \right )+\bar{\theta}_{\infty,N}^{\rvq}(1) G_\theta \right )^2,\\
    G_2 &\triangleq \sup_{\theta\in\Theta}2 L_\ell C(\theta) \left ( 2 B^\rvq H_\theta+\|\hat{\vs}_{s,\theta}\|\frac{L_{g,\rvs}(\theta) }{1-\tau(\theta)} \right )
\end{split}
\end{equation}
exist, then Theorem \ref{thm:informal}, with $G_1,G_2$ from \eqref{eq:G1G2} holds.
\end{corollary}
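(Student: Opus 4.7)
The plan is to deduce Corollary \ref{cor:G1G2} directly from Theorem \ref{thm:mainThm} by a pointwise domination of the $\theta$-dependent moment-generating-function factors $\widehat{\Psi}_1(\theta),\widehat{\Psi}_2(\theta)$ by constants. The definitions of $G_1$ and $G_2$ in \eqref{eq:G1G2} are tailored precisely so that, after pulling out the common prefactors $\lambda^2/N$ and $\lambda/N$ from the exponents appearing in $\widehat{\Psi}_1(\theta)$ and $\widehat{\Psi}_2(\theta)$, what remains is exactly the $\theta$-dependent object whose supremum over $\Theta$ equals $G_1$ or $G_2$. So the entire argument is algebraic book-keeping, with no new probabilistic content beyond Theorem \ref{thm:mainThm}.

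First, from the assumption that both suprema in \eqref{eq:G1G2} exist (and are hence finite uniform upper bounds over all $\theta\in\Theta$) and from monotonicity of $\exp$, I would conclude
\begin{equation*}
\widehat{\Psi}_1(\theta) \;\le\; \exp\!\left\{\tfrac{\lambda^2}{N}G_1\right\}, \qquad \widehat{\Psi}_2(\theta) \;\le\; \exp\!\left\{\tfrac{\lambda}{N}G_2\right\}, \qquad \forall\,\theta\in\Theta.
\end{equation*}
Since $\pi$ is a probability density and the right-hand sides are $\theta$-free, integrating w.r.t.\ $\pi$ preserves the inequalities, yielding $E_{\theta\sim\pi}\widehat{\Psi}_1(\theta)\le e^{\lambda^2 G_1/N}$ and $E_{\theta\sim\pi}\widehat{\Psi}_2(\theta)\le e^{\lambda G_2/N}$. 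Applying the monotone logarithm and combining the two expressions via the definition $\widehat{\Psi}_\pi(\lambda,N)=\tfrac{1}{2}(\ln E_{\theta\sim\pi}\widehat{\Psi}_1+\ln E_{\theta\sim\pi}\widehat{\Psi}_2)$ gives
\begin{equation*}
\widehat{\Psi}_\pi(\lambda,N) \;\le\; \tfrac{1}{2}\!\left(\tfrac{\lambda^2}{N}G_1 + \tfrac{\lambda}{N}G_2\right) \;\le\; \tfrac{\lambda^2}{N}G_1 + \tfrac{\lambda}{N}G_2.
\end{equation*}

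Substituting this bound for $\widehat{\Psi}_\pi(\lambda,N)$ in the right-hand side of \eqref{eq:mainThmIneq} reproduces the right-hand side of \eqref{eq:pac:gen}, and because the event on which Theorem \ref{thm:mainThm} holds has probability at least $1-2\delta$, the same holds for Theorem \ref{thm:informal}. There is no real obstacle: the pointwise inequalities hold for every $\theta\in\Theta$, so no $\pi$-null-set subtleties arise, and the only substantive hypothesis is the assumed finiteness of the suprema $G_1, G_2$. Under the standing regularity setup (compact $\Theta$ by Assumption \ref{as:parameterisation}, together with continuity in $\theta$ of the system constants $C(\theta),\tau(\theta),L_{\rvv}(\theta),L_{g,\rvs}(\theta),L_{g,\rvv}(\theta)$ and of $\hat{\vs}_{s,\theta}$, and $\tau(\theta)<1$), these suprema are automatically finite, and the corollary simply exposes the familiar Catoni-shape $c_1\lambda^2/N+c_2\lambda/N$ that produces the $\mathcal{O}(1/\sqrt{N})$ rate of Remark \ref{rem:asymptoticProp}.
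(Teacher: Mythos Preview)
Your proposal is correct and matches the paper's intended (but unwritten) argument: the paper explicitly says ``from Theorem \ref{thm:mainThm} we can trivially arrive at Catoni-like bounds,'' and your pointwise domination of $\widehat{\Psi}_1(\theta),\widehat{\Psi}_2(\theta)$ followed by integration against $\pi$ and logarithm is precisely that trivial step. The only minor slack you introduce is discarding the factor $\tfrac12$ in $\widehat{\Psi}_\pi(\lambda,N)\le \tfrac12(\tfrac{\lambda^2}{N}G_1+\tfrac{\lambda}{N}G_2)$ to match the exact form of \eqref{eq:pac:gen}; this is harmless and consistent with the constants as stated in the corollary.
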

\deividas{
The constants $G_1$ abnd $G_2$ depend on the chosen parametrisation of the hypothesis class.
Note that $\Theta$ might be larger than $\mathcal{H}$, by this we mean that there might be several parameters which correspond to the same hypothesis $h\in\mathcal{H}$. The consequence of this, is that the supremum in \eqref{eq:G1G2} will lead to more conservativeness, note however that this conservativeness decays with length of data, see Remark \ref{rem:asymptoticProp}.}
\begin{figure}[ht]
    \centering
    \includegraphics[width=3in]{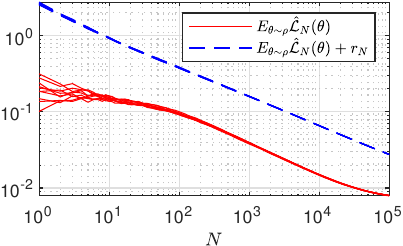} 
    \caption{Theorem \ref{thm:mainThm} is used to compute the results of the example described in Section \ref{sec:example}, evaluated on 10 different realisations of data. }
    \label{fig:example}
\end{figure}
\paragraph*{Discussion on the error bound}
The error bounds is increasing in : 
\par
\textbf{(1)} the magnitude of the data (defined by $B^q$, see Lemma \ref{as:gen_mixing:lem}), 
\par 
\textbf{(2)} in the mixing coefficient $\bar{\theta}_{\infty}^{\rvq}(1)$ of the data generating process, see Lemma \ref{as:gen_mixing:lem}.
The smaller this mixing coefficient is, i.e., the closer the data is to being i.i.d., the smaller is the bound.
In particular, for i.i.d., inputs (i.e., $\bar{\theta}_{\infty}^{\rvq}(1)=0$) we get back the classical PAC-Bayesian bounds \cite{alquier2021userfriendly}.
\par
\textbf{(3)} In the degree of robustness of the predictors
 captured by $\tau(\theta)$,  and to a smaller extent by 
 $L_{g,\rvs}(\theta) ,L_{\vv}(\theta)$, $L_{g,\vv}$, $C(\theta)$.
 By Remark \ref{rem:constants},
 the smaller these constants are, the more robust the predictors are. That is,
 robustness is connected to a smaller generalisation gap. 

\par
\textbf{The role of the number of time steps}
In our setting, due to the definition of the generalisation loss, 
the number of time steps for which the predictor has been run during inference does not enter the bounds. 
The key for achieving this was to assume that the predictors are stable dynamical systems. 
This is in contrast to other bounds for RNNs/dynamical systems \cite{KOIRAN199863,sontag1998learning,pmlr-v108-chen20d,WeiRNN,Akpinar_Kratzwald_Feuerriegel_2020,pmlr-v161-joukovsky21a} which grow with the number of time steps used for inference. The latter makes it difficult to use those bounds to characterize the generalisation loss for inference from long sequences. 
\par
\textbf{Role of the depth of RNNs}
As it was mentioned in
Section \ref{sec:class}
for multi-layer RNNs, 
the constants $\tau(\theta)$, $L_{g,\rvs}(\theta) ,L_{\vv}(\theta)$, $L_{g,\vv}$, $C(\theta)$
can be estimated based on the corresponding constants for each layer. 
However, these estimates will be more conservatives, as the number of layers grow,
making our PAC-Bayesian bound more conservative.
That is, while our bound does not explicitly depend on the depth of the network,  it may  still increase as the depth of the RNNs increases. 
The situation is 
similar to PAC(-Bayesian) bounds for deep neural networks, which also grow with the number of layers. Whether this is a methodological artifact, or it is related to the potential of deep networks to overfit, remains to be explored. 

\section{Illustrative Example} \label{sec:example}

\newcommand{\Ag}{A_g}
\newcommand{\Bg}{B_g}
\newcommand{\bxg}{b_{\rvs,g}}
\newcommand{\Cg}{C_g}
\newcommand{\Dg}{D_g}
\newcommand{\byg}{b_{\rvy,g}}
\newcommand{\ce}{1.27}
In this section we shall explore a synthetic example to illustrate the proposed bound. The code for this example is available in Git repository in \cite{NLPACBayesCode}.
We randomly chose a generator of the form
described in Assumption \ref{as:generator} with:
\begin{subequations}
\begin{align}
\rvs_g(t+1)&=\text{ReLu}\left (\Ag\rvs_g(t)+\Bg\rve_g(t)+\bxg \right),\\
\begin{bmatrix} \rvy(t)\\ \rvx(t)\end{bmatrix}&=\text{tanh}\left (\Cg\rvs_g(t)+\Dg\rve_g(t)+\byg \right ),
\end{align}
\end{subequations}
with $n_\rvs=2,n_\rvy=1,n_\rvx=1$, see numerical values of the weights ($A_g,B_g,\bxg,\Cg,\Dg,\byg$) in \eqref{eq:numexGen} of appendix \ref{sec:exampleAppendix}.
Then Lemma \ref{as:gen_mixing:lem}
holds with
$B^\rvq=\sqrt{2},\;\bar{\theta}_{\infty,N}(1)=2$, and $\|\rve_g(t)\|_\infty\leq \ce$

With the system above we generate data by sampling $\rve_g(t)$ from a truncated Gaussian distribution, and propagating the system above. The predictors are chosen with identical shape to the generator, i.e. using Relu and tanh activation functions, and 2 hidden states, and all weights are parameterised including the initial state. The loss function is square loss. 
We employ Markov Chain Monte-Carlo sampling to compute the various expectations over the prior and posterior appearing in the bound of Theorem \ref{thm:mainThm},
see Appendix \ref{sec:exampleAppendix}.  
\par
The prior is chosen naively as $\pi=\mathcal{N}(0,\sigma^2 I)$, with $\sigma^2=0.02$. 
The posterior is chosen as the Gibbs posterior, i.e. $\hat{\rho}_N(\theta)\propto \pi(\theta)e^{-\lambda_N\hat{\mathcal{L}}(\theta)}$ with $\lambda_N=\sqrt{N}$.

Note that due to using tanh 
in the output equation, with a trivial predictor ( $\hat{\rvy}(t)=0$)
the maximum prediction error is $1$.
As we can see in Figure \ref{fig:example}, the proposed bound is non-vacuous for $N\geq 9$, since $E_{\theta\sim\rho}\mathcal{L}(\theta)\leq E_{\theta\sim\rho}\hat{\mathcal{L}}(\theta)+r_N< 1,\; \forall N\geq 9$. Note that this threshold only holds for this specific set-up and it will change for different problems. However, we claim that, since the bound converges to 0, there exists some $N_0$, s.t. $\forall N>N_0$ the proposed bound is non-vacuous.
\par
Furthermore, our results not only apply to non-linear dynamical systems, but they are much tighter than what was proposed by \citet{eringis2023pacbayesian} for linear dynamical systems, all while we require arguably less information about the generating system.

\section{Conclusion} \label{sec:Conclusion}
In this paper, we have provided non-asymptotic bounds on the generalisation gap of exponentially stable dynamical systems. Under suitable conditions on hyper-parameter $\lambda$, we see that the bound on generalisation gap converges at a rate of $\mathcal{O}(1/\sqrt{N})$. The bound will converge either to 0 for a fixed posterior or to a constant involving only KL divergence, for Gibbs posterior.
Furthermore the bound only depends on quantities related to the magnitude of the label and input processes, and the $\theta_\infty$ mixing coefficient of these processes. Not only does the proposed bound inform us how to design priors that yield smaller generalisation gap, but since the proposed bound only requires limited knowledge of the data generator, we could potentially apply these bounds directly for various applications. 

Potential future research directions include applying our results to various architectures, which would require deriving tools to check if these architectures belong to class \class{}. Furthermore, designing LMIs or other tools to obtain tighter system constants, would immediately yields tighter bounds on the generalisation gap.

\bibliography{bib.bib}

\newpage
\appendix
\onecolumn
\section{Appendix}
\begin{figure}
    \centering
    \includegraphics[width=\linewidth]{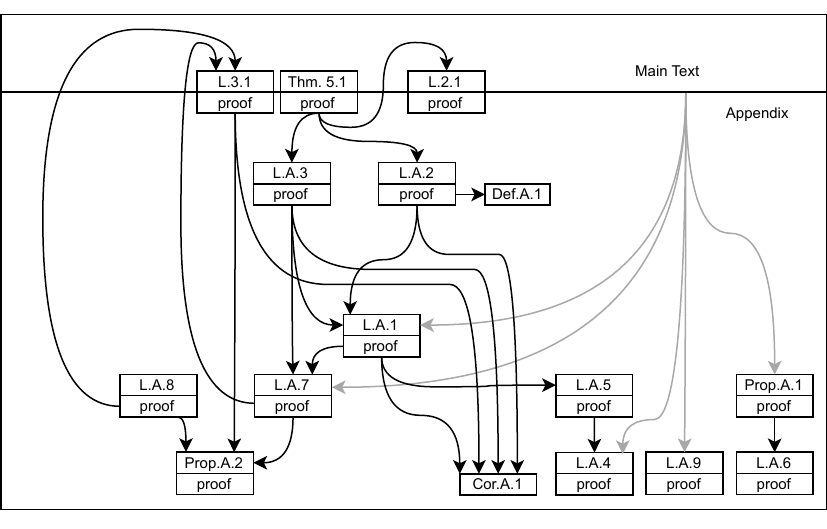}
    \caption{Dependance of Lemmas (and proofs) in the Appendix. For example: Theorem 5.1 directly depends on Lemmas 2.1, A.2, and A.3}
    \label{fig:appendix_dependance}
\end{figure}

To improve readability of the Appendix, we provide Figure \ref{fig:appendix_dependance}, which shows how each Lemma depends on previous or future lemmas. 

By using the identification of $\mathcal{H}$ with $\Theta$ and the notation in
\eqref{as:parametrisation:eq1},
the empirical loss and the generalisation loss
for the predictor $\theta \in \Theta$ can be written follows:
\[   \hat{\mathcal{L}}_N(\theta)\triangleq\frac{1}{N}\sum_{t=0}^{N-1}\ell(\vy(t),\hat{\rvy}_{\theta}(t\mid 0) ), \quad \mathcal{L}(\theta)=\E[\ell(\rvy(t), \hat{\rvy}_{\theta}(t))]. 
\]
\par
Moreover, assumption \ref{as:parameterisation} and the identification of predictors with elements of $\Theta$ allow us to define probability densities and the corresponding expectations 
over predictors as probability densities and expectations over $\Theta$. That is, for a probability density $\rho(\theta)$ and scalar valued function $\phi(\theta)$:
\begin{align}
    \underset{\theta\sim\rho}{\E} \phi(\theta) \triangleq \int_{\theta\in\Theta} \rho(\theta)\phi(\theta) d\theta 
\end{align}

\begin{Definition}[Stationarity]\label{def:stationarity}
    A process $\mathbf{s}$ is stationary 
if the joint distribution of the random variables $(\mathbf{s}(t+k_1),\ldots, \mathbf{s}(t+k_r) )$ does not depend on $t$ for any $k_1,\ldots,k_r$ and $r > 0$.
That is, if you take any significantly large window in your data it must have the same statistical properties as any other window of the same length.
\end{Definition}

\subsection{Lemmas} \label{sec:mainLemmas}
\begin{lemma} \label{lemma:mixingCoeff}
    Let the generator be of class \class{}, and thus WDP (by Lemma \ref{lemma:CBS}, and Corollary \ref{cor:WDP}), and the predictor be of class \class{}, then 
    the mixing coefficient of the process $\rvo(t)=\begin{bmatrix} \rvy^T(t)& \hat{\rvy}^T_{\theta} (t)\end{bmatrix}^T$ (output of full system \ref{lemma:full_gen_sys}) can be upper bounded by the mixing coefficients of the output $\rvq = \begin{bmatrix}\rvy^T(t)& \rvx(t) \end{bmatrix}^T $ of the data generator. 
    \begin{align}           
    \theta_{\infty,N}^{\rvo}(1)&\leq \bar{\theta}_{\infty}^{\rvq} G_\theta + B^\rvq H_\theta\\
    \left \|\begin{bmatrix} \rvy(t)\\ \hat{\rvy}(t)\end{bmatrix}\right \|_\infty &\leq B^\rvq G_\theta
    \end{align}
    where $\bar{\theta}_{\infty}^\rvq(1),B^\rvq$ are as in Lemma \ref{as:gen_mixing:lem}
    \begin{proof}[\textbf{Proof of Lemma \ref{lemma:mixingCoeff}}]
    From Lemma \ref{lemma:CBS}
    $\rvq(t)=\begin{bmatrix} \rvy(t)\\ \rvx(t)\end{bmatrix}$ is a Bernouli shift(CBS), and there exists a function $H_1$ s.t.
    \begin{align}
        \rvq(t)=H_1(\rve_g(t),\rve_g(t-1),\dots)
    \end{align}
    where $\rve(t)$ is a bounded i.i.d. process, furthermore
    \begin{align}
        \|H_1(\rve(t),\rve(t-1),\dots)-H_1(\rve'(t),\rve'(t-1),\dots)\|\leq\sum_{k=1}^\infty a_k(H_1)\|\rve(t-k)-\rve'(t-k)\|\\
        \sum_{k=1}^\infty a_k(H_1)<\infty,\quad \sum_{k=1}^\infty ka_k(H_1)<\infty 
    \end{align}
    
    Now consider $\rvo(t)=\begin{bmatrix}\hat{\rvy}(t)\\ \rvy(t) \end{bmatrix}$.
    Then by Lemma \ref{lemma:stationary_x_v}
    it follows that there exists a function $H$ such that  $\rvo(t)=H(\rvq(t),\rvq(t-1),\dots)$,
    $H$ is Lipschitz, i.e.
    \begin{align}
        \|H(\rvq(t),\rvq(t-1),\dots)-H(\rvq'(t),\rvq'(t-1),\dots)\|\leq \sum_{k=0}^\infty a_k(H)\|\rvq(t-k)-\rvq'(t-k)\|
    \end{align}
    and $\sum_{k=0} a_k(H) <+\infty$ and 
    $\sum_{k=0} k a_k(H) <+\infty$. 
    Moreover,
    \begin{align}
        a_k(H)=\begin{cases}\hat{L}_{g,\rvs}\hat{L}_\rvv \hat{\tau}^{k-1},& k>0 \\ \hat{L}_{g,\rvv},& k=0 \end{cases}
        \label{lemma:mixingCoeff:eq1}
    \end{align}
    where $\hat{\tau}=\tau(\theta)$,
    $\hat{L}_{g,\rvs}=L_{g,\rvs}(\theta)$,
    $\hat{L}_\rvv=L_{\rvv}(\theta)$.
    Then 
    \begin{align}
        \rvo(t)&=H(H_1(\rve(t),\rve(t-1),\dots),H_1(\rve(t-1),\rve(t-2),\dots),\dots)\\
        &\triangleq\bar{H}(\rve(t),\rve(t-1),\dots)
    \end{align}
    and
    \begin{align}
        \|\bar{H}(\rve(t),\rve(t-1),\dots)-&\bar{H}(\rve'(t),\rve'(t-1),\dots)\|\\
        &\leq \sum_{k=1}^\infty a_k(H)\|H(\rvq(t),\rvq(t-1),\dots)-H_1(\rvq'(t),\rvq'(t-1),\dots)\|\\
        &\leq \sum_{k=1}^\infty a_k(H) \sum_{j=1}^\infty a_j(H_1)\|\rve(t-(k+j))-\rve'(t-(k+j))\|\\
        &\leq \sum_{k=1}^\infty\sum_{j=1}^\infty a_j(H_1) a_k(H) \|\rve(t-(k+j))-\rve'(t-(k+j))\|
    \end{align}
   with the transformation $r=k+j \;\Rightarrow\; j=r-k\geq 1$, and thus we claim that $k\leq r-1$, and
    \begin{multline} \label{sumeqsum}
        \sum_{k=1}^\infty\sum_{j=1}^\infty a_j(H_1) a_k(H) \|\rve(t-(k+j))-\rve'(t-(k+j))\| \\
        = \sum_{r=0}^\infty \left ( \sum_{k=1}^{r-1} a_{r-k}(H_1) a_k(H) \right ) \|\rve(t-r)-\rve'(t-r)\|
    \end{multline}

    To prove the claim let us simplify notation by defining 
\begin{align}\label{simp}
    \phi_{k,j}\triangleq a_j(H_1) a_k(H) \|\rve(t-(k+j))-\rve'(t-(k+j))\|\\
    \bar{\phi}_{k,r}\overset{r\triangleq k+j}{=} a_{r-k}(H_1) a_k(H) \|\rve(t-r)-\rve'(t-r)\|.
\end{align}
For fixed $\bar{r}\geq 2$ we have
\begin{align}
    \sum_{k=1}^{\bar{r}-1}\bar{\phi}_{k,\bar{r}}=\sum_{k=1}^{\bar{r}-1} \phi_{k,\bar{r}-k}
\end{align}
then it follows for fixed $l>2$ that
\begin{align}
    \sum_{\bar{r}=2}^l\sum_{k=1}^{\bar{r}-1}\bar{\phi}_{k,\bar{r}} = \sum_{\bar{r}=2}^l \sum_{k=1}^{\bar{r}-1} \phi_{k,\bar{r}-k} = \sum_{k=1}^{l-1} \sum_{j=1}^{l-k} \phi_{k,j}\label{eq:beforeRearranging}
\end{align}
where the second equality is obtained by simple rearrangement of terms. By taking a limit we obtain
\begin{align}
    \sum_{\bar{r}=2}^\infty \sum_{k=1}^{\bar{r}-1}\bar{\phi}_{k,\bar{r}}\triangleq
    \lim_{l\rightarrow \infty}\sum_{\bar{r}=2}^l\sum_{k=1}^{\bar{r}-1}\bar{\phi}_{k,\bar{r}} =  \lim_{l\rightarrow \infty} \sum_{k=1}^{l-1} \sum_{j=1}^{l-k} \phi_{k,j}
     \triangleq \sum_{k=1}^{\infty} \sum_{j=1}^{\infty} \phi_{k,j}
\end{align}
that is
\begin{align}
\sum_{\bar{r}=2}^\infty \sum_{k=1}^{\bar{r}-1}\bar{\phi}_{k,\bar{r}} &= \sum_{k=1}^{\infty} \sum_{j=1}^{\infty} \phi_{k,j}
\end{align}
proving \eqref{sumeqsum}.  As a passing remark we mention that \eqref{sumeqsum} can also be obtained by noting that the map $(k,j)\mapsto(k,r)=(k,k+j)$ is injective and then rearranging terms.

    Taking a step back, we have obtained    
    \begin{multline}
        \|\bar{H}(\rve(t),\rve(t-1),\dots)-\bar{H}(\rve'(t),\rve'(t-1),\dots)\|\\
        \leq \sum_{r=0}^\infty \left ( \sum_{k=1}^{r-1} a_{r-k}(H_1) a_k(H) \right ) \|\rve(t-r)-\rve'(t-r)\|=\sum_{r=0}^{\infty} a_r(\bar{H})  \|\rve(t-r)-\rve'(t-r)\|,
    \end{multline}
    with $a_k(\bar{H})=\sum_{j=1}^{k-1} a_{k-j}(H_1) a_j(H)$.
    By the well-known property of products of absolutely convergent series
    \[ \sum_{k=1}^\infty a_k(\bar{H})=(\sum_{k=1}^{\infty} a_k(H_1)) (\sum_{k=0}^{\infty} a_k(H))   < \infty
    \]
    Moreover, 
    \begin{align}
    %
        & \sum_{i=1}^\infty \sum_{j=1}^{\infty} (i+j)a_{i}(H_1) a_j(H)\\
        &=
        \sum_{i=1}^\infty\sum_{j=1}^{\infty} ( ia_{i}(H_1) a_j(H)+ja_{i}(H_1) a_j(H))\\
        &=
        \left ( \sum_{i=1}^\infty\sum_{j=1}^{\infty}  ia_{i}(H_1) a_j(H)+\sum_{i=1}^\infty\sum_{j=1}^{\infty} ja_{i}(H_1) a_j(H) \right )\\
        &=\left ( 
        \sum_{i=1}^\infty ia_{i}(H_1) \right )\left ( \sum_{j=1}^{\infty}a_j(H) \right )+\left ( 
        \sum_{i=1}^\infty a_i(H_1) \right ) \left (\sum_{j=1}^\infty ja_j(H) \right ) < +\infty
    \end{align}
    As all the elements of the series above are positive,
    it follows that
    \begin{align*}
      & \sum_{k=1}^{\infty} k a_k(\bar{H})=
      \lim_{N \rightarrow +\infty} \sum_{k=1}^{2N} k a_k(\bar{H})=\lim_{N \rightarrow +\infty} 
      \sum_{i=1}^{N} \sum_{j=1}^{N} (i+j) a_{i}(H_1) a_j(H) = \\
       & \sum_{i=1}^\infty \sum_{j=1}^{\infty} (i+j)a_{i}(H_1) a_j(H)  <  +\infty 
    \end{align*}
    
    Now by Proposition 4.2 \citet{alquier2012pred}, we know that
    \begin{align}
        \theta_{\infty,N}^{\rvo}(1)&\leq 2\|\rve\|_\infty \sum_{k=1}^\infty ka_k(\bar{H})=2\|\rve\|_\infty \sum_{k=1}^\infty \sum_{j=1}^{k-1} ka_{k-j}(H_1) a_j(H)
    \end{align}
    now we can apply the inverse of the transformation we applied before
   
    Now, by   Proposition 4.2 from  \citet{alquier2012pred}
     and Lemma \ref{lemma:CBS}, 
    \begin{align}
        \theta_{\infty,N}^{\rvq}(1)\leq 2\|\rve\|_\infty\sum_{i=1}^\infty ia_{i}(H_1) = \frac{2\|\rve\|_\infty L_{g,\rvv}L_{g_g,\rvs}}{(1-\tau_g)^2}\triangleq\bar{\theta}_{\infty}^{\rvq}
        \label{lemma:mixingCoef:eq2.1}
        \\
        \|\rvq(t)\|_\infty \leq 2\|\rve\|_\infty \sum_{k=0}^\infty a_k(H_1)=2\|\rve\|_\infty \left ( L_{g_g,\rvv}+\frac{L_{g,\rvv}L_{g_g,\rvs}}{1-\tau_g}\right )\triangleq B^\rvq
        \label{lemma:mixingCoef:eq2}
    \end{align}
    Thus we have
    \begin{align}
        \theta_{\infty,N}^{\rvo}(1)&\leq \bar{\theta}_{\infty}^{\rvq} \left ( \sum_{j=1}^{\infty}a_j(H) \right ) + B^\rvq \left (\sum_{j=1}^\infty ja_j(H) \right )
    \end{align}
    Now let us focus on $\|\rvo(t)\|_\infty$, by \cite[Proposition 4.2]{alquier2012pred}, we have
    \begin{align}
        \|\rvo(t)\|_\infty &\leq 2\|\rve\|_\infty \sum_{k=1}^\infty a_k(\bar{H})\\
        &=2\|\rve\|_\infty\sum_{k=1}^\infty \sum_{j=1}^{k-1} a_{k-j}(H_1) a_j(H)\\
        &=2\|\rve\|_\infty\sum_{k=1}^\infty \sum_{j=1}^{\infty} a_{k}(H_1) a_j(H)\\
        &=\left (2\|\rve\|_\infty \sum_{j=k}^{\infty} a_{k}(H_1)\right )\left (\sum_{j=1}^{\infty}a_j(H) \right )\\
        &=B^\rvq\left (\sum_{j=1}^{\infty}a_j(H) \right ) 
    \end{align}
    Using \eqref{lemma:mixingCoeff:eq1}
    
    \begin{align}
        \sum_{j=0}^{\infty}a_j(H) = \frac{\hat{L}_{g,\rvs}\hat{L}_\rvv}{1-\hat{\tau}}+\hat{L}_{g,\rvv}\\
        \sum_{j=1}^\infty ja_j(H) = \frac{\hat{L}_{g,\rvs}\hat{L}_\rvv}{(1-\hat{\tau})^2}
    \end{align}
    with this we have the statement of the lemma.
\end{proof}
\end{lemma}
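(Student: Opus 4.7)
The plan is to represent the joint process $\rvo(t) = [\rvy^T(t), \hat{\rvy}_\theta^T(t)]^T$ as a causal Bernoulli shift of the i.i.d. noise $\rve_g$ driving the data generator, and then invoke Proposition 4.2 of \citet{alquier2012pred} to read off both the $\theta_\infty$ mixing coefficient and the essential supremum from the sensitivity coefficients of that shift.

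First, using the class \class{} property of the generator (together with Lemma \ref{lemma:CBS}), I would write $\rvq(t)=H_1(\rve_g(t),\rve_g(t-1),\dots)$ where $H_1$ admits sensitivity coefficients $a_k(H_1)\ge 0$ satisfying $\sum_k a_k(H_1)<\infty$ and $\sum_k k\, a_k(H_1)<\infty$; moreover, $2\|\rve_g(0)\|_\infty \sum_k a_k(H_1) = B^\rvq$ and $2\|\rve_g(0)\|_\infty \sum_k k\, a_k(H_1) = \bar{\theta}_\infty^\rvq(1)$ in the notation of Lemma \ref{as:gen_mixing:lem}. Next, using exponential robustness in inputs and Lipschitz output from Definition \ref{def:classDef} applied to the class \class{} predictor $S_\theta$, together with the steady-state argument of Lemma \ref{lemma:stationary_x_v}, I would express $\rvo(t)=H(\rvq(t),\rvq(t-1),\dots)$ with sensitivity coefficients
\[ a_k(H) = \begin{cases} L_{g,\rvs}(\theta)\, L_\rvv(\theta)\, \tau(\theta)^{k-1}, & k\ge 1,\\ L_{g,\rvv}(\theta), & k=0. \end{cases} \]
Summing the resulting geometric series then gives $\sum_{k\ge 0} a_k(H) = G_\theta$ and $\sum_{k\ge 1} k\, a_k(H) = H_\theta$.

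Composing the two representations yields $\rvo(t) = \bar{H}(\rve_g(t), \rve_g(t-1), \dots)$ with sensitivity coefficients dominated by the discrete convolution $a_r(\bar H) = \sum_k a_{r-k}(H_1)\, a_k(H)$. Applying Proposition 4.2 of \citet{alquier2012pred} to $\bar H$ reduces the problem to evaluating $2\|\rve_g(0)\|_\infty \sum_r r\, a_r(\bar H)$. The crucial algebraic step is the double-series identity
\[ \sum_r r\, a_r(\bar H) = \Bigl(\sum_i i\, a_i(H_1)\Bigr)\Bigl(\sum_j a_j(H)\Bigr) + \Bigl(\sum_i a_i(H_1)\Bigr)\Bigl(\sum_j j\, a_j(H)\Bigr), \]
obtained by writing $r=i+j$ and rearranging; absolute convergence of all four one-dimensional sums justifies the interchange. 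Multiplying by $2\|\rve_g(0)\|_\infty$ and substituting the identifications above yields $\theta_{\infty,N}^{\rvo}(1) \le \bar{\theta}_\infty^\rvq(1)\, G_\theta + B^\rvq H_\theta$. The essential supremum bound is obtained by the same representation but replacing $k\, a_k(\bar H)$ with $a_k(\bar H)$, which gives $\|\rvo(t)\|_\infty \le \bigl(2\|\rve_g(0)\|_\infty \sum_i a_i(H_1)\bigr)\bigl(\sum_j a_j(H)\bigr) = B^\rvq G_\theta$.

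I expect the main obstacle to be the bookkeeping of the double-series rearrangement — in particular the index substitution $r = i+j$ and the careful tracking of the starting indices in $a_k(H_1)$ and $a_k(H)$ — so that the resulting bounds match $G_\theta$ and $H_\theta$ exactly without spurious constants. The rest of the argument is a fairly mechanical application of Definition \ref{def:classDef} and the cited proposition from \citet{alquier2012pred}.
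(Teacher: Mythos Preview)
Your proposal is correct and follows essentially the same route as the paper: represent $\rvq$ as a CBS in $\rve_g$ via Lemma~\ref{lemma:CBS}, represent $\rvo$ as a shift in $\rvq$ via Lemma~\ref{lemma:stationary_x_v} with the explicit geometric coefficients $a_k(H)$, compose to obtain $\bar H$ with convolution coefficients, and then apply Proposition~4.2 of \citet{alquier2012pred} together with the $r=i+j$ rearrangement to split $\sum_r r\,a_r(\bar H)$ into the product form that yields $\bar\theta_\infty^\rvq G_\theta + B^\rvq H_\theta$. Your anticipated obstacle (the index bookkeeping in the double-sum rearrangement and the starting indices $k=0$ versus $k\ge 1$ for $a_k(H)$) is exactly where the paper spends most of its effort, so your plan is well calibrated.
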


\begin{Definition}\label{def:infLoss} Let $\hat{\rvy}_\theta(t)$ denote the infinite horizon prediction (as defined in \eqref{as:parametrisation:eq1}), and
\begin{equation}
    V_N(\theta)=\frac{1}{N}\sum_{t=0}^{N-1}\ell(\rvy,\hat{\rvy}_\theta(t))
\end{equation}
    denote the infinite horizon loss.
\end{Definition}
The infinite horizon loss can be interpreted as the empirical loss, when the initial state of the predictor is chosen as if the predictor had been running for infinite amount of time. Thus the infinite horizon loss, does not contain prediction error caused by the choice of the initial state of the empirical predictor.

\begin{lemma}\label{lemma:mgf(V-L_hat)}  
Under Assumptions \ref{as:generator},\ref{as:predictor}, with $\lambda>0$ the following holds almost surely
    \begin{align}
        \E[e^{\lambda|V_N(\theta)-\hat{\mathcal{L}}_N(\theta)|}] \leq \exp \left \{ \frac{\lambda L_\ell C(\theta)}{N} \left ( 2 B^\rvq H_\theta+\|\hat{\vs}_{s,\theta}\|\frac{L_{g,\rvs}(\theta) }{1-\tau(\theta)} \right ) \right \}
    \end{align}
\begin{proof}[\textbf{Proof of Lemma \ref{lemma:mgf(V-L_hat)}}] By definitions empirical Loss $\hat{\mathcal{L}}_N(\theta)$ and infinite Horizon loss $V_N(\theta)$ (see Definition \ref{def:infLoss}) we have
        \begin{align}
            |V_N(\theta)-\hat{\mathcal{L}}_N(\theta)|&=\left |\frac{1}{N}\sum_{t=0}^{N-1}\ell(\hat{\rvy}_\theta(t),\rvy(t))-\ell(\hat{\rvy}(t|0),\rvy(t)) \right |\\
            &\leq \frac{1}{N}\sum_{t=0}^{N-1} | \ell(\hat{\rvy}_\theta(t),\rvy(t))-\ell(\hat{\rvy}(t|0),\rvy(t))|
        \end{align}
    Let  $\hat{\tau}=\tau(\theta)$,
    $\hat{L}_{g,\rvs}=L_{g,\rvs}(\theta)$,
    $\hat{L}_\rvv=L_{\rvv}(\theta)$. 
    Recall that $\hat{\rvy}_{\theta}$ is the 
    steady-state ouptut trajectory $\rvy_{\rvx}$ of the system $S(\theta)$ for input $\rvx$, in the sequel we will use $\rvy_{\rvx}$ to denote $\hat{\rvy}_{\theta}$.
    By Assumption \ref{as:loss},  the loss is Lipschitz, and 
    \begin{align}
        |V_N(\theta)-\hat{\mathcal{L}}_N(\theta)|&\leq \frac{1}{N}\sum_{t=0}^{N-1} L_\ell \|\hat{\rvy}_\rvx(t)-\hat{\rvy}(t|0)\|\\
        &\leq \frac{1}{N}\sum_{t=0}^{N-1} L_\ell \hat{L}_{g,\rvs}\|\hat{\rvs}_\rvx(t)-\hat{\rvs}(t|0)\|+L_\ell \hat{L}_{g,\rvv}\|\rvx(t)-\rvx(t)\|
    \end{align}
    where $\rvs_{\rvx}$ is the steady-state state 
    trajectory of $S(\theta)$ for input $\rvx$.
    By exponential stability property of class \class{} systems
    \begin{align}
        |V_N(\theta)-\hat{\mathcal{L}}_N(\theta)|&\leq \frac{L_\ell \hat{L}_{g,\rvs}}{N}\sum_{t=0}^{N-1} \hat{C}\hat{\tau}^t\|\hat{\rvs}_\rvx(0)-\hat{\rvs}_0\|\\
        &\leq  \frac{1-\hat{\tau}^N}{N(1-\hat{\tau})} L_\ell \hat{L}_{g,\rvs} \hat{C} ( \|\hat{\rvs}_\rvx(0)\|+\|\hat{\vs}_0\|)
    \end{align}
    By \cite[Proposition 4.2]{alquier2012pred}
    and Lemma \ref{lemma:CBS}, 
    $\|\hat{\rvs}_\rvx(0)\|\leq 2 \|\rvx_\rvv(0)\|_\infty \frac{\hat{L}_\rvv}{1-\hat{\tau}} $, 
    thus 
    \begin{align}
        |V_N(\theta)-\hat{\mathcal{L}}_N(\theta)|&\leq \frac{L_\ell \hat{L}_{g,\rvs}}{N}\sum_{t=0}^{N-1} \hat{C}\hat{\tau}^t\|\hat{\rvs}_\rvx(0)-\hat{\rvs}_0\|\\
        &\leq  \frac{1-\hat{\tau}^N}{N(1-\hat{\tau})} L_\ell \hat{L}_{g,\rvs} \hat{C} \left ( 2 \|\rvx_\rvv(0)\|_\infty \frac{\hat{L}_\rvv}{1-\hat{\tau}}+\|\hat{\vs}_0\| \right )\\
        &\leq   \frac{L_\ell}{N}   \left ( 2 \|\rvx_\rvv(0)\|_\infty \frac{\hat{L}_\rvv\hat{L}_{g,\rvs} \hat{C}}{(1-\hat{\tau})^2}+\|\hat{\vs}_0\|\frac{\hat{L}_{g,\rvs} \hat{C}}{1-\hat{\tau}} \right )
    \end{align}
    Since $\|\rvx(0)\|_\infty\leq \|[\rvy^T_\rvv(0),\rvx^T_(0)]\|_\infty$, and by Lemma \ref{as:gen_mixing:lem},  $\|[\rvy^T(0),\rvx^T(0)]^T \|_\infty\leq B^\rvq,$ and $H_\theta=\frac{\hat{L}_\rvv\hat{L}_{g,\rvs}}{(1-\hat{\tau})^2}$ we get
    \begin{align}
        |V_N(\theta)-\hat{\mathcal{L}}_N(\theta)| \leq \frac{L_\ell}{N}   \left ( 2 B^\rvq H_\theta+\|\hat{\vs}_0\|\frac{\hat{L}_{g,\rvs} }{1-\hat{\tau}} \right )\hat{C}
    \end{align}
    Since the above holds almost surely, then $\forall \lambda >0$
    \begin{align}
        \E[e^{\lambda|V_N(\theta)-\hat{\mathcal{L}}_N(\theta)|}] \leq \exp \left \{ \lambda \frac{L_\ell}{N}   \left ( 2 B^\rvq H_\theta+\|\hat{\vs}_0\|\frac{\hat{L}_{g,\rvs} }{1-\hat{\tau}} \right )\hat{C} \right \}
    \end{align}
    
    \end{proof}
\end{lemma}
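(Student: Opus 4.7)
Since the target bound has the absolute value $|V_N(\theta)-\hat{\mathcal{L}}_N(\theta)|$ inside the exponent, my plan is to establish a purely deterministic (almost sure) upper bound on this absolute difference, from which the MGF bound will follow by trivial monotonicity of $\exp$ and linearity of expectation. No concentration argument is needed; the work is entirely in pathwise estimates using the class \class{} structure.

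First, I would apply the triangle inequality and the Lipschitz property of the loss (Assumption \ref{as:loss}) to get
\[
|V_N(\theta)-\hat{\mathcal{L}}_N(\theta)| \le \frac{L_\ell}{N}\sum_{t=0}^{N-1}\|\hat{\rvy}_\theta(t)-\hat{\rvy}_\theta(t\mid 0)\|.
\]
Since both predictions use the same input trajectory $\rvx$ at time $t$, the Lipschitz output property (Definition \ref{def:classDef}) kills the $L_{g,\rvv}$-term, leaving only $L_{g,\rvs}(\theta)\|\hat{\rvs}_\rvx(t)-\hat{\rvs}(t\mid 0)\|$, where $\hat{\rvs}_\rvx$ is the steady-state state trajectory of the predictor $S_\theta$ associated with the input $\rvx$, and $\hat{\rvs}(t\mid 0)$ is the state started from $\hat{\vs}_{s,\theta}$ at time $0$.

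Next, I invoke UEC (Definition on page introducing class \class{}) to bound
\[
\|\hat{\rvs}_\rvx(t)-\hat{\rvs}(t\mid 0)\| \le C(\theta)\tau(\theta)^t\,\|\hat{\rvs}_\rvx(0)-\hat{\vs}_{s,\theta}\|
\le C(\theta)\tau(\theta)^t\bigl(\|\hat{\rvs}_\rvx(0)\|+\|\hat{\vs}_{s,\theta}\|\bigr),
\]
and sum the geometric series to obtain a factor $\frac{1-\tau(\theta)^N}{N(1-\tau(\theta))}\le\frac{1}{N(1-\tau(\theta))}$. The remaining ingredient is an almost-sure bound on $\|\hat{\rvs}_\rvx(0)\|$. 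Applying exponential robustness in inputs (Definition \ref{def:classDef}) to the pair $\rvx$ and $0$, and using that by Lemma \ref{as:gen_mixing:lem} we have $\|\rvx(t)\|_\infty\le B^\rvq$ almost surely for all $t$, gives $\|\hat{\rvs}_\rvx(0)\|\le \frac{L_\rvv(\theta) B^\rvq}{1-\tau(\theta)}$ almost surely (the factor of $2$ appearing in $B^\rvq$ already accounts for centering in the CBS representation, see Lemma \ref{lemma:CBS}). Combining these estimates and regrouping constants using the definition $H_\theta=\frac{L_{g,\rvs}(\theta)L_\rvv(\theta)}{(1-\tau(\theta))^2}$ yields
\[
|V_N(\theta)-\hat{\mathcal{L}}_N(\theta)| \le \frac{L_\ell C(\theta)}{N}\left(2B^\rvq H_\theta + \|\hat{\vs}_{s,\theta}\|\,\frac{L_{g,\rvs}(\theta)}{1-\tau(\theta)}\right) \quad \text{a.s.}
\]
Exponentiating both sides (note the right-hand side is deterministic) and taking $\E[\cdot]$ gives exactly the claimed MGF inequality.

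\textbf{Main obstacle.} The genuine content is not the PAC-Bayes machinery but the clean bookkeeping of which class \class{} property to invoke where: UEC gives the $\tau(\theta)^t$ contraction in the initial-state error, while exponential robustness in inputs (applied to $\rvx$ vs.\ the zero input) is what controls $\|\hat{\rvs}_\rvx(0)\|$. One has to be careful that the bound on $\|\rvx(t)\|_\infty$ from Lemma \ref{as:gen_mixing:lem} is with the constant $B^\rvq$ that already includes the relevant factor of $2$ so that no additional doubling is needed when one uses the triangle split $\|\hat{\rvs}_\rvx(0)-\hat{\vs}_{s,\theta}\|\le \|\hat{\rvs}_\rvx(0)\|+\|\hat{\vs}_{s,\theta}\|$. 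Everything else is direct manipulation.
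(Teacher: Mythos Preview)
Your approach is essentially the paper's: pathwise bound via Lipschitz loss, Lipschitz output, UEC contraction, geometric sum, then exponentiate a deterministic bound. The one imprecision is your bound on $\|\hat{\rvs}_\rvx(0)\|$: exponential robustness in inputs applied to the pair $(\rvx,\mathbf 0)$ only controls $\|\hat{\rvs}_\rvx(0)-\hat{\rvs}_{\mathbf 0}(0)\|$, not $\|\hat{\rvs}_\rvx(0)\|$ itself, since the steady-state for the zero input need not vanish (e.g.\ RNNs with bias terms). The paper handles this step by invoking \cite[Proposition~4.2]{alquier2012pred} together with Lemma~\ref{lemma:CBS}, which gives $\|\hat{\rvs}_\rvx(0)\|\le 2\|\rvx\|_\infty\, L_\rvv(\theta)/(1-\tau(\theta))$ directly; that external factor $2$ is where the $2B^\rvq H_\theta$ in the statement actually comes from, not from the definition of $B^\rvq$ as your parenthetical suggests. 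With that single correction, your argument coincides with the paper's proof.
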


\begin{lemma}\label{lemma:mgf(L-V)} Under assumptions \ref{as:generator},\ref{as:predictor},\ref{as:loss}, with $\lambda>0$, the following holds almost surely
\begin{equation}
    \E\left [e^{\lambda(\E[V_N(\theta)]-V_N(\theta))} \right]\\
    \leq \exp\left \{\lambda^2\frac{L_\ell^2}{2N}\left (B^\rvq \left ( G_\theta+ H_\theta \right )+\bar{\theta}_{\infty}^{\rvq} G_\theta \right )^2 \right \}
\end{equation}
with $\bar{\theta}_{\infty}^\rvq(1),B^\rvq$ are as in Assumption \ref{as:gen_mixing:lem}
\begin{proof}[\textbf{Proof of Lemma \ref{lemma:mgf(L-V)}}] For each predictor parametrized with $\theta$, consider $\rvo(t)=\begin{bmatrix}\hat{\rvy}_{\theta} (t)\\ \rvy(t)\end{bmatrix}$, an output of the full-generator system as defined in Lemma \ref{lemma:full_gen_sys}, then by Lemma \ref{cor:WDP} $\rvo(t)$ is a weakly dependent process.
Now consider
\begin{align}
    H(\rvo(0),\dots,\rvo(N-1))=\frac{1}{L_\ell}\sum_{t=0}^{N-1}\ell(\rvo(t))
\end{align}

then $H$ is $1$-Lipschitz, i.e.
\begin{align}
    |H(\rvo(0),\dots,\rvo(N-1))-H(\rvo'(0),\dots,\rvo'(N-1))|= \frac{1}{L_\ell} \left |\sum_{t=0}^{N-1} \ell(\rvo(t))-\ell(\rvo'(t)) \right |\\
    \leq \frac{1}{L_\ell} \sum_{t=0}^{N-1} \left |\ell(\rvo(t))-\ell(\rvo'(t)) \right |
\end{align}
Now by Assumption \ref{as:loss}, loss is $L_\ell$-Lipschitz, thus $H$ is $1$-Lipschitz 
\begin{align}
        |H(\rvo(0),\dots,\rvo(N-1))-H(\rvo'(0),\dots,\rvo'(N-1))| \leq \sum_{t=0}^{N-1} \|\rvo(t)-\rvo'(t)\|
\end{align}

Now note that, $V_N(\theta)=\frac{1}{N}\sum_{t=0}^{N-1}\ell(\rvo(t))=\frac{L_\ell}{N}H(\rvo(0),\dots,\rvo(N-1))$, and  $\mathcal{L}(\theta)=\E[V_N(\theta)]=\frac{L_\ell}{N}\E[H(\rvo(0),\dots,\rvo(N-1))]$. Now by \cite[Theorem 6.6]{alquier2012pred}:
    \begin{align}
        \E\left [e^{s(\E[V_N(\theta)]-V_N(\theta))} \right]=\E\left [e^{s\frac{L_\ell}{N}(\E[H(\rvo(0),\dots,\rvo(N-1))]-H(\rvo(0),\dots,\rvo(N-1)))} \right]\\
        \leq e^{\left (s\frac{L_\ell}{N}\right )^2N(\|\rvo(0)\|_\infty+\theta_{\infty,n}^\rvo(1))^2/2}\\
        =e^{s^2\frac{L_\ell^2}{N}(\|\rvo(0)\|_\infty+\theta_{\infty,n}^\rvo(1))^2/2}
    \end{align}
By Lemma \ref{lemma:mixingCoeff},
\begin{align}
\theta_{\infty,N}^{\rvo}(1)&\leq \bar{\theta}_{\infty}^{\rvq} \left ( \frac{\hat{L}_{g,\rvs}\hat{L}_\rvv}{1-\hat{\tau}}+\hat{L}_{g,\rvv} \right ) + B^\rvq \frac{\hat{L}_{g,\rvs}\hat{L}_\rvv}{(1-\hat{\tau})^2}\\
    \left \|\rvo(0)\right \|_\infty &\leq B^\rvq \left ( \frac{\hat{L}_{g,\rvs}\hat{L}_\rvv}{1-\hat{\tau}}+\hat{L}_{g,\rvv} \right )
\end{align}
where  $\hat{\tau}=\tau(\theta)$,
    $\hat{L}_{g,\rvs}=L_{g,\rvs}(\theta)$,
    $\hat{L}_\rvv=L_{\rvv}(\theta)$, 
thus 
\begin{multline}
    \E\left [e^{s(\E[V_N(\theta)]-V_N(\theta))} \right]\\
    \leq \exp\left \{\frac{s^2 L_\ell^2}{2N}\left ((B^\rvq+\bar{\theta}_{\infty}^{\rvq}) \left ( \frac{\hat{L}_{g,\rvs}\hat{L}_\rvv}{1-\hat{\tau}}+\hat{L}_{g,\rvv} \right )+ B^q \left ( \frac{\hat{L}_{g,\rvs}\hat{L}_\rvv}{(1-\hat{\tau})^2} \right ) \right )^2 \right \}
\end{multline}
\end{proof}
\end{lemma}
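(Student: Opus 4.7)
The plan is to reduce the statement to a Hoeffding-type concentration inequality for Lipschitz functionals of weakly dependent processes, which is exactly the setting where \cite[Theorem 6.6]{alquier2012pred} applies. The main objects to assemble are (i) a joint "full-generator'' representation of the pair $(\rvy(t),\hat{\rvy}_\theta(t))$ as the output of a single class \class{} system driven by an i.i.d.\ process, and (ii) tight bounds on the essential supremum $\|\rvo(0)\|_\infty$ and the mixing coefficient $\theta^{\rvo}_{\infty,N}(1)$ of that joint output process, which have already been prepared in Lemma \ref{lemma:mixingCoeff}.

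First, using Lemma \ref{lemma:full_gen_sys} I would view $\rvo(t) \triangleq [\hat{\rvy}_\theta(t)^T,\rvy(t)^T]^T$ as the steady-state output process of the series interconnection of the data-generating system $S_g$ (Assumption \ref{as:generator}) and the predictor $S_\theta$ (Assumption \ref{as:predictor}), both driven, in effect, by the i.i.d.\ process $\rve_g$. By Lemma \ref{lemma:series} this interconnection is of class \class{}, so by Corollary \ref{cor:WDP} the process $\rvo$ is weakly dependent. Lemma \ref{lemma:mixingCoeff} then provides the explicit bounds
\begin{equation*}
  \|\rvo(0)\|_\infty \le B^{\rvq}\,G_\theta, \qquad
  \theta^{\rvo}_{\infty,N}(1)\le \bar{\theta}_\infty^{\rvq}\,G_\theta + B^{\rvq}\,H_\theta.
\end{equation*}

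Second, I would introduce the normalised functional $H(\rvo(0),\dots,\rvo(N-1))\triangleq \tfrac{1}{L_\ell}\sum_{t=0}^{N-1}\ell(\rvo(t))$. Because $\ell$ is $L_\ell$-Lipschitz by Assumption \ref{as:loss}, we get $|H(\rvo_{0:N-1})-H(\rvo'_{0:N-1})|\le \sum_{t=0}^{N-1}\|\rvo(t)-\rvo'(t)\|$, i.e.\ $H$ is $1$-Lipschitz coordinatewise in the sense required by \cite[Theorem~6.6]{alquier2012pred}. Observing that $V_N(\theta) = \tfrac{L_\ell}{N} H(\rvo(0),\dots,\rvo(N-1))$ and $\E[V_N(\theta)]=\mathcal{L}(\theta)=\tfrac{L_\ell}{N}\E[H(\cdot)]$, I would apply that theorem with scaling $s\cdot L_\ell/N$ to obtain
\begin{equation*}
  \E\!\left[e^{\lambda(\E[V_N(\theta)]-V_N(\theta))}\right]
  \le \exp\!\left\{\tfrac{\lambda^2 L_\ell^2}{2N}\bigl(\|\rvo(0)\|_\infty + \theta^{\rvo}_{\infty,N}(1)\bigr)^2\right\}.
\end{equation*}
Substituting the two bounds from Lemma \ref{lemma:mixingCoeff} and grouping $B^{\rvq}$ and $\bar\theta^{\rvq}_\infty$ in terms of $G_\theta$ and $H_\theta$ produces exactly the right-hand side of the claimed inequality.

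The main obstacle is not the concentration inequality itself but rather the careful bookkeeping in step one: verifying that the series interconnection of the generator and the predictor really is of class \class{} with constants that factor cleanly into $G_\theta, H_\theta$, and that the i.i.d.\ driver of the combined system can be taken to be $\rve_g$ alone. This is essentially a structural calculation that relies on Lemma \ref{lemma:series} and Lemma \ref{lemma:full_gen_sys}; once those are in hand, the rest is a direct substitution. A secondary subtlety is that \cite[Theorem 6.6]{alquier2012pred} requires both essential boundedness and a specific type of mixing coefficient; both have been secured by the class \class{} assumption on the interconnection and by Lemma \ref{lemma:mixingCoeff}.
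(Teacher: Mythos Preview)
Your proposal is correct and follows essentially the same route as the paper: represent $\rvo(t)=[\hat{\rvy}_\theta(t)^T,\rvy(t)^T]^T$ as the steady-state output of the full-generator system (Lemma~\ref{lemma:full_gen_sys}), invoke weak dependence (Corollary~\ref{cor:WDP}), normalise $H=\tfrac{1}{L_\ell}\sum_t\ell(\rvo(t))$ to make it $1$-Lipschitz, apply \cite[Theorem~6.6]{alquier2012pred}, and substitute the bounds on $\|\rvo(0)\|_\infty$ and $\theta_{\infty,N}^{\rvo}(1)$ from Lemma~\ref{lemma:mixingCoeff}. The only minor remark is that the ``careful bookkeeping'' you flag is already fully absorbed by Lemma~\ref{lemma:mixingCoeff}, so the constants $G_\theta,H_\theta$ are read off directly from there rather than from the class-\class{} constants of the interconnection.
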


\subsection{Class Properties}\label{ap:classProperties}
Let us take a step back, and analyse the properties of class \class{} systems. Firstly, the unique process (w.r.t. $\rvv$) $\vs_\rvv$ is a stationary process, which allows for analysis when $t\rightarrow\infty$. 
\begin{lemma}\label{lemma:stationary_x_v}\; \\
For any class \class{} system. If $\rvv$ is a bounded, stationary process, then the stead-state state and output trajectory  $\rvs_\rvv$  and 
$\rvy_{\rvv}$ are  bounded and stationary processes.
Moreover, 
 exist functions $H_\rvs:\sV^{\infty} \rightarrow \sS$ and $H_\rvy:s\sV^{\infty} \rightarrow \sY$, where
 $\sV^{\infty}$ is the set of all infinite sequences of elements of $\sV$, s.t., 
    \begin{align}      \rvs_\rvv(t)&\overset{a.s.}{=}H_\rvs(\rvv(t-1),\rvv(t-2),\dots),\\
        \rvy_\rvv(t)&\overset{a.s.}{=}H_\rvy(\rvv(t),\rvv(t-1),\dots),
    \end{align}
    such that 
    for any  two bounded
    sequences $(\vz_1(i))_{i=1}^{\infty}, (\vz_2(i))_{i=1}^{\infty} \in \sV^{\infty}$,
    $i=1,2$
    \begin{align}
     & \| H_{\rvs}(\vz_1(1),\vz_1(2), \dots) - H_{\rvs}(\vz_2(1),\vz_2(2),\dots)\| \nonumber \\ 
        &\leq \sum_{i=1}^\infty a_j(H_{\rvs}) \tau^{i-1} \|\vz_1(i+1)-\vz_2(i+1)\|, \label{lemma:stationaryy_c_v:eq1} \\
        & \| H_{\rvy}(\vz_1(1),\vz_1(2), \dots) - H_{\rvy}(\vz_2(1),\vz_2(2),\dots)\| \nonumber \\
        &\leq 
        a_0(H_{\rvy}) \|\vz_1(1)-\vz_2(1)\|+  \sum_{i=1}^\infty  a_i(H_{\rvy}) 
         \|\vz_1(i+1)-\vz_2(i+1)\|, \label{lemma:stationaryy_c_v:eq2} 
    \end{align}
    where $a_i(H_{\rvs})=L_\rvv \tau^{i-1}$, $a_i(H_{\rvy})=L_{g,\rvs}L_\rvv \tau^{i-1}$
    $i=1,2,\ldots$,  and
    $a_0(H_{\rvy})=L_{g,\rvv}$,
    and 
    \begin{align}
     \sum_{j=0}^{\infty} a_j(H_{\rvs}) < +\infty, \quad
     \sum_{j=0}^{\infty} ja_j(H_{\rvs}) < +\infty, \quad
     \sum_{j=0}^{\infty} a_j(H_{\rvy}) < +\infty,\quad 
      \sum_{j=0}^{\infty} ja_j(H_{\rvy}) < +\infty
      \label{lemma:stationaryy_c_v:eq3} 
    \end{align}
\begin{proof}[\textbf{Proof of Lemma \ref{lemma:stationary_x_v}}]
For any given state $\vs_0\in\sS$, and $0\leq N\in\sZ$, Consider
\begin{equation}
    \rvs_N(t)\triangleq\rvs(\vs_0,t_0=t-N,\rvv,t)
\end{equation}
i.e. $\rvs_N(t)$ is a solution of \eqref{eq:generalNLsys_f} started at $\vs(t_0)=\vs_0$, with $t_0=t-N$.

Existence of process $\rvs_N(t)$:
we can show the existence by induction, i.e.:  $\rvs_0(0)=\vs_0$ is well defined, and $\rvs_N(t)=f(\rvs_{N-1}(t-1), \rvv(t))$, thus for all $t=N$, the process is well-defined. 

From uniform convergence property, 
it follows that 
\begin{align}
    \|\rvs_N(t)-\rvs_\rvv(t)\|&\leq C \tau^{N} \|\vs_0-\rvs_\rvv(t-N)\|\leq C \tau^{N} (\|\vs_0\|+R)\\
    R&=\sup_{t}\|\rvs_\rvv(t-N)\|=\sup_{t}\|\rvs_\rvv(t)\| < \infty
\end{align}
Note that $R\leq \frac{C}{1-\tau}$, by \citet[Theorem 1]{PavlocTac2011}.
Since $\tau<1$, $\|\rvs_N(t)-\rvs_\rvv(t)\|\rightarrow 0$, as $N\rightarrow\infty$ with probability one.

There exists a function $G_N:\sV^N \times \sS \to \sS$, s.t.
\[\lim_{N\rightarrow\infty} \rvs_N(t)\aseq \lim_{N\rightarrow\infty} G_N \left (\{\rvv(t-j)\}_{j=1}^N , \vs_0\right )\]
By assumption on class \class{} system, a map $\rvs_\rvv:\sZ\to\sS$ exist for each $\rvv$.

Now consider any $\normlone$ function $H:\sS^r\to \reals$, and the expectation
\begin{multline*}
    \E[H(\rvs_\rvv(k+t_1),\dots,\rvs_\rvv(k+t_r))]\aseq \lim_{N\rightarrow\infty} \E[H(\rvs_N(k+t_1),\dots,\rvs_N(k+t_r))]\\
    =\lim_{N\rightarrow\infty} \E[H\left (G_N \left (\{\rvv(k+t_1-j)\}_{j=1}^N, \vs_0 \right ),\dots, G_N \left (\{\rvv(k+t_r-j)\}_{j=1}^N, \vs_0 \right ) \right )]
\end{multline*}
For the sake of notation let us define
\begin{multline}
    \hat{G}_N(\{\rvv(k+t_i-j | i=\{1,\dots,r\},j=\{1,\dots,N\} \}) \\ 
    \triangleq H\left (G_N \left (\{\rvv(k+t_1-j)\}_{j=1}^N, \vs_0 \right ),\dots, G_N \left (\{\rvv(k+t_r-j)\}_{j=1}^N, \vs_0 \right ) \right )
\end{multline}
Since, $\rvv(t)$ is stationary, i.e. for any function $g$, we have $\E[g(\rvv(t+k)]=\E[g(\rvv(t))]$, then
\begin{multline}
 \E[\hat{G}_N(\{\rvv(k+t_i-j | i=\{1,\dots,r\},j=\{1,\dots,N\} \})]\\
 =\E[\hat{G}_N(\{\rvv(t_i-j | i=\{1,\dots,r\},j=\{1,\dots,N\} \})].
\end{multline}
With this in mind, we have 
\begin{align}
    \E[H(\rvs_\rvv(k+t_1),\dots,&\rvs_\rvv(k+t_r))]\nonumber \\
    &= \lim_{N\rightarrow\infty } \E[\hat{G}_N(\{\rvv(k+t_i-j | i=\{1,\dots,r\},j=\{1,\dots,N\} \})]\\
    &= \lim_{N\rightarrow\infty }\E[\hat{G}_N(\{\rvv(t_i-j | i=\{1,\dots,r\},j=\{1,\dots,N\} \})]\\
    &= \E[H(\rvs_\rvv(t_1),\dots,\rvs_\rvv(t_r))]
\end{align}
Hence, as $H$ is arbitrary, $\rvs_\rvv$ is stationary.

Finally, for any sequence 
$\vz=(\vz(i))_{i=1} \in \sV^{\infty}$, 
consider the trajectory 
$\vv_{\vz}:\sZ \rightarrow \sV$, 
$\vv_{\vz}(-i)=\vz(i)$ $i=1,2,\ldots,$
It then follows that there exists a unique steady-state
state trajectory $\vs_{\vv}$ and output trajectory 
$\vy_{\vv}$ of $S$. Define
$H_{\rvs}(\vz_1,\vz_2,\dots)=\vs_\vv(0)$ and 
$H_{\rvs}(\vz_1,\vz_2,\dots)=\vy_{\vv}(-1)$. 

From \eqref{def:classDef:mem} and the Lipschitz property of $g$ it follows that \eqref{lemma:stationaryy_c_v:eq1}--\eqref{lemma:stationaryy_c_v:eq2} hold.  

Indeed, under class \class{} system, for some $\tau\in [0,1)$, and $0<L_\rvv<+\infty$:
    \begin{align}
        \|\vs_{\vv_{\vz_1}}(0)-\vs_{\vv_{\vz_2}}(0)\|\leq L_\rvv\sum_{i=1}^\infty \tau^{i-1} \|\vv_{\vz_1}(-i)-\vv_{\vz_2}(-i)\|,
    \end{align}
    and $\forall L_\rvv<\infty,\;\forall \tau\in[0,1):$ $L_\rvv\sum_{i=1}^\infty \tau^{i-1}=\frac{L_\rvv}{1-\tau}< +\infty$, and $L_\rvv\sum_{i=1}^\infty i\tau^{i-1}=\frac{L_\rvv}{(1-\tau)^2}<+\infty$. where the closed-form solution to infinite sums is obtained through sum of geometric series, and sum of arithmetic-geometric series.
    
    Likewise, by Lipschits condition on the output function $g$, by setting $\vv_i=\vv_{\vz_i}$
    \begin{align}
        \|\vy_{\vv_1}(0)-\vy_{\vv_2}(0)\|=\|g(\vs_{\vv_1}(0),\vv_1(0))-g(\vs_{\vv_2}(0),\vv_2(0))\| \\
        \leq L_{g,\rvs} \|\vs_{\vv_1}(0) - \vs_{\vv_2}(0)\| + L_{g,\rvv} \| \vv_1(0)-\vv_2(0)\|\\
        \leq L_{g,\rvs} L_\rvv\sum_{i=1}^\infty \tau^{i-1} \|\vv_1(-i)-\vv_2(-i)\| + L_{g,\rvv} \| \vv_1(0)-\vv_2(0)\| 
    \end{align}
    therefore, $L_{g,\rvv}+ L_{g,\rvs} L_\rvv\sum_{i=1}^\infty \tau^{i-1}=L_{g,\rvv}+\frac{L_{g,\rvs}L_\rvv}{1-\tau}< +\infty$, and $L_{g,\rvs} L_\rvv\sum_{i=1}^\infty i\tau^{i-1}=\frac{L_{g,\rvs} L_\rvv}{(1-\tau)^2}< +\infty$. 
\par
Moreover, from uniqueness of steady-state trajectories it also follows that
$H_{\rvs}(\rvv(t-1),\rvv(t-2),\dots)=\rvs_{\rvv}(t)$ and
$H_{\rvy}(\rvv(t),\rvv(t-1),\dots)=\rvy_{\rvv}(t)$.
Indeed, it is easy to verify that 
$\rvs: t \mapsto  H_{\rvs}(\rvv(t-1),\rvv(t-2),\dots)$
and $\rvy: t \mapsto  H_{\rvy}(\rvv(t),\rvv(t-1),\dots)$
also satisfy \eqref{eq:generalNLsys}. 

To see this, for any function $r$ defined on $\sZ$
define the $\tau$-shifted function $\sigma^{\tau} r$
for any $\tau \in \sZ$ as $\sigma^{\tau} r(s)=r(s+\tau)$ for $s \in \sZ$.  From uniqueness of steady-state state trajectories it follows that $\vs_{\sigma^{\tau} \vv}=\sigma^{\tau} \vs_{\vv}$. 
Moreover, from the infinite memory property it follows
that if $\vv_1(s)=\vv_2(s)$ for all $s < t$, then
$\vs_{\vv_1}(s)=\vs_{\vv_2}(s)$ for all $s \le t$. 
Notice that if $\vz=(\vz(i))_{i=1}^{\infty}$ and
$\vz^{'}=(\vz(i))_{i=2}^{\infty}$, it then follows
that for all $s < 0$, $\sigma^{-1} \vv_{\vz}(s)=\vz(-s-1)=\vv_{\vz^{'}}(s)$. Hence,
$H_{\rvs}(\vz_2,\vz_3,\ldots,)=\vs_{\vv_{\vz^{'}}}(0)=\vs_{\sigma^{-1} \vv_{\vz}}(0)=\sigma^{-1} \vs_{\vv_{\vz}}(0)=\vs_{\vv_{\vz}}(-1)$.
Hence, $H_{\rvs}(\vz_1,\vz_2,\ldots)=\vs_{\vv_\vz}(0)=f(\vs_{\vv_\vz}(-1),\vv_{\vz}(-1))=f(H_{\rvs}(\vz_2,\vz_3,\dots),\vz_1)$. 
Moreover,  $H_{\rvy}(\vz_1,\vz_2,\dots)=g(\vs_{\vv_{\vz}}(-1),\vz_1)=g(H_{\rvs}(\vz_2,\vz_3,\dots),\vz_1)$.
Hence, for $\rvs(t)=H_{\rvs}(\rvv(t-1),\rvv(t-2),\dots)$.
$\rvs(t)=f(\rvs(t-1),\rvv(t-1))$ for all $t \in \sZ$. 
As for $\rvy(t)$, notice that $\rvy(t)=g(H_{\rvs}(\rvv(t-1),\rvv(t-2),\dots),\rvv(t))=g(\rvs(t),\rvv(t))$. 

\end{proof}
\end{lemma}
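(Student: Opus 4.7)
The plan is to construct the two measurable maps $H_{\rvs}$ and $H_{\rvy}$ as limits of finite-horizon recursion maps, and then to read off stationarity, boundedness, and the Lipschitz-summability bounds directly from the class \class{} axioms (UEC, exponential robustness in inputs, Lipschitz output). The whole proof is a straightforward bookkeeping exercise once the correct limiting object is identified; the only real subtlety is transferring stationarity of $\rvv$ to stationarity of the limit processes, but this is clean because UEC gives \emph{uniform} exponential convergence of finite-horizon trajectories to the steady-state one.

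First I would fix an arbitrary $\vs_0 \in \sS$ and, for each $N \ge 0$, define the finite-horizon map
\[
G_N(\vv(t-1),\ldots,\vv(t-N);\vs_0) \triangleq \rvs_S(\vs_0, t-N, \vv; t),
\]
obtained by iterating \eqref{eq:generalNLsys_f} forward from time $t-N$. By UEC applied with reference point $\rvs_{S,\vv}(t-N)$, which is bounded uniformly in $t$ (this is precisely the uniqueness-of-bounded-trajectory part of UEC), we have $\|G_N(\cdot) - \rvs_{S,\vv}(t)\| \le C \tau^N (\|\vs_0\| + \sup_s \|\rvs_{S,\vv}(s)\|) \to 0$. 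So the pointwise limit
\[
H_{\rvs}(\vz(1),\vz(2),\ldots) \triangleq \lim_{N \to \infty} G_N(\vz(1),\ldots,\vz(N);\vs_0)
\]
exists, is independent of the auxiliary $\vs_0$, and satisfies $\rvs_{\vv}(t) \aseq H_{\rvs}(\rvv(t-1),\rvv(t-2),\ldots)$. I then set $H_{\rvy}(\vz(1),\vz(2),\ldots) \triangleq g(H_{\rvs}(\vz(2),\vz(3),\ldots),\vz(1))$ so that $\rvy_{\vv}(t) = g(\rvs_{\vv}(t),\vv(t)) = H_{\rvy}(\rvv(t),\rvv(t-1),\ldots)$, matching the definition of the steady-state output trajectory.

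Next I would extract the bounds \eqref{lemma:stationaryy_c_v:eq1}--\eqref{lemma:stationaryy_c_v:eq2}. Given two sequences $\vz_1,\vz_2 \in \sV^\infty$, build bi-infinite inputs $\vv_i$ with $\vv_i(-k) = \vz_i(k)$ for $k \ge 1$ and arbitrary agreeing tail on $\sZ_{\ge 0}$. Evaluating the exponential robustness inequality \eqref{def:classDef:mem} at $t=0$ immediately yields
\[
\|H_{\rvs}(\vz_1(1),\ldots) - H_{\rvs}(\vz_2(1),\ldots)\| = \|\vs_{\vv_1}(0) - \vs_{\vv_2}(0)\| \le L_\rvv \sum_{k=1}^\infty \tau^{k-1}\|\vz_1(k) - \vz_2(k)\|,
\]
giving $a_k(H_{\rvs}) = L_\rvv \tau^{k-1}$. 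For $H_{\rvy}$ I would apply the Lipschitz bound on $g$ to $\rvy_{\vv_i}(0) = g(\vs_{\vv_i}(0),\vv_i(0))$, which produces the boundary term $L_{g,\vv}\|\vz_1(1) - \vz_2(1)\|$ and multiplies the state coefficients by $L_{g,\rvs}$, giving $a_0(H_{\rvy}) = L_{g,\vv}$ and $a_k(H_{\rvy}) = L_{g,\rvs} L_\rvv \tau^{k-1}$ for $k \ge 1$. Summability \eqref{lemma:stationaryy_c_v:eq3} then follows from $\sum_{k\ge 1}\tau^{k-1} = (1-\tau)^{-1}$ and $\sum_{k\ge 1} k\tau^{k-1} = (1-\tau)^{-2}$.

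Finally, for stationarity I would argue as follows. Each $G_N$ is a measurable function of finitely many coordinates of $\vv$, so stationarity of $\rvv$ implies that the joint distribution of $(G_N(\rvv(t_1-1),\ldots,\rvv(t_1-N);\vs_0),\ldots,G_N(\rvv(t_r-1),\ldots,\rvv(t_r-N);\vs_0))$ is invariant under time shifts for every $N$. Since $G_N \to H_{\rvs}$ almost surely and the involved random variables live in a bounded set, dominated convergence transfers shift-invariance to the finite-dimensional distributions of $\rvs_\vv$, and composing with the continuous $g$ does the same for $\rvy_\vv$. Boundedness of both processes is already guaranteed by UEC and by continuity of $g$ on the bounded set $\sS \times \sV$. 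The main obstacle, if any, is keeping the existence/uniqueness of the bi-infinite steady-state trajectory carefully aligned with the limit construction so that the identification $\rvs_\vv(t) \aseq H_{\rvs}(\rvv(t-1),\ldots)$ is unambiguous; once the UEC uniqueness clause is invoked for the bi-infinite input, this pins down the map uniquely and the rest is routine.
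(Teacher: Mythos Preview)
Your proposal is correct and follows essentially the same route as the paper: finite-horizon approximants $G_N$ converging to the steady-state trajectory via UEC, stationarity transferred through the limit using stationarity of $\rvv$, and the Lipschitz coefficients read off directly from \eqref{def:classDef:mem} and the Lipschitz output assumption. The only cosmetic difference is that the paper defines $H_{\rvs}$ directly as $\vs_{\vv_{\vz}}(0)$ for a bi-infinite extension $\vv_{\vz}$ and then separately verifies the recursion $H_{\rvs}(\vz_1,\vz_2,\ldots)=f(H_{\rvs}(\vz_2,\vz_3,\ldots),\vz_1)$ via uniqueness of the steady-state trajectory under shifts, whereas you define $H_{\rvs}$ as the limit of $G_N$; UEC makes the two definitions coincide, so there is no substantive gap.
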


\begin{lemma}\label{lemma:CBS} For any class \class{} system. If $\rvv=\rve$ is a bounded i.i.d. process, then the bounded solution $\rvs_\rve(t)$, and the corresponding output $\rvy(t)=g(\rvs_\rve(t),\rve(t))$ are bounded Causal Bernoulli shifts (CBS) as defined in \cite{alquier2012pred}, i.e., there exist functions $H_\rvs$ and $H_\rvy$ s.t., 
    \begin{align}      \rvs_\rve(t)&\overset{a.s.}{=}H_\rvs(\rve(t-1),\rve(t-2),\dots),\\
        \rvy_\rve(t)&\overset{a.s.}{=}H_\rvy(\rve(t),\rve(t-1),\dots),
    \end{align}
    and $H_{\rvs},H_{\rvy}$ satisfy \eqref{lemma:stationaryy_c_v:eq1}, \eqref{lemma:stationaryy_c_v:eq2} and \eqref{lemma:stationaryy_c_v:eq3}.
    \begin{proof}[\textbf{Proof of Lemma \ref{lemma:CBS}}] \; \\
    By Lemma \ref{lemma:stationary_x_v}, $\rvs_\rve(t)$ is stationary, and by uniform exponential convergence property of class \class{} system, $\rvs_\rve(t)$ is bounded.

    Let us consider a function $H_{\rvs},H_{\rvy}$, from Lemma \ref{lemma:stationary_x_v}.
    Then the statement the lemma follows from
    Lemma \ref{lemma:stationary_x_v} and the definition 
    of CBS.
\end{proof}
\end{lemma}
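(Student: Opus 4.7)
The plan is to deduce this lemma almost directly from Lemma \ref{lemma:stationary_x_v}, specializing to the case where the input process is i.i.d., and then to verify that the resulting representation matches the definition of a Causal Bernoulli Shift given in \cite{alquier2012pred}.

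First, I would observe that since $\rve$ is bounded and i.i.d., it is in particular a bounded stationary process. Therefore Lemma \ref{lemma:stationary_x_v} applies with $\rvv \triangleq \rve$, yielding the existence of the measurable maps $H_\rvs$ and $H_\rvy$ such that
\[ \rvs_\rve(t) \aseq H_\rvs(\rve(t-1),\rve(t-2),\dots), \qquad \rvy_\rve(t) \aseq H_\rvy(\rve(t),\rve(t-1),\dots), \]
together with the Lipschitz-in-past-inputs estimates \eqref{lemma:stationaryy_c_v:eq1}--\eqref{lemma:stationaryy_c_v:eq2} and the summability properties \eqref{lemma:stationaryy_c_v:eq3}. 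Boundedness of $\rvs_\rve$ and $\rvy_\rve$ follows from the UEC property of class \class{} systems together with boundedness of $\sS$ and continuity/Lipschitzness of $g$ combined with boundedness of $\sV$ (the range of $\rve$).

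The remaining step is essentially terminological: I would check against the definition of CBS in \cite{alquier2012pred} that a CBS is precisely a process of the form $\xi(t)=H(\rve(t),\rve(t-1),\dots)$ (or with index shifted by one, as here) driven by an i.i.d. innovation sequence $\rve$, where the map $H$ satisfies a Lipschitz bound with coefficients $a_j(H)$ such that $\sum_j a_j(H)<\infty$ and $\sum_j j\, a_j(H)<\infty$. Since $\rve$ is i.i.d. by assumption and the coefficients $a_i(H_\rvs)=L_\rvv\tau^{i-1}$ and $a_i(H_\rvy)=L_{g,\rvs}L_\rvv \tau^{i-1}$ (with $a_0(H_\rvy)=L_{g,\rvv}$) from Lemma \ref{lemma:stationary_x_v} satisfy these summability conditions thanks to $\tau\in[0,1)$, both $\rvs_\rve$ and $\rvy_\rve$ fit the CBS definition.

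There is no real obstacle here: the technical work (establishing the existence of $H_\rvs, H_\rvy$ and the Lipschitz/summability properties) has already been done in Lemma \ref{lemma:stationary_x_v} for arbitrary bounded stationary inputs, and the present lemma is simply the specialization to i.i.d. inputs together with a matching of definitions. The only minor care needed is to confirm that the indexing convention in the CBS definition of \cite{alquier2012pred} matches the one used in the statement (past versus present-and-past innovations for $\rvs$ and $\rvy$ respectively), which is immediate from the form of the evolution equations \eqref{eq:generalNLsys_f}--\eqref{eq:generalNLsys_h}.
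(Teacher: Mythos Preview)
Your proposal is correct and follows essentially the same route as the paper: both arguments invoke Lemma~\ref{lemma:stationary_x_v} (noting that an i.i.d.\ bounded input is in particular bounded stationary) to obtain $H_\rvs,H_\rvy$ with the required Lipschitz and summability properties, then observe that this matches the CBS definition of \cite{alquier2012pred}. Your write-up is simply more explicit about the verification steps than the paper's terse version.
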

\begin{corollary}[By Proposition 4.2 \citet{alquier2012pred}]\label{cor:WDP}
    For any class \class{} system. If $\rvv=\rve$ is a bounded i.i.d. process, then the unique bounded solution $\rvs_\rve(t)$ and corresponding output $\rvy_\rve(t)$ are weakly dependent processes as defined in \cite{Dedecker_2007,alquier2012pred}.
\end{corollary}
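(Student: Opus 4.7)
The plan is to deduce the result directly from Lemma \ref{lemma:CBS} combined with Proposition 4.2 of \citet{alquier2012pred}. First I would invoke Lemma \ref{lemma:CBS} to obtain the representations $\rvs_\rve(t) \aseq H_\rvs(\rve(t-1),\rve(t-2),\dots)$ and $\rvy_\rve(t) \aseq H_\rvy(\rve(t),\rve(t-1),\dots)$, where $H_\rvs$ and $H_\rvy$ are measurable maps into bounded subsets of $\sS$ and $\sY$, respectively, and satisfy the Lipschitz-type inequalities \eqref{lemma:stationaryy_c_v:eq1}--\eqref{lemma:stationaryy_c_v:eq2} with coefficients $a_i(H_\rvs)$ and $a_i(H_\rvy)$ of geometric decay in $\tau \in [0,1)$.

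Next I would verify the two hypotheses needed by Proposition 4.2 of \citet{alquier2012pred}: (i) that the driving process $\rve$ is i.i.d.\ and essentially bounded, which holds by assumption of the corollary; and (ii) that the Lipschitz coefficients satisfy the summability condition $\sum_{i\ge 0} i\, a_i(H_\rvs) < +\infty$ and $\sum_{i\ge 0} i\, a_i(H_\rvy) < +\infty$. Both summability conditions are exactly what \eqref{lemma:stationaryy_c_v:eq3} delivers, since $a_i(H_\rvs)=L_\rvv \tau^{i-1}$, $a_i(H_\rvy)=L_{g,\rvs}L_\rvv \tau^{i-1}$ for $i\ge 1$, and $a_0(H_\rvy)=L_{g,\rvv}$, all of which yield convergent geometric and arithmetic-geometric series because $\tau<1$.

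With these hypotheses verified, Proposition 4.2 of \citet{alquier2012pred} applies verbatim to both $\rvs_\rve$ and $\rvy_\rve$, yielding that they are weakly dependent in the sense of \cite{Dedecker_2007,alquier2012pred}, together with explicit bounds on $\|\rvs_\rve(0)\|_\infty$, $\|\rvy_\rve(0)\|_\infty$ and the mixing coefficients $\theta^{\rvs_\rve}_{\infty,N}(1)$, $\theta^{\rvy_\rve}_{\infty,N}(1)$ in terms of $\|\rve(0)\|_\infty$ and the partial sums $\sum_i a_i(\cdot)$, $\sum_i i\, a_i(\cdot)$. There is no real obstacle here: the entire content of the corollary has been pre-packaged by Lemma \ref{lemma:CBS}, so the proof reduces to a citation of the external proposition, and the only thing worth spelling out is the identification of the Lipschitz coefficients and the explicit evaluation of the two series, which I would include as a one-line computation for completeness.
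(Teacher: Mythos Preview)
Your proposal is correct and is exactly the (implicit) argument the paper has in mind: the corollary is stated in the paper with the attribution ``By Proposition 4.2 \citet{alquier2012pred}'' in its title and carries no separate proof, since Lemma \ref{lemma:CBS} already establishes the CBS representation with geometrically decaying Lipschitz coefficients satisfying \eqref{lemma:stationaryy_c_v:eq3}, after which Proposition 4.2 of \citet{alquier2012pred} applies directly. Your write-up simply spells out the citation in more detail than the paper does.
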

\begin{proof}[\textbf{Proof of Lemma \ref{as:gen_mixing:lem}}]
 From Lemma \ref{lemma:CBS} it follows that 
 $\rvq(t)=\begin{bmatrix} \rvy(t) \\ \rvx(t) \end{bmatrix}$
is CBS and $\rvq(t)=H_{\rvy}(\rve_g(t),\rve_g(t-1),\dots)$ satisfies \eqref{lemma:stationaryy_c_v:eq1}, \eqref{lemma:classExamples:eq2}, \eqref{lemma:stationaryy_c_v:eq3} with
$S=S_g$. 
Hence by [By Proposition 4.2 \citet{alquier2012pred}]
$\rvq(t)$ is bounded, stationary, weakly dependent and $\bar{\theta}_{\infty}^\rvq(1),B^\rvq$ are as in Lemma \ref{lemma:mixingCoeff}
\end{proof}


\begin{lemma}\label{lemma:classExamples} 
Consider a system 
$S=(\sS,\sY,\sU, f,g)$.
Assume that  $g$ is Lipschitz, and one of
the following conditions hold:
\begin{enumerate}
    \item $\exists \tau<1:$ $\|f(\rvs,\rvv)-f(\rvs',\rvv)\|\leq L_1 \|\rvs-\rvs'\|$, i.e. $f$ is a contraction in $\rvs$,
    \item $\exists P=P^T>0, L_2 > 0, 0 < \tau <1$: $\|f(\rvs,\rvv)-f(\rvs',\rvv')\|_P \leq \tau \|\rvs-\rvs'\|_P+L_2\|\rvv-\rvv'\|$,
    where $\|z\|_P=\sqrt{z^TPz}$ for all $z \in \mathbb{R}^n$,
    \item  $f$ is Lipschitz, it is 
    continuously differentiable  in $\rvs$ and $\exists P=P^T>0,~~\mu\in[0,1)$:
    \begin{align}
        \frac{\partial f}{\partial \vs}(\vs,\vv)^TP\frac{\partial f}{\partial \vs}(\vs,\vv) < \mu P,\; \text{for almost all }\vs,\vv
    \end{align}
\end{enumerate}
Then $S$ is of class \class{}.
\end{lemma}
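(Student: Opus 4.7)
The plan is to reduce all three hypotheses to the quadratic Lyapunov condition (condition 2) and derive the defining properties of class \class{} from this common reduction. Since Lipschitz output is already part of the hypothesis, only UEC and exponential robustness in inputs remain to be shown. Condition 1 is the special case $P=I$ of condition 2, with $\tau=L_1$ and $L_2$ a Lipschitz constant of $f$ in $\vv$ (available because $f$ is assumed Lipschitz). Condition 3 reduces to condition 2 via a mean-value computation: writing
\[ f(\vs,\vv)-f(\vs',\vv)=\int_0^1 \frac{\partial f}{\partial\vs}\bigl(\vs'+t(\vs-\vs'),\vv\bigr)(\vs-\vs')\,dt \]
and using the matrix inequality on the Jacobian gives $\|f(\vs,\vv)-f(\vs',\vv)\|_P\le\sqrt{\mu}\,\|\vs-\vs'\|_P$ pointwise, so condition 2 holds with $\tau=\sqrt{\mu}$ and $L_2$ a Lipschitz constant of $f$ in $\vv$.

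Granting condition 2, UEC follows by induction: for two trajectories $\vs(\vs_0,t_0,\vv;\cdot)$ and $\vs(\vs_0',t_0,\vv;\cdot)$ driven by the same input, applying condition 2 at each time step yields
\[ \|\vs(\vs_0,t_0,\vv;t)-\vs(\vs_0',t_0,\vv;t)\|_P\le\tau^{t-t_0}\|\vs_0-\vs_0'\|_P. \]
Norm equivalence through $\sqrt{\lambda_{\min}(P)}\,\|\cdot\|_2\le\|\cdot\|_P\le\sqrt{\lambda_{\max}(P)}\,\|\cdot\|_2$ converts this into the UEC bound with $C=\sqrt{\lambda_{\max}(P)/\lambda_{\min}(P)}$ and the same $\tau$. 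Exponential robustness in inputs comes from unrolling the one-step inequality
\[ \|\vs_{\vv_1}(t+1)-\vs_{\vv_2}(t+1)\|_P\le\tau\|\vs_{\vv_1}(t)-\vs_{\vv_2}(t)\|_P+L_2\|\vv_1(t)-\vv_2(t)\|, \]
which, iterated backwards in time, yields a convolution of the form demanded by the definition of class \class{}, with input-Lipschitz constant $L_\vv=L_2\sqrt{\lambda_{\max}(P)/\lambda_{\min}(P)}$.

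The step I expect to be the main obstacle is establishing the existence and uniqueness of a bounded steady-state trajectory $\vs_\vv$ defined on all of $\sZ$ for every bounded $\vv$, which is the non-trivial half of UEC. My plan is to fix a reference state $\vs^*\in\sS$ and examine, for each $t$, the sequence $N\mapsto\vs(\vs^*,t-N,\vv;t)$ as $N\to\infty$. Condition 2, unrolled stepwise, bounds the difference between the $N$-th and $(N{+}M)$-th elements of this sequence by $\tau^N$ times a quantity depending on $\vs^*$, plus a tail of the convolution $L_2\sum_{k>N}\tau^{k-1}\|\vv(t-k)\|$; both shrink geometrically, so the sequence is Cauchy in the $P$-norm and its limit defines $\vs_\vv(t)$. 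Boundedness of $\vs_\vv$ follows because the full convolution series converges uniformly in $t$ whenever $\vv$ is essentially bounded. Uniqueness is forced by the exponential contraction: any two bounded trajectories for the same $\vv$ must obey the $\tau^{t-t_0}$ estimate between them for every $t_0$, and letting $t_0\to-\infty$ identifies them.
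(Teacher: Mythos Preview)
Your proposal is correct and matches the paper's structure: reduce conditions 1 and 3 to condition 2 (via $P=I$ and the mean-value integral, respectively), then derive UEC and exponential robustness in inputs from condition 2 by unrolling the one-step $P$-norm inequality. The only real difference is that the paper obtains UEC---in particular existence and uniqueness of the bounded steady-state trajectory---by citing \cite[Theorem~1]{PavlocTac2011} as a black box, whereas you construct $\vs_\vv(t)$ directly as the Cauchy limit of $\vs(\vs^*,t{-}N,\vv;t)$; your route is more self-contained and goes through because $\sS$ is assumed bounded, so $\|\vs^*-\vs(\vs^*,t{-}N{-}M,\vv;t{-}N)\|_P$ is uniformly controlled and the factor $\tau^N$ alone already forces Cauchy (the ``tail of the convolution'' term you mention is not actually needed here). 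One small correction on constants: after unrolling, the right-hand side already carries $\|\vv_1(t{-}k)-\vv_2(t{-}k)\|$ in the Euclidean norm, so only the left-hand side needs conversion from $\|\cdot\|_P$ to $\|\cdot\|_2$, yielding the paper's $L_\vv=L_2/\sqrt{\lambda_{\min}(P)}$ rather than your $L_2\sqrt{\lambda_{\max}(P)/\lambda_{\min}(P)}$.
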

Concerning other examples of class S systems, notice that 
from Lemma \ref{lemma:classExamples} it follows that a sufficient condition for a dynamical system to be in class S is that it is contraction after a linear change of coordinates. If $P > 0$ is the matrix from Lemma \ref{lemma:classExamples}, then the system obtained from the original one by using the linear state-space transformation $S=P^{-1/2}$ will be contraction. That is, class S systems will contain not only contractive dynamical systems, but
the result of applying a linear state-space transformation to contractive dynamical systems.
Note that the latter is no longer necessarily contractive.
The conditions of Lemma \ref{lemma:classExamples} are only sufficient, we refer to \citet{PavlocTac2011} regarding the gap between the sufficiency of these conditions and necessity. This gap is not trivial, and represents a separate research direction.

\begin{proof}
 Since Condition 1 is a special case of Condition 2 with $P$ being the identity matrix, it is enough to
 show that Condition 2. implies that $S$ is class \class{}. 
 To this end, we show that $S$ satisfies all the
 condition of the definition of class \class{}.
 
 By \cite[Theorem 1]{PavlocTac2011} Condition 2.
 implies that $S$ is UEC with some constants $C > 0$
 and $\tau > 0$. 
 Consider two steady-state state trajectories $\vs_{\rvv_1}$
 and $\rvs_{\vv_2}$ of $S$ for two bounded
 inputs $\vv_i:\mathbb{Z} \rightarrow \mathbb{U}$.
 Note that $\rvs_{\rvv_i}$ are bounded. 
 Note that for any $t \in \mathbb{Z}$,
   \begin{multline}
   \|\vs_{\vv_1}(t)-\vs_{\vv_2}(t)\|_{P} =
    \|f(\vs_{\vv_1}(t-1),\vv_1(t))-f(\vs_{\vv_2}(t-1),\vv_2(t))\|_{P} \le \\
    \tau \|\vs_{\vv_1}(t-1)-\rvs_{\vv_2}(t-1)\|_{P}+L_2 \|\vv_1(t)-\vv_2(t)\|    
\end{multline} 
By repeating the argument above, it follows that
for any $t_0 < t$,
\begin{equation}
\label{lemma:classExamples:eq1}
\|\vs_{\vv_1}(t)-\vs_{\vv_2}(t)\|_{P} \le 
    \tau^{t_0} \|\vs_{\vv_1}(t-t_0)-\rvs_{\vv_2}(t-t_0)\|_{P}+
    \sum_{k=1}^{t_0} \tau^{k-1} L_2 \|\vv_1(t-k)-\vv_2(t-k)\|    
\end{equation}
Note that as for some $B > 0$,
$\|\vs_{\vv_i}(t)\| \le B$, it follows that
$\|\vs_{\vv_i}(t)\|_P \le \sqrt{\lambda_{\max}(P)} B$, where $\lambda_{\max}(P)$ is the maximal eigenvalue of $P$. It then follows that
$0 \le \tau^{t_0} \|\vs_{\vv_1}(t-t_0)-\vs_{\vv_2}(t-t_0)\|_{P} \le \tau^{t_0} 2 \sqrt{\lambda_{\max}(P)} B$ and hence 
$\lim_{t_0 \rightarrow +\infty} \tau^{t_0} \|\vs_{\vv_1}(t-t_0)-\vs_{\vv_2}(t-t_0)\|_{P}=0$.
By taking the limit of the right-hand side of
\eqref{lemma:classExamples:eq1} as $t_0 \rightarrow +\infty$ and noticing that
$\|\vs_{\vv_1}(t)-\vs_{\vv_2}(t)\| \le \frac{1}{\sqrt{\lambda_{\min}(P)}} \|\vs_{\vv_1}(t)-\vs_{\vv_2}(t)\|_{P}$, where 
$\lambda_{\min}(P)$ is the smallest eigenvalue of $P$,
it follows that
\eqref{def:classDef:mem} holds with $L_{\vv}=\frac{L_2}{\sqrt{\lambda_{\min}(P)}}$.
Finally, $g$ is Lipschitz by assumption. That is, all the conditions of class \class{} hold.

Next, we show that Condition 3. implies Condition 2.
Note that Condition 3. implies
that  for all $z \in \mathbb{R}^n$
\begin{equation}
\label{lemma:classExamples:eq2}
   \|\frac{\partial f}{\partial \rvs}(\vs,\vv)z\|_P \le \sqrt{\mu} \|z\|_P
\end{equation}
Define
$g(\alpha)=f(\alpha \vs_1 + (1-\alpha) \vs_2,\vv_1)$
for any $\alpha \in [0,1]$.
It then follows that 
$\frac{\partial g(\alpha)}{\partial \alpha}=\frac{\partial f(\alpha \vs_1 + (1-\alpha) \vs_2,\vv_1)}{\partial \vs}(\vs_1-\vs_2)$.
Hence, 
\[ 
    f(\vs_1,\rvv_1)-f(\vs_2,\vv_1) = g(1)-g(0)
    \int_{0}^{1} \dfrac{f(\alpha \vs_1 + (1-\alpha) \rvs_2),\vv_1)}{d\rvs}(\vs_1-\vs_2)d\alpha. 
\]
From \eqref{lemma:classExamples:eq2} it 
then follows that 
\begin{equation}
\label{lemma:classExamples:eq3}
   \begin{split}
   &  \| f(\vs_1,\vv_1)-f(\vs_2,\vv_1) \|_P \le 
\int_{0}^{1} \|\dfrac{f(\alpha \vs_1 + (1-\alpha) \vs_2),\vv_1)}{d\vs}(\vs_1-\vs_2)\|_{P} d\alpha  \le \\
 & \le \int_0^{1} \sqrt{\mu} \|\vs_1-\vs_2\|_P d\alpha = \sqrt{\mu} \|\vs_1-\vs_2\|_P
\end{split}
\end{equation}
Using \eqref{lemma:classExamples:eq3}
\[
    \begin{split}
     &    \| f(\vs_1,\vv_1)-f(\vs_2,\vv_2) \|_P
      \le \| f(\vs_1,\vv_1)-f(\vs_2,\vv_1) \|_P+
      \|f(\vs_2,\vv_1)-f(\vs_2,\vv_2)\|_{P} \le \\
    & \sqrt{\mu}  \|\vs_1-\vs_2\|_P+
      \sqrt{\lambda_{\max}(P)} \|f(\vs_2,\vv_1)-f(\vs_2,\vv_2)\|
    \end{split}
\]
Since $f$ is Lipschitz, $\|f(\vs_2,\vv_1)-f(\vs_2,\vv_2)\| \le L_{f,\vv} \|\vv_1-\vv_2\|$, where
$L_{f,\vv}$ is the Lipschitz constant of $f(\vs,\vv)$ w.r.t. $\vv$. Hence, it follows that 
Condition 2. holds with $L_2=\sqrt{\lambda_{\max}(P)} L_{f,\vv}$ and $\tau=\sqrt{\mu}$.

\end{proof}
\begin{proposition} RNNs of the form  
        \newcommand{\hs}{\hat{\rvs}}
        \begin{subequations}
        \begin{align}
            \hs(t+1)=\sigma_f(A\hs(t)+B\rvx(t)+b_f) \label{eq:rnn_ex:f}\\
            \hat{\rvy}(t)=\sigma_g(C\hs(t)+D\rvx(t)+b_g) \label{eq:rnn_ex:g}
        \end{align}
        \end{subequations}
        with $\sigma_f,\sigma_g$ Lipschitz continuous, with global constants $\rho_f,\rho_g$ respectively, and $\rho_f\|A\|_2<1$ are of Class \class{}, with constants
        \begin{align}
            C=1,\; \tau= \rho_f\|A\|_2,\; L_\rvv=\rho_f \|B\|_2,\; L_{g,\rvs}=\rho_g\|C\|_2,\;L_{g,\rvv}=\rho_g\|D\|_2
        \end{align}
    \begin{proof}
        
        We show uniform exponential convergence via contraction, see \cite[Theorem 1]{PavlocTac2011}.
        First, see that \eqref{eq:rnn_ex:f} is Lipschitz in the state $\hs$, i.e.
        \begin{align}
            \|\sigma_f(A\hs(t)+B\rvx(t)+b_f)-\sigma_f(A\hs'(t)+B\rvx(t)+b_f)\| \leq \rho_f \|A\hs(t)-A\hs'(t) \|\\
            \leq \rho_f \|A\|_2 \|\hs(t)-\hs'(t) \|
        \end{align}
        thus if $\rho_f \|A\|_2 < 1$, then we have uniform exponential convergence, and $\tau=\rho_f \|A\|_2,C=1$.
        
        Next we show the infinite memory property
        \begin{align}
            \|\hs_\rvx(t)-\hs_{\rvx'}(t)\|\leq \rho_f \|A\hs_\rvx(t-1)+B\rvx(t-1)+b_f-A\hs_{\rvx'}(t-1)-B\rvx'(t-1)-b_f \|\\
            \leq \rho_f\|A\|_2 \|\hs_\rvx(t-1)-\hs_{\rvx'}(t-1)\| + \rho_f \|B\|_2 \|\rvx(t-1)-\rvx'(t-1)\|
        \end{align}
        If we continue applying the same steps, e.g.
        \begin{multline}
            \|\hs_\rvx(t)-\hs_{\rvx'}(t)\|\leq (\rho_f\|A\|_2)^2  \|\hs_\rvx(t-2)-\hs_{\rvx'}(t-2)\| \\
            + \rho_f\|A\|_2\rho_f \|B\|_2 \|\rvx(t-2)-\rvx'(t-2)\| + \rho_f \|B\|_2 \|\rvx(t-1)-\rvx'(t-1)\|
        \end{multline}
        we obtain
        \begin{multline}
            \|\hs_\rvx(t)-\hs_{\rvx'}(t)\|\leq \lim_{r\rightarrow\infty} (\rho_f\|A\|_2)^r \|\hs_\rvx(t-1)-\hs_{\rvx'}(t-1)\| \\
            + \sum_{k=1}^r (\rho_f\|A\|_2)^{k-1} \rho_f \|B\|_2 \|\rvx(t-k)-\rvx'(t-k)\|
        \end{multline}
        now under the assumption that $\rho_f \|A\|_2 < 1$, the limit $\lim_{r\rightarrow\infty} (\rho_f\|A\|_2)^r \|\hs_\rvx(t-1)-\hs_{\rvx'}(t-1)\| = 0$, and thus we have shown the infinite memory property with $\tau=\rho_f \|A\|_2,\; L_\rvv=\rho_f \|B\|_2$, i.e.
        \begin{align}
            \|\hs_\rvx(t)-\hs_{\rvx'}(t)\|\leq (\rho_f \|B\|_2)\sum_{k=1}^\infty (\rho_f\|A\|_2)^{k-1}  \|\rvx(t-k)-\rvx'(t-k)\|
        \end{align}
        Lastly Lipschitz output property follows by assumption, and similarly we have $L_{g,\rvs}=\rho_g\|C\|_2,$ $L_{g,\rvv}=\rho_g\|D\|_2$
    \end{proof}
\end{proposition}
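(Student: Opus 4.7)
The plan is to verify the three defining properties of Class $\mathcal{S}$ systems from Definition \ref{def:classDef}: uniform exponential convergence (UEC), exponential robustness in inputs, and Lipschitz output. I would lean on Lemma \ref{lemma:classExamples} to obtain UEC for free once the contraction and Lipschitz hypotheses are verified, and then compute the input-robustness constant $L_\vv$ by directly iterating the RNN recursion.

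First I would observe that $f(\vs,\vv) = \sigma_f(A\vs + B\vv + b_f)$ is a contraction in its first argument: by Lipschitz continuity of $\sigma_f$ with constant $\rho_f$ and the induced matrix $2$-norm,
\[
\|f(\vs,\vv) - f(\vs',\vv)\| \leq \rho_f \|A(\vs-\vs')\| \leq \rho_f \|A\|_2 \|\vs - \vs'\|,
\]
and $\rho_f\|A\|_2 < 1$ by assumption. A parallel calculation for $g(\vs,\vv) = \sigma_g(C\vs + D\vv + b_g)$, using the triangle inequality and Lipschitz continuity of $\sigma_g$, yields
\[
\|g(\vs_1,\vv_1) - g(\vs_2,\vv_2)\| \leq \rho_g \|C(\vs_1-\vs_2)+D(\vv_1-\vv_2)\| \leq \rho_g \|C\|_2 \|\vs_1-\vs_2\| + \rho_g \|D\|_2 \|\vv_1-\vv_2\|,
\]
establishing the Lipschitz output condition with $L_{g,\vs} = \rho_g\|C\|_2$ and $L_{g,\vv} = \rho_g\|D\|_2$. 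Condition 1 of Lemma \ref{lemma:classExamples} (with $P = I$, so that $C = 1$ in UEC) then delivers UEC with $\tau = \rho_f\|A\|_2$.

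For exponential robustness in inputs, I would unfold the recursion directly. For any two bounded input trajectories $\vv_1,\vv_2:\sZ \to \sV$, the corresponding steady-state state trajectories $\vs_{\vv_1},\vs_{\vv_2}$ (which exist and are uniformly bounded by UEC) satisfy
\[
\|\vs_{\vv_1}(t) - \vs_{\vv_2}(t)\| \leq \rho_f\|A\|_2 \, \|\vs_{\vv_1}(t-1) - \vs_{\vv_2}(t-1)\| + \rho_f\|B\|_2 \, \|\vv_1(t-1) - \vv_2(t-1)\|.
\]
Iterating this bound $r$ times produces
\[
\|\vs_{\vv_1}(t) - \vs_{\vv_2}(t)\| \leq (\rho_f\|A\|_2)^r \|\vs_{\vv_1}(t-r) - \vs_{\vv_2}(t-r)\| + \rho_f\|B\|_2 \sum_{k=1}^{r} (\rho_f\|A\|_2)^{k-1} \|\vv_1(t-k) - \vv_2(t-k)\|,
\]
and sending $r \to \infty$ yields exactly \eqref{def:classDef:mem} with $L_\vv = \rho_f\|B\|_2$ and $\tau = \rho_f\|A\|_2$.

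The only delicate step, and the main obstacle I anticipate, is justifying that the residual $(\rho_f\|A\|_2)^r \|\vs_{\vv_1}(t-r) - \vs_{\vv_2}(t-r)\|$ vanishes as $r \to \infty$. This requires a uniform bound on the steady-state trajectories, which is furnished by UEC together with Theorem 1 of \cite{PavlocTac2011} (as invoked inside the proof of Lemma \ref{lemma:classExamples}); combined with $\rho_f\|A\|_2 < 1$, the geometric factor drives the residual to zero. Collecting the three verified conditions with the computed constants proves the claimed class $\mathcal{S}$ membership.
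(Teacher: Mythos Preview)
Your proposal is correct and follows essentially the same route as the paper: verify contraction of $f$ in the state to obtain UEC with $C=1$, $\tau=\rho_f\|A\|_2$; iterate the recursion to get the input-robustness bound with $L_\vv=\rho_f\|B\|_2$; and read off the Lipschitz output constants from $\sigma_g$. The only cosmetic difference is that you route UEC through Lemma~\ref{lemma:classExamples} while the paper cites \cite[Theorem 1]{PavlocTac2011} directly, and you are more explicit than the paper about why the residual $(\rho_f\|A\|_2)^r\|\vs_{\vv_1}(t-r)-\vs_{\vv_2}(t-r)\|$ vanishes (via boundedness of the steady-state trajectories).
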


\subsection{Supporting Lemmas}
\label{sec:lemma:full_gen_sys}
\begin{lemma}[Full generator]\label{lemma:full_gen_sys} Let the generator system $S_{gen}=(\sS_g,\sY\times\sX,\sV,f_g,g_g)$ be of class \class{}, furthermore let the predictor $\hat{S}=(\hat{\sS},\sY,\sX,\hat{f},\hat{g})$ be of class \class{}, then the full generator system $S_{fg}=(\sS_g\times\hat{\sS},\sY\times\sY,\sV,f_{fg},g_{fg})$ is of class \class{}, with associated constants:
    \begin{align}
    \tilde{\tau}&=\max\{\tau_g,\hat{\tau}\},\; G=-2(e\ln\tilde{\tau})^{-1}\\
    C_{fg}&=\sqrt{C_g^2\left (1+ \tilde{\tau}^{-1}\left (G \hat{L}_{\rvv}L_{g_g,\rvs} \right )^2 \right )+\hat{C}^2 }\\ 
    \tau_{fg}&=\sqrt{\tilde{\tau}}\\
    L_{fg,\rvv}&=\sqrt{L_{g,\rvv}^2+ \tilde{\tau}^{-3}\left (\hat{L}_{\rvv}G \max\{L_{g_g,\rvs} L_{g,\rvv} ,L_{g_g,\rvv} \}  \right )^2} \\ 
    L_{fg,g,\rvs}&=L_{\hat{g},\rvs}\\
    L_{fg,g,\rvv}&=\sqrt{L_{\hat{g},\rvv}^2+1}
    \end{align}
\begin{align}
    S_{fg}=\begin{cases}\rvs_{fg}(t+1)=f_{fg}(\rvs_{fg}(t),\rve_g(t))\\ \rvo(t)=\begin{bmatrix} \hat{\rvy}(t) \\ \rvy_{\rve_g}(t)\end{bmatrix}= g_{fg}(\rvs_{fg}(t),\rve_g(t)) \end{cases} =\begin{cases}\rvs_{g,\rve}(t+1)=f_g(\rvs_{g,\rve}(t),\rve_g(t))\\ \begin{bmatrix} \rvy_{\rve_g}(t)\\ \rvx_{\rve_g}(t) \end{bmatrix}=g_g(\rvs_{g,\rve}(t),\rve_g(t))\\
    \hat{\rvs}(t+1)=\hat{f}(\hat{\rvs}(t),\rvx_{\rve_g}(t))\\
    \rvo(t)=\begin{bmatrix} \hat{g}(\hat{\rvs}(t),\rvx_{\rve_g}(t)) \\ \rvy_{\rve_g}(t)\end{bmatrix}
    \end{cases}
\end{align}
\begin{proof}[\textbf{Proof of Lemma \ref{lemma:full_gen_sys}}]
    Consider a predictor $\hat{S}$, and consider a series connection between the generator $S_{gen}$, and modified predictor $\hat{S}'(\hat{\sS},\sY\times \sY,\sY\times\sX,f',g')$, where
    \begin{align}
        \hat{S}'=\begin{cases} \hat{\rvs}(t+1)=f' \left (\hat{\rvs}(t),\begin{bmatrix} \rvy_{\rve_g}(t)\\\rvx_{\rve_g}(t) \end{bmatrix} \right ) = \hat{f} (\hat{\rvs}(t),\rvx_{\rve_g}(t))\\
        \rvq(t)=\begin{bmatrix} \hat{\rvy}(t)\\ \rvy_{\rve_g}(t)\end{bmatrix}=g'\left (\hat{\rvs}(t),\begin{bmatrix} \rvy_{\rve_g}(t)\\\rvx_{\rve_g}(t) \end{bmatrix}  \right )=\begin{bmatrix}\hat{g}(\rvs(t),\rvx_{\rve_g}(t)) \\  \rvy_{\rve_g}(t)\end{bmatrix} \end{cases}
    \end{align}
    Now since, $f'=\hat{f}$, then $C_{S'}=C_{\hat{S}}, \; \tau_{S'}=\tau_{\hat{S}},\; L_{S',\rvv}=L_{\hat{S},\rvv}$. However since $g'\neq \hat{g}$, we need to show that $g'$ is Lipschitz, i.e. with $\vs=\hat{\vs}$, and $\vv=\begin{bmatrix} \vy\\ \vx \end{bmatrix}$
    \begin{align}
        \|g'(\vs_1,\vv_1)-g'(\vs_2,\vv_2)\|=\left \|\begin{bmatrix}\hat{g}(\vs_1,\vx_1) \\  \vy_1\end{bmatrix} -\begin{bmatrix}\hat{g}(\vs_2,\vx_2) \\  \vy_2\end{bmatrix} \right \|\\
        \leq \|\hat{g}(\vs_2,\vx_2)-\hat{g}(\vs_1,\vx_1)\|+ \|\vy_1-\vy_2\|
    \end{align}
    since the predictor is of class \class{},
    \begin{align}
        \|g'(\vs_1,\vv_1)-g'(\vs_2,\vv_2)\|\leq L_{\hat{g},\rvs}\|\vs_2-\vs_1\|+L_{\hat{g},\rvv}\|\vx_2-\vx_1\|+ \|\vy_1-\vy_2\|
    \end{align}
    By Proposition \ref{prop:normDecomp}, we get
    \begin{align}
        \|g'(\vs_1,\vv_1)-g'(\vs_2,\vv_2)\|\leq L_{\hat{g},\rvs}\|\vs_2-\vs_1\|+\sqrt{L_{\hat{g},\rvv}^2+1}\|\vv_2-\vv_1\|
    \end{align}

    Thus $S'$ is of class \class{}, with $L_{g',\rvs}=L_{\hat{g},\rvs}$, and $L_{g',\rvv}=\sqrt{L_{\hat{g},\rvv}^2+1}$

    Then we can apply Lemma \ref{lemma:series}, and obtain the statement of the lemma.
\end{proof}
\end{lemma}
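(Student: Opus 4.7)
The plan is to recognise $S_{fg}$ as the series interconnection of the generator $S_{gen}$ with a slightly augmented version $\hat{S}'$ of the predictor, and then to invoke the series-interconnection lemma (Lemma \ref{lemma:series}) off-the-shelf so that all the constants drop out by substitution. The only genuinely new content is to verify that the augmentation does not spoil class \class{} membership and to extract the Lipschitz constants of the augmented output map.

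First I would construct $\hat{S}'=(\hat{\sS},\sY\times\sY,\sY\times\sX,f',g')$ whose input is the full generator output $\vv=[\vy^T,\vx^T]^T$, with $f'(\hat{\vs},\vv)=\hat{f}(\hat{\vs},\vx)$ (so the $\vy$-channel is ignored internally) and $g'(\hat{\vs},\vv)=[\hat{g}(\hat{\vs},\vx)^T,\vy^T]^T$ (so $\vy$ is passed straight through as an extra output). Since the state equation of $\hat{S}'$ coincides with that of $\hat{S}$, the UEC constants $C_{\hat{S}},\tau_{\hat{S}}$ and the exponential input-robustness constant $L_{\hat{S},\rvv}$ of $\hat{S}$ transfer verbatim (perturbing the $\vy$-component of the input does not change the state). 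It only remains to bound
\[
\|g'(\vs_1,\vv_1)-g'(\vs_2,\vv_2)\|\le L_{\hat{g},\vs}\|\vs_1-\vs_2\|+L_{\hat{g},\rvv}\|\vx_1-\vx_2\|+\|\vy_1-\vy_2\|,
\]
and then to combine the $\vx$- and $\vy$-contributions into a single $\|\vv_1-\vv_2\|$ term via Proposition \ref{prop:normDecomp}. This yields $L_{g',\vs}=L_{\hat{g},\vs}$ and $L_{g',\rvv}=\sqrt{L_{\hat{g},\rvv}^2+1}$, establishing that $\hat{S}'$ is of class \class{}.

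Second, I would apply Lemma \ref{lemma:series} to the pair $(S_{gen},\hat{S}')$: the series interconnection is exactly $S_{fg}$ (the generator's state is driven by $\rve_g$, the predictor's state is driven by the generator's $\rvx$-output, and the combined output is $[\hat{\rvy}^T,\rvy^T]^T$). Substituting $\tilde{\tau}=\max\{\tau_g,\tau_{\hat{S}}\}$, $G=-2/(e\ln\tilde{\tau})$, together with the Lipschitz constants of $g'$ computed above, into the formulae of Lemma \ref{lemma:series} reproduces the stated expressions for $C_{fg},\tau_{fg},L_{fg,\rvv},L_{fg,g,\vs},L_{fg,g,\rvv}$.

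The main obstacle I anticipate is purely bookkeeping: the factor $G$ and the square-root combinations produced by Lemma \ref{lemma:series} are awkward, and one must be careful that the $\vy$ pass-through in $\hat{S}'$ really only perturbs $L_{g',\rvv}$ (by the $\sqrt{\cdot+1}$ factor) while leaving $L_{g',\vs}$ and the internal stability constants untouched. There is no deeper conceptual difficulty: once the problem is reduced to a series interconnection of two class \class{} systems, the result is a direct application of Lemma \ref{lemma:series}.
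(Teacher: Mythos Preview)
Your proposal is correct and matches the paper's own proof essentially step for step: construct the augmented predictor $\hat{S}'$ that passes $\rvy$ through as an extra output, observe that $f'=\hat{f}$ leaves the UEC and input-robustness constants unchanged, use Proposition~\ref{prop:normDecomp} to obtain $L_{g',\rvv}=\sqrt{L_{\hat{g},\rvv}^2+1}$, and then apply Lemma~\ref{lemma:series} to the series interconnection $S_{gen}\rightarrow\hat{S}'$. There is nothing to add.
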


\subsection{Proofs relating to class  \class{}}\label{sec:proofsClass}

\begin{lemma}\label{lemma:series} 
If
$S_1=(\sS_1,\sY_1,\sV,f_1,g_1)$ and $S_2=(\sS_2,\sY,\sY_1,f_2,g_2)$ are \class{}, then
the system $S=(\sS_1 \times \sS_2, \sY,\sV, f,g)$ defined by
\begin{align*}
    S\triangleq \begin{cases} \underbrace{\begin{bmatrix}\vs_1(t+1) \\
    \vs_2(t+1) \end{bmatrix}}_{\vs(t)}
    =\underbrace{\begin{bmatrix} 
     f_1(\vs_1(t),\vv(t)) \\
     f_2(\vs_2(t),g_1(\vy_1(t),\vv(t))
     \end{bmatrix}}_{f(\vs(t),\vv(t))}, \quad 
    \vy(t)=\underbrace{g_2(\vs_2(t),\vv(t))}_{g(\vs(t),v(t))} \end{cases}
\end{align*}
is class \class{}. In addition
    if the associated constants of $S_i$  are $(C_i,\tau_i,L_{i,\vv},L_{g_i,\vs},L_{g_i,\vv})$, $i=1,2$,
    then 
    \newcommand{\St}{S} 
    \begin{align}
    \begin{split}
    & C_{\St}=\sqrt{C_1^2\left (1+ \tilde{\tau}^{-1}\left (G L_{2,\rvv}L_{g_1,\rvs} \right )^2 \right )+C_2^2 }, ~   L_{\St,g,\rvs}=L_{g_2,\rvs},\quad L_{\St,g,\rvv}=L_{g_2,\rvv},  \\
     & L_{\St,\rvv}=\sqrt{L_{1,\rvv}^2+ \tilde{\tau}^{-3}\left (L_{2,\rvv}G \max\{L_{g1,\rvs} L_{1,\rvv} ,L_{g_1,\rvv} \}  \right )^2}, \quad \tau_{\St}=\sqrt{\tilde{\tau}}\\
    \end{split}
    \end{align}
    are the the associated constants of 
    $\mathcal{S}$, 
    with $\tilde{\tau}\triangleq\max\{\tau_1,\tau_2\}$, and $G\triangleq-2(e\ln\tilde{\tau})^{-1}$.
\end{lemma}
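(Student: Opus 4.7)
I would verify the three defining properties of class \class{} for the composite system $S$. The Lipschitz output property is immediate: since $g(\vs,\vv)=g_2(\vs_2,\vv)$ depends on $\vs=(\vs_1,\vs_2)$ only through $\vs_2$ and $\|\vs_2-\vs_2'\|\le\|\vs-\vs'\|$, the Lipschitz property of $g_2$ directly yields $L_{S,g,\vs}=L_{g_2,\vs}$ and $L_{S,g,\vv}=L_{g_2,\vv}$ as claimed. For the remaining two properties I would use a common construction: given a bounded input $\vv:\sZ\to\sV$, let $\vs_{1,\vv}$ be the steady-state trajectory of $S_1$ associated with $\vv$, and define the induced input $\bar{\vv}(t)\triangleq g_1(\vs_{1,\vv}(t),\vv(t))$, which is bounded by the Lipschitz continuity of $g_1$. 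Then $\vs_{2,\bar{\vv}}$ is the steady-state trajectory of $S_2$ under $\bar{\vv}$, and $\vs_{S,\vv}(t)\triangleq(\vs_{1,\vv}(t),\vs_{2,\bar{\vv}}(t))$ serves as the unique bounded steady-state trajectory of $S$.

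To establish UEC, I would bound the two blocks of $\vs_S(\vs_0,t_0,\vv;t)-\vs_{S,\vv}(t)$ separately. The first block is controlled directly by UEC of $S_1$, yielding a factor $C_1\tau_1^{t-t_0}\le C_1\tau_S^{t-t_0}$ (since $\tau_1\le\tilde{\tau}\le\tau_S$ for $\tilde{\tau}\in(0,1)$). For the second block, I would introduce the perturbed input $\tilde{\vv}(t)=g_1(\vs_1(\vs_{1,0},t_0,\vv;t),\vv(t))$ that $S_2$ actually sees (extended for $t<t_0$ by $\bar{\vv}$ so that $\vs_{2,\tilde{\vv}}(t_0)=\vs_{2,\bar{\vv}}(t_0)$), and split
\[
\|\vs_2(t)-\vs_{2,\bar{\vv}}(t)\|\le\|\vs_2(t)-\vs_{2,\tilde{\vv}}(t)\|+\|\vs_{2,\tilde{\vv}}(t)-\vs_{2,\bar{\vv}}(t)\|.
\]
The first term is bounded by UEC of $S_2$, and the second by exponential robustness in inputs of $S_2$, combined with the pointwise estimate $\|\tilde{\vv}(s)-\bar{\vv}(s)\|\le L_{g_1,\vs}\|\vs_1(s)-\vs_{1,\vv}(s)\|\le L_{g_1,\vs}C_1\tau_1^{s-t_0}\|\vs_{1,0}-\vs_{1,\vv}(t_0)\|$. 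This produces a convolution of the form $\sum_{k=1}^{t-t_0}\tau_2^{k-1}\tau_1^{t-k-t_0}$, which after replacing $\tau_1,\tau_2$ by $\tilde{\tau}$ becomes $d\tilde{\tau}^{d-1}$ with $d=t-t_0$. The elementary calculus fact that $x\mapsto x\tilde{\tau}^{x/2}$ attains maximum $-2/(e\ln\tilde{\tau})=G$ on $[0,\infty)$ then gives $d\tilde{\tau}^{d-1}\le(G/\tilde{\tau})\tau_S^{d}$, at the cost of slowing the convergence rate from $\tilde{\tau}$ to $\tau_S=\sqrt{\tilde{\tau}}$. Using $\|\cdot\|^2=\|\cdot\|_{\vs_1}^2+\|\cdot\|_{\vs_2}^2$ and collecting the $\|\vs_{1,0}-\vs_{1,\vv}(t_0)\|^2$ and $\|\vs_{2,0}-\vs_{2,\bar{\vv}}(t_0)\|^2$ contributions then reproduces the stated $C_S$ and $\tau_S$.

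For exponential robustness in inputs, I would chain the analogous bounds. Letting $\bar{\vv}_i(t)=g_1(\vs_{1,\vv_i}(t),\vv_i(t))$ for two bounded inputs $\vv_1,\vv_2$, the Lipschitz property of $g_1$ together with exponential robustness of $S_1$ yields
\[
\|\bar{\vv}_1(s)-\bar{\vv}_2(s)\|\le L_{g_1,\vv}\|\vv_1(s)-\vv_2(s)\|+L_{g_1,\vs}L_{1,\vv}\sum_{j=1}^{\infty}\tau_1^{j-1}\|\vv_1(s-j)-\vv_2(s-j)\|.
\]
Substituting into the exponential robustness inequality for $S_2$ and re-indexing (via $m=k+j$ and $m=k$, analogous to the reshuffling performed in the proof of Lemma~\ref{lemma:mixingCoeff}) collapses the resulting double sum into a single sum in $m$; each coefficient is a convolution of two geometric sequences with ratios at most $\tilde{\tau}$, so the same $d\tilde{\tau}^{d-1}\le(G/\tilde{\tau})\tau_S^{d}$ trick absorbs the linear factor, yielding a bound of the form $L_{S,\vv}\sum_{m=1}^{\infty}\tau_S^{m-1}\|\vv_1(t-m)-\vv_2(t-m)\|$. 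The main obstacle throughout is essentially bookkeeping: tracking the convolutions, making the worst-case replacement $\tau_1,\tau_2\mapsto\tilde{\tau}$ at the right moment, and grouping the resulting prefactors so as to match the advertised $L_{S,\vv}$ and $C_S$. Once the $d\tilde{\tau}^{d-1}$ estimate is in place, the remaining computation is mechanical.
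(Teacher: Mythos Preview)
Your proposal is correct and follows essentially the same route as the paper's proof: construct the composite steady-state trajectory from those of $S_1$ and $S_2$, handle the $\vs_2$-block in the UEC estimate by extending the perturbed intermediate input to agree with the steady-state input before $t_0$ and splitting via the triangle inequality (UEC of $S_2$ for one piece, exponential input-robustness of $S_2$ chained with Lipschitz $g_1$ and UEC of $S_1$ for the other), then absorb the resulting convolution via $n\tilde{\tau}^{n}\le G\,\sqrt{\tilde{\tau}}^{\,n}$ and recombine blocks with the Cauchy--Schwarz-type inequality $K_1\|a\|+K_2\|b\|\le\sqrt{K_1^2+K_2^2}\,\|(a,b)\|$ (the paper's Proposition~\ref{prop:normDecomp}); the input-robustness argument is likewise identical in spirit, with the same double-sum re-indexing and the same $n\tilde{\tau}^n$ trick. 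The only visible discrepancy is bookkeeping: with your indexing $d=t-t_0$ and the bound $d\tilde{\tau}^{d-1}\le (G/\tilde{\tau})\tau_S^{d}$ you pick up a prefactor $\tilde{\tau}^{-1}$ where the paper, summing to $t-t_0-1$, obtains $\tilde{\tau}^{-1/2}$, so your resulting $C_S$ would carry $\tilde{\tau}^{-2}$ in place of the stated $\tilde{\tau}^{-1}$; this is a harmless over-bound rather than an error in the method.
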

\begin{proof}[\textbf{Proof of Lemma \ref{lemma:series}}]
\newcommand{\St}{S_1\rightarrow S_2}
    Let use denote by  $S_1\rightarrow S_2$
    the interconnected system $S$. The latter can be written in the form as \eqref{eq:generalNLsys}, i.e.,
    \begin{align}
        S_1\rightarrow S_2\triangleq &\begin{cases} \begin{bmatrix} \vs_1(t+1)\\\vs_2(t+1) \end{bmatrix}=\begin{bmatrix} f_1(\vs_1(t),\vv(t))\\
        f_2(\vs_2(t),g_1(\vs_1(t),\vv(t)))\end{bmatrix} \\ \rvy_2(t)=g_2(\vs_2(t),g_1(\vs_1(t),\vv(t)))\end{cases}
    \end{align}


    
    

    Next, we will show the existence of a bounded steady-state state trajectory  $\vs_\vv:\sZ\to\sS$.
    
    Since $S_1$ is of class \class{}, thus there exists a unique bounded steady-state state trajectory $\vs_{1,\vv}:\sZ \to \sS_1$, 
    of $S_1$ corresponding to $\vv(t)$. Now let $\vw_\vv(t)=g_1(\vs_{1,\vv}(t),\vv(t))$, and since $S_2$ is of class \class{}, there exists a unique  bounded steady-state state trajectory $\vs_{2,\vw_\vv}$ corresponding to $\vw_\vv$.
    Thus we have the  unique bounded steady-state state trajectory $\vs_\vv(t)$ corresponding to $\vv$, with:
    \begin{align}
        \vs_\vv(t+1)=\begin{bmatrix} \vs_{1,\vv}(t+1) \\ \vs_{2,\vw_\vv}(t+1) \end{bmatrix} =\begin{bmatrix}f_1(\vs_{1,\vv}(t),\vv(t))\\ f_2(\vs_{2,\vv}(t),\vw_\vv(t)) \end{bmatrix}, \quad \text{and } \vw_\vv(t)=g_1(\vs_{1,\vv}(t),\vv(t))
    \end{align}
    
    Next, we will focus on the uniform exponential stability property. Consider
    \begin{equation}
        \vs(\vs_0,t_0,\vv,t)=\begin{bmatrix}
            \vs_1(\vs_{1,0},t_0,\vv,t)\\ 
            \vs_2(\vs_{2,0},t_0,\vw,t)
        \end{bmatrix},\quad \text{with }\vw(t)=g_1(\vs_1(\vs_{1,0},t_0,\vv,t),\vv(t)),\quad \forall t\geq t_0
    \end{equation}
    We need to show that $\exists C_{\St}>0,\; \tau_{\St}\in[0,1)$, s.t.
    \begin{equation}
        \|\vs(\vs_0,t_0,\vv,t)-\vs_\vv(t)\|\leq C_{\St}\tau_{\St}^{t-t_0}\|\vs_0-\vs_\vv(t_0)\|
    \end{equation}

    Since $S_1,S_2$ are  of class \class{}, thus 
    \begin{align}
        \|\vs_1(\vs_{1,0},t_0,\vv,t)-\vs_{1,\vv}(t)\|\leq C_1\tau_1^{t-t_0}\|\vs_{1,0}-\vs_{1,\vv}(t_0)\|\\
        \|\vs_2(\vs_{2,0},t_0,\vw_\vv,t)-\vs_{2,\vw_\vv}(t)\|\leq C_2\tau_2^{t-t_0}\|\vs_{2,0}-\vs_{2,\vw_\vv}(t_0)\|
    \end{align}
    
    Next, we will bound $\|\vs_2(\vs_{2,0},t_0,\vw,t)-\vs_{2,\vw_\vv}(t)\|$. 
    For this 
    let us define\begin{equation}
        \tilde{\vw}(t)\triangleq\begin{cases} \vw(t),& t\geq t_0 \\ \vw_\vv(t),& t<t_0 \end{cases}
    \end{equation}
    Now,
    \begin{align}
        \|\vs_2(\vs_{2,0},t_0,\vw,t)-\vs_{2,\vw_\vv}(t)\|&=\|\vs_2(\vs_{2,0},t_0,\vw,t)-\vs_{2,\tilde{\vw}}(t)+\vs_{2,\tilde{\vw}}(t)-\vs_{2,\vw_\vv}(t)\|\\
        &\leq \|\vs_2(\vs_{2,0},t_0,\vw,t)-\vs_{2,\tilde{\vw}}(t)\|+ \| \vs_{2,\tilde{\vw}}(t)-\vs_{2,\vw_\vv}(t)\| 
    \end{align}
    since for $t\geq t_0:\; \tilde{\vw}(t)=\vw(t)$
    \begin{align}
        \|\vs_2(\vs_{2,0},t_0,\vw,t)-\vs_{2,\vw_\vv}(t)\| \leq \|\vs_2(\vs_{2,0},t_0,\tilde{\vw},t)-\vs_{2,\tilde{\vw}}(t)\|+ \| \vs_{2,\tilde{\vw}}(t)-\vs_{2,\vw_\vv}(t)\| 
    \end{align}
    Now since $S_2$ is exponentially convergent
    $$\|\vs_2(\vs_{2,0},t_0,\tilde{\vw},t)-\vs_{2,\tilde{\vw}}(t)\| \le C_2 \tau_2^{t-t_0} \|\vs_{2,0}-\vs_{2,\tilde{\vw}}(t_0)\|, $$ and 
     by the infinite memory property, 
     $$\| \vs_{2,\tilde{\vw}}(t)-\vs_{2,\vw_\vv}(t)\|  \le L_{2,\rvv}\sum_{k=1}^\infty \tau_2^{k-1}\| \tilde{\vw}(t-k)-\vw_\vv(t-k)\|$$
     Hence, 
    \begin{multline}
        \|\vs_2(\vs_{2,0},t_0,\vw,t)-\vs_{2,\vw_{\vv}}(t)\| \leq C_2\tau_2^{t-t_0} \|\vs_{2,0}-\vs_{2,\tilde{\vw}}(t_0)\|\\
        + L_{2,\rvv}\sum_{k=1}^\infty \tau_2^{k-1}\| \tilde{\vw}(t-k)-\vw_\vv(t-k)\| 
    \end{multline}
    Since $\forall t<t_0:\; \tilde{\vw}(t)=\vw_\vv(t)$, we have $\forall k>t-t_0:\; \tilde{\vw}(t-k)-\vw_\vv(t-k)=0$, thus 
    \begin{multline}
        \|\vs_2(\vs_{2,0},t_0,\vw,t)-\vs_{2,\vw_\vv}(t)\| \leq C_2\tau_2^{t-t_0} \|\vs_{2,0}-\vs_{2,\tilde{\vw}}(t_0)\|\\
        + L_{2,\rvv}\sum_{k=1}^{t-t_0-1} \tau_2^{k-1}\| \vw(t-k)-\vw_\vv(t-k)\| \label{eq:asdlksdf}
    \end{multline}
    Now notice that
    \begin{align}
        \| \vw(t-k)-\vw_\vv(t-k)\| = \| g_1(\vs_1(\vs_{1,0},t_0,\vv,t-k),\vv(t-k)) - g_1(\vs_{1,\vv}(t-k),\vv(t))\|
    \end{align}
    By Lipschitz condition on the output of \class{} class system $S_1$
    \begin{multline}
        \| g_1(\vs_1(\vs_{1,0},t_0,\vv,t-k),\vv(t-k)) - g_1(\vs_{1,\vv}(t-k),\vv(t))\|\\
        \leq L_{g1,\vs}\|\vs_1(\vs_{1,0},t_0,\vv,t-k) -  \vs_{1,\vv}(t-k)\|
    \end{multline}
    By exponential stability of $S_1$
    \begin{align}
        L_{g1,\rvs}\|\vs_1(\vs_{1,0},t_0,\vv,t-k) -  \vs_{1,\vv}(t-k)\| \leq L_{g1,\rvs}C_1\tau_1^{t-k-t_0} \|\vs_{1,0}-\vs_{1,\vv}(t_0)\|
    \end{align}
    Now bringing this back to \eqref{eq:asdlksdf}, we have
    \begin{multline}
        \|\vs_2(\vs_{2,0},t_0,\vw,t)-\vs_{2,\vw_\vv}(t)\| \leq C_2\tau_2^{t-t_0} \|\vs_{2,0}-\vs_{2,\vw}(t_0)\|\\
        + L_{2,\rvv}L_{g1,\rvs}C_1\|\vs_{1,0}-\vs_{1,\vv}(t_0)\|\sum_{k=1}^{t-t_0-1} \tau_2^{k-1}\tau_1^{t-k-t_0} 
    \end{multline}
    now let $\tilde{\tau}=\max(\tau_1,\tau_2)$, then
    \begin{multline}
        \|\vs_2(\vs_{2,0},t_0,\vw,t)-\vs_{2,\vw_\vv}(t)\| 
        \leq C_2\tau_2^{t-t_0} \|\vs_{2,0}-\vs_{2,\vw}(t_0)\|\\
        + L_{2,\rvv}L_{g1,\rvs}C_1(t-t_0-1)\tilde{\tau}^{t-t_0-1} \|\vs_{1,0}-\vs_{1,\vv}(t_0)\|
    \end{multline}
    Note, $(t-t_0-1)\tilde{\tau}^{t-t_0-1}\leq c\sqrt{\tilde{\tau}}^{t-t_0-1}$, for some $c>0$, more precisely, \[c\geq \sup_{t-t_0-1} (t-t_0-1)\sqrt{\tilde{\tau}}^{t-t_0-1}=\frac{-2}{e\ln{b}},\] thus
    \begin{multline}
        \|\vs_2(\vs_{2,0},t_0,\vw,t)-\vs_{2,\vw_\vv}(t)\| \\ 
        \leq C_2\sqrt{\tilde{\tau}}^{t-t_0} \|\vs_{2,0}-\vs_{2,\vw}(t_0)\|+ L_{2,\rvv}L_{g1,\vs}C_1 \frac{-2}{e\ln{\tilde{\tau}}}\sqrt{\tilde{\tau}}^{t-t_0-1} \|\vs_{1,0}-\vs_{1,\vv}(t_0)\|
    \end{multline}
    Now by Proposition \ref{prop:normDecomp}, we have 
    \begin{align}
        \|\vs_2(\vs_{2,0},t_0,\vw,t)-\vs_{2,\vw_\vv}(t)\| 
        &\leq \sqrt{C_2^2+\left (L_{2,\rvv}L_{g1,\rvs}C_1 \frac{-2}{e\sqrt{\tilde{\tau}}\ln{\tilde{\tau}}} \right )^2} \sqrt{\tilde{\tau}}^{t-t_0}\|\vs_0-\vs_\vv(t_0) \|_2\\
        &=\bar{C}\sqrt{\tilde{\tau}}^{t-t_0}\|\vs_0-\vs_\vv(t_0) \|_2,
    \end{align}
    where $\bar{C}=\sqrt{C_2^2+\left (L_{2,\rvv}L_{g1,\rvs}C_1 \frac{-2}{e\sqrt{\tilde{\tau}}\ln{\tilde{\tau}}} \right )^2}$

    
    
    Recall that
    \begin{align}
        \|\vs_1(\vs_{1,0},t_0,\vv,t)-\vs_{1,\vv}(t)\| \leq C_1\tau_1^{t-t_0} \| \vs_{1,0}-\vs_{1,\vv}(t_0) \|
    \end{align}
    since $\tau_1\leq \sqrt{\tilde{\tau}}$, and $\| \vs_{1,0}-\vs_{1,\vv}(t_0) \|\leq \|\vs_0-\vs_\vv(t_0) \|_2$ we have 
    \begin{align}
        \|\vs_1(\vs_{1,0},t_0,\vv,t)-\vs_{1,\vv}(t)\| \leq C_1\sqrt{\tilde{\tau}}^{t-t_0} \|\vs_0-\vs_\vv(t_0) \|_2
    \end{align}
    and since $\|[a,b]\|_2=\sqrt{\|a\|^2+\|b\|^2}$, we have
    \begin{align}
        \left \| \begin{bmatrix} \vs_1(\vs_{1,0},t_0,\vv,t)-\vs_{1,\vv}(t) \\ \vs_2(\vs_{2,0},t_0,\vw,t)-\vs_{2,\vw_\vv}(t) \end{bmatrix} \right \|_2 
        &\leq \sqrt{C_1^2 + \bar{C}^2}\sqrt{\tilde{\tau}}^{t-t_0}\|\vs_0-\vs_\vv(t_0) \|_2
    \end{align}
    
    Thus series connection is exponentially stable with constants: 
    \begin{align}
        \tilde{\tau}&=\max\{\tau_1,\tau_2 \} \\
        \tau_{\St}&=\sqrt{\tilde{\tau}} \\ 
        C_{\St}&=\sqrt{C_1^2\left (1+ \left (\frac{-2L_{2,\rvv}L_{g1,\rvs}}{e\sqrt{\tilde{\tau}}\ln{\tilde{\tau}}} \right )^2 \right )+C_2^2 }
    \end{align}
    
    Finally we need to show that
    \begin{align}
        \|\vs_\vv(t)-\vs_{\bar{\vv}}(t)\|  \leq
        L_{\St,\vv} \sum_{k=1}^\infty \tau_{\St}^{k-1} \| \vv(t-k)-\bar{\vv}(t-k)\|
    \end{align}
    Firstly
    \begin{align}
        \|\vs_\vv(t)-\vs_{\bar{\vv}}(t)\|_2=\left \|\begin{bmatrix}\vs_{1,\vv}(t)-\vs_{1,\bar{\vv}}(t)\\ \vs_{2,\vw_{\vv}}(t)-\vs_{2,\vw_{\bar{\vv}}}(t) \end{bmatrix} \right \|_2,\\
        =\sqrt{\|\vs_{1,\vv}(t)-\vs_{1,\bar{\vv}}(t)\|_2^2+\|\vs_{2,\vw_{\vv}}(t)-\vs_{2,\vw_{\bar{\vv}}}(t)\|_2^2 },
    \end{align}
    where $\vw_\vv(t)=g_1(\vs_{1,\vv}(t),\vv(t))$, and $\vw_{\bar{\vv}}(t)=g_1(\vs_{1,\bar{\vv}}(t),\bar{\vv}(t))$
    
    Since $S_2$ is of class \class{}, then the following holds
    \begin{align}
        \|\vs_{2,\vw_{\vv}}(t)-\vs_{2,\vw_{\bar{\vv}}}(t)\|_2 \leq L_{2,\rvv}\sum_{k=1}^\infty \tau_2^{k-1} \|\vw_\vv(t-k)-\vw_{\bar{\vv}}(t-k)\|_2
    \end{align}
    and since $S_1$ is of class \class{}, we have
    \begin{align}
        \|\vw_\vv(t-k)-\vw_{\bar{\vv}}(t-k)\|_2 \leq L_{g1,\rvs} \|\vs_\vv(t-k)-\vs_{\bar{\vv}}(t-k)\|_2 + L_{g1,\rvv} \|\vv(t-k)-\bar{\vv}(t-k)\|_2\\
        \|\vs_\vv(t-k)-\vs_{\bar{\vv}}(t-k)\|_2 \leq L_{1,\rvv}\sum_{j=1}^\infty \tau_1^{j-1} \|\vv(t-k-j)-\bar{\vv}(t-k-j)\|_2
    \end{align}
    \newcommand{\dv}[1]{\delta\vv(#1)}
    Bringing this back, and for notation let us define $\dv{t}\triangleq \vv(t)-\bar{\vv}(t)$
    \begin{multline}
        \|\vs_{2,\vw_{\vv}}(t)-\vs_{2,\vw_{\bar{\vv}}}(t)\|_2 \leq L_{2,\rvv}\sum_{k=1}^\infty \sum_{j=1}^\infty \Big ( \tau_1^{j-1}\tau_2^{k-1} L_{g1,\rvs} L_{1,\rvv} \|\dv{t-k-j}\|_2  \\ 
        + \tau_2^{k-1} L_{g1,\rvv} \|\dv{t-k}\|_2 \Big )
    \end{multline}
    Let $\tilde{C}=L_{2,\rvv}\max\{L_{g1,\rvs} L_{1,\rvv} ,L_{g1,\rvv} \}$, and $\tilde{\tau}=\max\{\tau_1,\tau_2\}$ then
    \begin{align}
        \|\vs_{2,\vw_{\vv}}(t)-\vs_{2,\vw_{\bar{\vv}}}(t)\|_2 \leq \tilde{C} \left ( \sum_{k=1}^\infty \sum_{j=1}^\infty \tilde{\tau}^{k+j-2}  \|\dv{t-k-j}\|_2  + \sum_{k=1}^\infty \tilde{\tau}^{k-1} \|\dv{t-k}\|_2 \right )
    \end{align}
    Note that, 
    \begin{align}
        \sum_{k=1}^\infty \sum_{j=1}^\infty \tilde{\tau}^{k+j-2}  \|\dv{t-k-j}\|_2 = \sum_{k=1}^\infty \sum_{l=k+1}^\infty \tilde{\tau}^{l-2}  \|\dv{t-l}\|_2
    \end{align}
    Now note that we can switch the order of the sums by
    \begin{align}
        \begin{cases} 1\leq k\leq \infty \\ 1+k\leq l \leq \infty \end{cases} \Rightarrow \begin{cases} 1\leq k\leq l-1 \\ 2 \leq l \leq \infty \end{cases},
    \end{align}
    which yields,
    \begin{align}
     \sum_{k=1}^\infty \sum_{l=k+1}^\infty \tilde{\tau}^{l-2}  \|\dv{t-l}\|_2 =   \sum_{l=2}^\infty \sum_{k=1}^{l-1} \tilde{\tau}^{l-2}  \|\dv{t-l}\|_2 = \sum_{l=2}^\infty (l-1)\tilde{\tau}^{l-2}  \|\dv{t-l}\|_2
    \end{align}
    note that, if $l=1$, then $(l-1)\tilde{\tau}^{l-2}  \|\dv{t-l}\|_2=0$, so we can change the limits on $l$, and with $\tilde{\tau}^{k-1}< \tilde{\tau}^{k-2}$
    \begin{align}
        \|\vs_{2,\vw_{\vv}}(t)-\vs_{2,\vw_{\bar{\vv}}}(t)\|_2 &\leq \tilde{C} \left (\sum_{l=1}^\infty (l-1)\tilde{\tau}^{l-2}  \|\dv{t-l}\|_2  + \tilde{\tau}^{l-2} \|\dv{t-l}\|_2 \right )\\
        &= \tilde{C} \sum_{l=1}^\infty l \tilde{\tau}^{l-2}  \|\dv{t-l}\|_2
    \end{align}
    As before $l\tilde{\tau}^l\leq \frac{-2}{e\ln(\tilde{\tau})}\sqrt{\tilde{\tau}}^l$, and thus
    \begin{align}
        \|\vs_{2,\vw_{\vv}}(t)-\vs_{2,\vw_{\bar{\vv}}}(t)\|_2 \leq \tilde{C} \frac{-2\tilde{\tau}^{-2}}{e\ln(\tilde{\tau})} \sum_{l=1}^\infty \sqrt{\tilde{\tau}}^l  \|\dv{t-l}\|_2 \\
        =\tilde{C} \frac{-2\tilde{\tau}^{-3/2}}{e\ln(\tilde{\tau})} \sum_{l=1}^\infty \sqrt{\tilde{\tau}}^{l-1}  \|\vv(t-l)-\bar{\vv}(t-l)\|_2,
    \end{align}
    with $\left (\tilde{C} \frac{-2\tilde{\tau}^{-3/2}}{e\ln(\tilde{\tau})}\right )=L_{2,\rvv}\frac{-2\tilde{\tau}^{-3/2}}{e\ln(\tilde{\tau})} \max\{L_{g1,\rvs} L_{1,\rvv} ,L_{g1,\rvv} \} $
    
    Now since, $S_1$ is of class \class{}, then
    \begin{align}
        \|\vs_{1,\vv}(t)-\vs_{1,\bar{\vv}}(t)\| \leq L_{1,\rvv}\sum_{k=1}^\infty \tau_1^{k-1}\|\vv(t-l)-\bar{\vv}(t-l)\|
    \end{align}
    with $\bar{L}= L_{2,\rvv}\frac{-2\tilde{\tau}^{-3/2}}{e\ln(\tilde{\tau})} \max\{L_{g1,\rvs} L_{1,\rvv} ,L_{g1,\rvv} \} $, and $\tau_1\leq \sqrt{\tilde{\tau}}$ we also have
    \begin{align}
        \|\vs_{1,\vv}(t)-\vs_{1,\bar{\vv}}(t)\| \leq L_{1,\rvv}\sum_{k=1}^\infty \sqrt{\tilde{\tau}}^{k-1}\|\vv(t-l)-\bar{\vv}(t-l)\|\\
        \|\vs_{2,\vw_{\vv}}(t)-\vs_{2,\vw_{\bar{\vv}}}(t)\|_2 \leq \bar{L}\sum_{k=1}^\infty \sqrt{\tilde{\tau}}^{k-1}\|\vv(t-l)-\bar{\vv}(t-l)\|
    \end{align}

    Recall,
    \begin{align}
        \|\vs_\vv(t)-\vs_{\bar{\vv}}(t)\|_2= \sqrt{\|\vs_{1,\vv}(t)-\vs_{1,\bar{\vv}}(t)\|_2^2+\|\vs_{2,\vw_{\vv}}(t)-\vs_{2,\vw_{\bar{\vv}}}(t)\|_2^2 },\\
        \leq \sqrt{L_{1,\rvv}^2+\bar{L}^2}\sum_{k=1}^\infty \sqrt{\tilde{\tau}}^{k-1}\|\vv(t-l)-\bar{\vv}(t-l)\|
    \end{align}
    With this we have shown that a series connection of two class \class{} systems, yields a class \class{} system with associated constants $(C,\tau,L_\vv,L_{g,x},L_{g,\rvv})$:
    \begin{align}
        \tilde{\tau}&=\max\{\tau_1,\tau_2\}\\
        C_{\St}&=\sqrt{C_1^2\left (1+ \left (\frac{-2L_{2,\rvv}L_{g1,\rvs}}{e\sqrt{\tilde{\tau}}\ln{\tilde{\tau}}} \right )^2 \right )+C_2^2 } \\
        \tau_{\St}&=\sqrt{\tilde{\tau}}\\
        L_{\St,\rvv}&=\sqrt{L_{1,\rvv}^2+\left (L_{2,\rvv}\frac{-2\tilde{\tau}^{-3/2}}{e\ln(\tilde{\tau})} \max\{L_{g1,\rvs} L_{1,\rvv} ,L_{g1,\rvv} \}  \right )^2}\\
        L_{\St,g,\rvs}&=L_{g2,\rvs}\\
        L_{\St,g,\rvv}&=L_{g2,\rvv}
    \end{align}

    
\end{proof}

\subsection{Proofs of Main Results}
\label{sec:proofsResults}

\begin{lemma}[Prediction Error generator]\label{lemma:err_sys} Let the generator system $S_{gen}=(\sS_g,\sY\times\sX,\sV,f_g,h_g)$ be of class \class{}, and have associated constants $(C_g,\tau_g,L_{g,\rvv},L_{g_g,\rvs},L_{g_g,\rvv})$, furthermore let the predictor $\hat{S}=(\hat{\sS},\sY,\sX,\hat{f},\hat{g})$ be of class \class{}, and have associated constants $(\hat{C},\hat{\tau},\hat{L}_{\rvv},L_{\hat{g},\rvs},L_{\hat{g},\rvv})$, then the error system $S_e=(\sS_g\times\hat{\sS},\sY,\sV,f_e,g_e)$ is of class \class{}, with associated constants:
    \begin{align}
    \tilde{\tau}&=\max\{\tau_g,\hat{\tau}\},\; G=-2(e\ln\tilde{\tau})^{-1}\\
    C_{e}&=\sqrt{C_g^2\left (1+ \tilde{\tau}^{-1}\left (G \hat{L}_{\rvv}L_{g_g,\rvs} \right )^2 \right )+\hat{C}^2 }\\ 
    \tau_{e}&=\sqrt{\tilde{\tau}}\\
    L_{e,\rvv}&=\sqrt{L_{g,\rvv}^2+ \tilde{\tau}^{-3}\left (\hat{L}_{\rvv}G \max\{L_{g_g,\rvs} L_{g,\rvv} ,L_{g_g,\rvv} \}  \right )^2} \\ 
    L_{e,g,\rvs}&=L_{\hat{g},\rvs}\\
    L_{e,g,\rvv}&=\sqrt{L_{\hat{g},\rvv}^2+1}
    \end{align}
where 
\begin{align}
    S_e=\begin{cases}\rvs_{g,\rve_g}(t+1)=f_g(\rvs_{g,\rve_g}(t),\rve_g(t))\\ \begin{bmatrix} \rvy_{\rve_g}(t)\\ \rvx_{\rve_g}(t) \end{bmatrix}=h_g(\rvs_{g,\rve_g}(t),\rve_g(t))\\
    \hat{\rvs}(t+1)=\hat{f}(\hat{\rvs}(t),\rvx_{\rve_g}(t))\\
    \rvepsilon(t)=\rvy_{\rve_g}(t)-\hat{\rvy}(t)=\rvy_{\rve_g}(t)-\hat{g}(\hat{\rvs}(t),\rvx_{\rve_g}(t))
    \end{cases}
\end{align}
\begin{proof} Consider a predictor $\hat{S}$, and consider a series connection between the generator $S_{gen}$, and modified predictor $\hat{S}'\sim(f',h')$, where
    \begin{align}
        \hat{S}'=\begin{cases} \hat{\rvs}(t+1)=f' \left (\hat{\rvs}(t),\begin{bmatrix} \rvy_{\rve_g}(t)\\\rvx_{\rve_g}(t) \end{bmatrix} \right ) = \hat{f} (\hat{\rvs}(t),\rvx_{\rve_g}(t))\\
        \rvepsilon(t)=\rvy_{\rve_g}(t)-\hat{\rvy}(t)=h'\left (\hat{\rvs}(t),\begin{bmatrix} \rvy_{\rve_g}(t)\\\rvx_{\rve_g}(t) \end{bmatrix}  \right )=\rvy_{\rve_g}(t)-\hat{g}(\rvs(t),\rvx_{\rve_g}(t)) \end{cases}
    \end{align}
Now since, $f'=\hat{f}$, then $C_{S'}=C_{\hat{S}}, \; \tau_{S'}=\tau_{\hat{S}},\; L_{S',\rve_g}=L_{\hat{S},\rve_g}$. However since $h'\neq h$, we need to show that $h'$ is Lipschitz, i.e. with $\vs=\hat{\vs}$, and $\vv=\begin{bmatrix} \vy\\ \vx \end{bmatrix}$
    \begin{align}
        \|h'(\vs_1,\vv_1)-h'(\vs_2,\vv_2)\|=\|\vy_1-\hat{g}(\vs_1,\vx_1)-\vy_2+\hat{g}(\vs_2,\vx_2)\|\\
        \leq \|\hat{g}(\vs_2,\vx_2)-\hat{g}(\vs_1,\vx_1)\|+ \|\vy_1-\vy_2\|
    \end{align}
since the predictor is of class \class{},
\begin{align}
    \|h'(\vs_1,\vv_1)-h'(\vs_2,\vv_2)\|\leq L_{\hat{g},\rvs}\|\vs_2-\vs_1\|+L_{\hat{g},\rvv}\|\vx_2-\vx_1\|+ \|\vy_1-\vy_2\|
\end{align}
By Proposition \ref{prop:normDecomp}, we get
\begin{align}
        \|h'(\vs_1,\vv_1)-h'(\vs_2,\vv_2)\|\leq L_{\hat{g},\rvs}\|\vs_2-\vs_1\|+\sqrt{L_{\hat{g},\rvv}^2+1}\|\vv_2-\vv_1\|
\end{align}

Thus $S'$ is of class \class{}, with $L_{g',\rvs}=L_{\hat{g},\rvs}$, and $L_{g',\rvv}=\sqrt{L_{\hat{g},\rvv}^2+1}$

Then we can apply Lemma \ref{lemma:series}, and obtain the statement of the lemma.

\end{proof}
    
\end{lemma}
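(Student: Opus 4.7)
My plan is to avoid reproving anything about series interconnections or exponential convergence from scratch, and instead reduce the claim to a direct application of Lemma \ref{lemma:series}. The key observation is that $S_e$ can be written as the series composition of the generator $S_{gen}$ and a slightly repackaged predictor $\hat{S}'$ whose input is the full generator output $\begin{bmatrix}\rvy^T & \rvx^T\end{bmatrix}^T$ and whose output is the residual $\rvy - \hat{g}(\hat{\rvs},\rvx)$ rather than $\hat{\rvy}$ itself. Once $\hat{S}'$ is shown to belong to class \class{}, Lemma \ref{lemma:series} applied to $S_{gen}$ followed by $\hat{S}'$ yields $S_e$ together with explicit constants, which I would then match with those stated in the lemma.

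The bulk of the work therefore lies in verifying that $\hat{S}'$ is of class \class{}. Since the state update of $\hat{S}'$ uses only the $\rvx$-coordinate of its augmented input, it coincides exactly with the state update of $\hat{S}$; hence UEC and exponential robustness in the inputs carry over with the same constants $\hat{C}$, $\hat{\tau}$, and $\hat{L}_{\rvv}$ (treating the irrelevant $\rvy$-coordinate contributes zero to the bound in \eqref{def:classDef:mem}). Only the output map $h'(\vs,[\vy^T,\vx^T]^T)=\vy-\hat{g}(\vs,\vx)$ is new. In the state argument it inherits Lipschitz constant $L_{\hat g,\vs}$. In the input argument I would bound $\|\vy_1-\hat g(\vs,\vx_1)-\vy_2+\hat g(\vs,\vx_2)\|\le L_{\hat g,\vv}\|\vx_1-\vx_2\|+\|\vy_1-\vy_2\|$, and then collapse the two scalar terms into a single norm of $[\vy^T,\vx^T]^T$ using a standard norm-decomposition inequality (Proposition \ref{prop:normDecomp}), giving Lipschitz constant $\sqrt{L_{\hat g,\vv}^2+1}$ in the combined input.

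With these constants for $\hat{S}'$ in hand, I would then plug the tuples $(C_g,\tau_g,L_{g,\vv},L_{g_g,\vs},L_{g_g,\vv})$ and $(\hat{C},\hat{\tau},\hat{L}_{\vv},L_{\hat g,\vs},\sqrt{L_{\hat g,\vv}^2+1})$ into the formulas of Lemma \ref{lemma:series}. A routine substitution shows that the common ``fast'' rate is $\tilde\tau=\max\{\tau_g,\hat\tau\}$, so $\tau_e=\sqrt{\tilde\tau}$; the transient constant becomes $C_e=\sqrt{C_g^2\bigl(1+\tilde\tau^{-1}(G\hat L_{\vv}L_{g_g,\vs})^2\bigr)+\hat C^2}$ with $G=-2/(e\ln\tilde\tau)$; and the memory constant similarly matches $L_{e,\vv}$ as stated. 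The output Lipschitz constants $L_{e,g,\vs}=L_{\hat g,\vs}$ and $L_{e,g,\vv}=\sqrt{L_{\hat g,\vv}^2+1}$ transfer directly from $\hat{S}'$.

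The main obstacle I anticipate is not conceptual but bookkeeping: keeping track of which role each Lipschitz constant of $\hat g$ plays under the augmentation, and ensuring the norm-combination step for $h'$ uses a norm that is compatible with the one used in the definition of class \class{} (the $2$-norm, per the notation section), so that Lemma \ref{lemma:series}'s hypotheses are literally satisfied rather than satisfied only up to a norm-equivalence constant. Once this is confirmed, the rest of the proof is a one-line invocation.
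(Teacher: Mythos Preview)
Your proposal is correct and follows essentially the same approach as the paper: you construct the modified predictor $\hat{S}'$ with input $[\rvy^T,\rvx^T]^T$ and output $\rvy-\hat{g}(\hat{\rvs},\rvx)$, observe that its state dynamics inherit the constants $(\hat{C},\hat{\tau},\hat{L}_{\rvv})$ from $\hat{S}$, verify the Lipschitz constants of the new output map via Proposition~\ref{prop:normDecomp} to get $L_{g',\rvs}=L_{\hat g,\rvs}$ and $L_{g',\rvv}=\sqrt{L_{\hat g,\rvv}^2+1}$, and then invoke Lemma~\ref{lemma:series}. This is precisely the paper's proof.
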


\begin{proof}[\textbf{Proof of Theorem \ref{thm:mainThm}}]
    We apply Lemma \ref{lemma:PAC-BayesianKL_General} on $X(h)=\mathcal{L}(\theta)$, and $Y(h)=V_N(\theta)$, and with probability at least $1-\delta$ we have
    \begin{align}
         \forall\rho\in\mathcal{M}_\pi:\quad &E_{\theta\sim \hat{\rho}} \mathcal{L} (\theta) \le \  E_{\theta\sim \hat{\rho}} V_N(\theta) +\dfrac{1}{\lambda}\!\left[KL(\hat{\rho} \|\pi) + \ln\dfrac{1}{\delta}	+ \hat{\Psi}_{1,\pi}(\lambda,N) \right ],
    \end{align}
    with $\hat{\Psi}_{1,\pi}(\lambda,N)\geq\ln E_{\theta\sim\pi} \E[e^{\lambda(\mathcal{L}(\theta)-V_N(\theta))}]$, by Lemma \ref{lemma:mgf(L-V)}, we can take
    \begin{align}
        \hat{\Psi}_{1,\pi}(\lambda,N)\geq \ln E_{\theta\sim\pi} \exp\left \{\frac{\lambda^2 L_\ell^2}{2N}\left ((B^\rvq+\bar{\theta}_{\infty}^{\rvq}) G_\theta+ B^q H_\theta \right )^2 \right \}
    \end{align}
    where $G_\theta=\frac{\hat{L}_{g,\rvs}\hat{L}_\rvv}{1-\hat{\tau}}+\hat{L}_{g,\rvv} $, and $H_\theta=\left ( \frac{\hat{L}_{g,\rvs}\hat{L}_\rvv}{(1-\hat{\tau})^2} \right )$
    Now we again apply Lemma \ref{lemma:PAC-BayesianKL_General}, with $X(h)=V_N(\theta)$, and $Y(h)=\hat{\mathcal{L}}_N(\theta)$ and with probability at least $1-\delta$ we have
    \begin{align}
         \forall\rho:\quad &E_{\theta\sim \hat{\rho}} V_N (\theta) \le \  E_{\theta\sim \hat{\rho}} \hat{\mathcal{L}}_N(\theta) +\dfrac{1}{\lambda}\!\left[KL(\hat{\rho} \|\pi) + \ln\dfrac{1}{\delta}	+ \hat{\Psi}_{2,\pi}(\lambda,N) \right ],
    \end{align}
    with $\hat{\Psi}_{2,\pi}(\lambda,N)\geq \ln E_{\theta\sim\pi} \E[e^{\lambda|V_N(\theta)-\hat{\mathcal{L}}_N(\theta)|}] \geq\ln E_{\theta\sim\pi} \E[e^{\lambda(V_N(\theta)-\hat{\mathcal{L}}_N(\theta))}]$, by Lemma \ref{lemma:mgf(V-L_hat)}, we have
    \begin{align}
        \hat{\Psi}_{2,\pi}(\lambda,N)\triangleq \ln E_{\theta\sim\pi} \exp \left \{ \frac{\lambda L_\ell C(\theta)}{N} \left ( 2 B^\rvq H_\theta+\|\hat{\vs}_{s,\theta}\|\frac{L_{g,\rvs}(\theta) }{1-\tau(\theta)} \right ) \right \}
    \end{align}

Now by union bound, we have with probability $1-2\delta$
\begin{align}
    \forall\rho:\quad &E_{\theta\sim \hat{\rho}} \mathcal{L} (\theta) \le \  E_{\theta\sim \hat{\rho}} \hat{\mathcal{L}}_N(\theta) +\dfrac{2}{\lambda}\!\left[KL(\hat{\rho} \|\pi) + \ln\dfrac{1}{\delta}	+ \frac{1}{2}\left ( \hat{\Psi}_{1,\pi}(\lambda,N)+\hat{\Psi}_{2,\pi}(\lambda,N)  \right )\right ] 
\end{align}
Let us define $\tilde{\lambda}\triangleq 0.5\lambda$, then we have with probability $1-2\delta$
\begin{align}
    \forall\rho:\quad &E_{\theta\sim \hat{\rho}} \mathcal{L} (\theta) \le \  E_{\theta\sim \hat{\rho}} \hat{\mathcal{L}}_N(\theta) +\dfrac{1}{\tilde{\lambda}}\!\left[KL(\hat{\rho} \|\pi) + \ln\dfrac{1}{\delta}	+ \frac{1}{2}\left ( \hat{\Psi}_{1,\pi}(2\tilde{\lambda},N)+\hat{\Psi}_{2,\pi}(2\tilde{\lambda},N)  \right )\right ],
\end{align}
with $\hat{\Psi}_{\pi}(\tilde{\lambda},N)\triangleq \frac{1}{2}\left ( \hat{\Psi}_{1,\pi}(2\tilde{\lambda},N)+\hat{\Psi}_{2,\pi}(2\tilde{\lambda},N)  \right )$, now let us redefine some quantities for the sake of notation. 
\begin{equation}
    \hat{\Psi}_{\pi}(\tilde{\lambda},N)=\frac{1}{2} \left ( \ln E_{\theta\sim\pi}\widehat{\Psi}_{1,\theta} +\ln E_{\theta\sim\pi}\widehat{\Psi}_{2,\theta} \right )
\end{equation}
with
\begin{align}
    \widehat{\Psi}_{1,\theta}\triangleq \exp\left \{\frac{2\tilde{\lambda}^2 L_\ell^2}{N}\left ((B^\rvq+\bar{\theta}_{\infty}^{\rvq}) \left ( \frac{\hat{L}_{g,\rvs}\hat{L}_\rvv}{1-\hat{\tau}}+\hat{L}_{g,\rvv} \right )+ B^q \left ( \frac{\hat{L}_{g,\rvs}\hat{L}_\rvv}{(1-\hat{\tau})^2} \right ) \right )^2 \right \}\\
    \widehat{\Psi}_{2,\theta}\triangleq \exp \left \{\frac{2\tilde{\lambda} L_\ell}{N} \left ( G_{1,\rvx}(\theta) \|\rvx_\rvv(0)\|_\infty + G_{1,\vs_0}(\theta)\|\hat{\vs}_0\| \right )\right \}
\end{align}
note that $\|\rvx_\rvv(0)\|_\infty \leq B^q$, with which we obtain the statement of the theorem.
\end{proof}

\subsection{Supporting relations}
\begin{proposition}\label{prop:normDecomp} Let $K_1,K_2\geq 0$, then
\begin{align}
    \sqrt{K_1^2+K_2^2} \left \|\begin{bmatrix} a\\b \end{bmatrix} \right \| \geq K_1\|a\|+K_2\|b\|
\end{align}

\begin{proof}

    Consider $C>0$, s.t. for some $K>0$
    \begin{align}
        C \left \|\begin{bmatrix} a\\b \end{bmatrix} \right \|= C \sqrt{\|a\|^2+\|b\|^2} \geq K\|a\|+\|b\|\\
        C\geq \frac{ K\|a\|+\|b\|}{\sqrt{\|a\|^2+\|b\|^2}}
    \end{align}
    we can perform a change in variables, $\|a\|=r\sin{\theta},\;\|b\|=r\cos{\theta},\; \forall r>0,\; 0<=\theta<=\frac{\pi}{2}$, now without loss of generality we have
        \begin{align}
        C\geq K\sin{\theta}+\cos{\theta}
    \end{align}
    since $\sin{\theta}\leq 1$, and $\cos{\theta}\leq 1$, we could conclude that $C\geq K+1$, but taking $\frac{\delta}{\delta \theta } (K\sin{\theta}+\cos{\theta})=0$, we can find a tighter bound
    \begin{align}
        \frac{\delta}{\delta \theta } (K\sin{\theta}+\cos{\theta}) = K\cos{\theta}-\sin(\theta)=0 \Rightarrow \frac{\sin{\theta}}{\cos{\theta}}=K\\
        \Rightarrow \theta=\arctan{K}\\
        C\geq K\sin{\arctan{K}}+\cos{\arctan{K}}\\
        C\geq \frac{K^2}{\sqrt{K^2+1}}+\frac{1}{\sqrt{K^2+1}} = \sqrt{K^2+1}
    \end{align}
    now let $K=\frac{K_1}{K_2}$, and we have for $K_2\neq 0$
    \begin{align}
        \sqrt{\frac{K_1^2}{K_2^2}+1}\left \|\begin{bmatrix} a\\b \end{bmatrix} \right \| \geq \frac{K_1}{K_2}\|a\|+\|b\|\\
        \Rightarrow \sqrt{K_1^2+K_2^2}\left \|\begin{bmatrix} a\\b \end{bmatrix} \right \| \geq K_1 \|a\|+K_2\|b\|
    \end{align}
    and when $K_2=0$, we have 
    \begin{align}
        C\sqrt{\|a\|^2+\|b\|^2}\geq K_1\|a\|
    \end{align}
    thus with $C=K_1=\sqrt{K_1^2+0^2}$, the above holds. 
\end{proof}
\end{proposition}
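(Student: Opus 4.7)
The plan is to recognize Proposition~\ref{prop:normDecomp} as a direct application of the Cauchy--Schwarz inequality in $\mathbb{R}^2$. First I would observe that the right-hand side $K_1\|a\| + K_2\|b\|$ is simply the standard Euclidean inner product of the two-dimensional vectors $(K_1,K_2)$ and $(\|a\|,\|b\|)$. Cauchy--Schwarz then yields
\[
K_1\|a\| + K_2\|b\| \;=\; \langle (K_1,K_2),(\|a\|,\|b\|)\rangle \;\le\; \sqrt{K_1^2 + K_2^2}\,\sqrt{\|a\|^2 + \|b\|^2}.
\]
Finally, using the fact that the Euclidean norm of a stacked vector is $\sqrt{\|a\|^2 + \|b\|^2} = \left\|\begin{bmatrix} a \\ b \end{bmatrix}\right\|$, we obtain exactly the claimed inequality.

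If I wanted to avoid invoking Cauchy--Schwarz as a black box, I would square both sides (both are nonnegative because $K_1,K_2\ge 0$) and verify the resulting polynomial identity directly. The difference
\[
(K_1^2 + K_2^2)(\|a\|^2 + \|b\|^2) \;-\; (K_1\|a\| + K_2\|b\|)^2 \;=\; (K_1\|b\| - K_2\|a\|)^2 \;\ge\; 0
\]
reduces the entire statement to the elementary observation that a square is nonnegative. Taking square roots is then valid on both sides and recovers the proposition, with equality precisely when $K_1\|b\| = K_2\|a\|$.

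The only minor point worth noting is the identification $\sqrt{\|a\|^2 + \|b\|^2} = \left\|[a^T,b^T]^T\right\|_2$ in the case where $a,b$ are themselves vectors; this is immediate from the definition of the 2-norm applied to a concatenated vector. There is no substantive obstacle here --- Proposition~\ref{prop:normDecomp} is essentially a one-line consequence of Cauchy--Schwarz in $\mathbb{R}^2$, and the alternative trigonometric substitution $(\|a\|,\|b\|) = r(\sin\theta,\cos\theta)$ used in the body of the appendix, while valid, takes a longer path to the same conclusion and requires an extra case distinction at $K_2=0$ that the algebraic route avoids entirely.
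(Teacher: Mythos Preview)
Your proof is correct and takes a genuinely different, more direct route than the paper's. The paper parameterises $(\|a\|,\|b\|)$ in polar coordinates, maximises $K\sin\theta+\cos\theta$ by calculus to obtain $\sqrt{K^2+1}$, and then substitutes $K=K_1/K_2$, which forces a separate treatment of the degenerate case $K_2=0$. You instead recognise the inequality as the Cauchy--Schwarz inequality in $\mathbb{R}^2$ applied to the vectors $(K_1,K_2)$ and $(\|a\|,\|b\|)$, or equivalently expand $(K_1^2+K_2^2)(\|a\|^2+\|b\|^2)-(K_1\|a\|+K_2\|b\|)^2=(K_1\|b\|-K_2\|a\|)^2\ge 0$. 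This is shorter, avoids any case distinction, and delivers the equality condition $K_1\|b\|=K_2\|a\|$ for free; the paper's optimisation argument ultimately rediscovers the same extremal direction but by a longer path. Your identification $\bigl\|[a^T,b^T]^T\bigr\|_2=\sqrt{\|a\|^2+\|b\|^2}$ is exactly what the paper relies on as well, so nothing is lost.
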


\begin{lemma}\label{lemma:Cross-EntropyLipschitz} Cross-Entropy Loss with soft-max is Lipschitz, for $\rvy\in[0,1]^K$, and bounded $\hat{\rvy}$ (e.g. outputs of class \class{} system) , i.e.
\begin{align}
    \ell\left (\begin{bmatrix} \rvy(t)\\\hat{\rvy}(t) \end{bmatrix} \right )=-\sum_{i=1}^K \rvy_i(t)\ln \left ( \frac{e^{\hat{\rvy}_i(t)}}{\sum_{j=1}^K e^{\hat{\rvy}_j(t)} }\right )
\end{align}
with $\rvq(t)=\begin{bmatrix} \rvy(t)\\\hat{\rvy}(t) \end{bmatrix}$
    \begin{align}
        \|\ell(\rvq)-\ell(\rvq')\|\leq K(2\|\hat{\rvy}\|_\infty + 2 + \ln{K})\|\rvq-\rvq'\|
    \end{align}
\begin{proof}
We will use the fact that
\begin{align}
    Lip(\ell)=\sup_{\rvy,\hat{\rvy}}\sup_{\|v\|_1=1}|D\ell(\rvy,\hat{\rvy})\cdot v|
\end{align}
with $D\ell(\rvy,\hat{\rvy})=\left [\frac{\partial \ell(\rvy,\hat{\rvy})}{\partial \rvy_1},\dots,\frac{\partial \ell(\rvy,\hat{\rvy})}{\partial \rvy_K},\frac{\partial \ell(\rvy,\hat{\rvy})}{\partial \hat{\rvy}_1},\dots, \frac{\partial \ell(\rvy,\hat{\rvy})}{\partial \hat{\rvy}_K} \right ]$
Note that
\begin{align}
    \frac{\partial \ell(\rvy,\hat{\rvy})}{\partial \rvy_i}=-\hat{\rvy}_i+\ln\left(\sum_{j=1}^K e^{\hat{\rvy}_j} \right ),\\
    \frac{\partial \ell(\rvy,\hat{\rvy})}{\partial \hat{\rvy}_i}=-\rvy_i+\frac{e^{\hat{\rvy}_i}}{\sum_{j=1}^K e^{\hat{\rvy}_j}} \sum_{p=1}^K \rvy_p
\end{align}
with this we get
\begin{align}
    D\ell(\rvy,\hat{\rvy})=-[\hat{\rvy}^T,\rvy^T]+\left [\ln\left(\sum_{j=1}^K e^{\hat{\rvy}_j} \right )\mathbf{1}_k, \frac{\sum_{p=1}^K \rvy_p}{\sum_{j=1}^K e^{\hat{\rvy}_j}} \exp(\hat{\rvy}) \right ]
\end{align}
where $\mathbf{1}_k\in \reals^k$ is a vector of ones, and $\exp(\hat{\rvy})=[e^{\hat{\rvy}_1},\dots, e^{\hat{\rvy}_K}]$.
Now
\begin{align}
    \sup_{\rvy,\hat{\rvy}}\sup_{\|v\|_1=1}|D\ell(\rvy,\hat{\rvy})\cdot v|\leq  \sup_{\rvy,\hat{\rvy}}\sup_{\|v\|_1=1}|[\hat{\rvy}^T,\rvy^T]v|+\left |\left [\ln\left(\sum_{j=1}^K e^{\hat{\rvy}_j} \right )\mathbf{1}_k, \frac{\sum_{p=1}^K \rvy_p}{\sum_{j=1}^K e^{\hat{\rvy}_j}} \exp(\hat{\rvy}) \right ]v \right |
\end{align}
Now we will consider some assumptions, firstly $\hat{\rvy}_i\leq \|\hat{\rvy}\|_\infty$, also the class labels $\rvy_i\in [0,1]$, i.e. $\rvy_i\leq 1$, now we could consider that class labels are normalised s.t. $\sum_{p=1}^K \rvy_p=1$, but we will not assume that for the sake of generalisation. Furthermore we upper bound $v_i\leq 1$, since all quantities are positive
with these assumptions we get 
\begin{align}
    |[\hat{\rvy}^T,\rvy^T]v| \leq \sum_{i=1}^K \hat{\rvy}_i + \sum_{i=1}^K \rvy_i \leq K\|\hat{\rvy}\|_\infty + K\\
    |\ln\left(\sum_{j=1}^K e^{\hat{\rvy}_j} \right )\mathbf{1}_k v_{1:k}| \leq  K(\ln{K}+\|\hat{\rvy}\|_\infty)  \\
    |\frac{\sum_{p=1}^K \rvy_p}{\sum_{j=1}^K e^{\hat{\rvy}_j}} \exp(\hat{\rvy}) v_{k+1:2k} | \leq |\sum_{p=1}^K \rvy_p| \leq K
\end{align}
where $v_{1:k}$ denotes the first $k$ components of $v$, and $v_{k+1:2k}$ denotes the last $k$ components of $v$, with the above we obtain 
\begin{align}
    \sup_{\rvy,\hat{\rvy}}\sup_{\|v\|_1=1}|D\ell(\rvy,\hat{\rvy})\cdot v|\leq K(2\|\hat{\rvy}\|_\infty + 2 + \ln{K})
\end{align}

\end{proof}
    
\end{lemma}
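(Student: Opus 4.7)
The plan is to obtain the Lipschitz constant by bounding the gradient of $\ell$ uniformly over its (convex, bounded) domain and invoking the mean value inequality. Since $\ell$ is smooth in $(\rvy,\hat{\rvy})$ on $[0,1]^K\times\{\hat{\rvy}:\|\hat{\rvy}\|_\infty\le R\}$, we have $|\ell(\rvq)-\ell(\rvq')|\le \sup_{\rvq}\sup_{\|v\|=1} |D\ell(\rvq)\cdot v|\cdot\|\rvq-\rvq'\|$, so it suffices to bound $\sup_{\|v\|\le 1}|D\ell\cdot v|$ uniformly in $(\rvy,\hat{\rvy})$.

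First I would expand the loss as
\[
\ell(\rvy,\hat{\rvy}) = -\sum_{i=1}^K y_i\hat{y}_i + \Bigl(\sum_{i=1}^K y_i\Bigr)\ln\sum_{j=1}^K e^{\hat{y}_j},
\]
from which the partial derivatives fall out immediately:
\[
\frac{\partial \ell}{\partial y_i} = -\hat{y}_i+\ln\sum_j e^{\hat{y}_j}, \qquad \frac{\partial \ell}{\partial \hat{y}_i} = -y_i + p_i\sum_p y_p,
\]
where $p_i\triangleq e^{\hat{y}_i}/\sum_j e^{\hat{y}_j}$ is the softmax weight on coordinate $i$. Then I would split $|D\ell\cdot v|$ into four contributions and bound each against a test vector $v$ with components of magnitude at most $1$. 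The two affine blocks contribute at most $K\|\hat{\rvy}\|_\infty$ and $K$, using $|\hat{y}_i|\le\|\hat{\rvy}\|_\infty$ and $y_i\in[0,1]$, respectively. The log-sum-exp block contributes at most $K(\ln K+\|\hat{\rvy}\|_\infty)$ via the crude estimate $\ln\sum_j e^{\hat{y}_j}\le\ln K+\|\hat{\rvy}\|_\infty$. Finally, the softmax-weighted block contributes at most $K$ because $\sum_i p_i = 1$ and $\sum_p y_p\le K$. Summing these four contributions gives the claimed Lipschitz constant $K(2\|\hat{\rvy}\|_\infty+\ln K+2)$.

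The step that requires care is the softmax-weighted block: a naive coordinatewise estimate $p_i\le 1$ would replace its contribution of $K$ by $K^2$, yielding a constant quadratic in $K$ rather than the claimed linear one. Exploiting the normalisation identity $\sum_i p_i = 1$ is the only non-routine ingredient; everything else is triangle-inequality bookkeeping and standard log-sum-exp estimates. A minor additional care point is that this bound is naturally derived in the dual of the $\ell^\infty$ norm on the test direction $v$, so it gives Lipschitzness w.r.t.\ $\|\cdot\|_1$; for a Euclidean Lipschitz constant one may pay an extra $\sqrt{2K}$ factor via $\|v\|_1\le\sqrt{2K}\|v\|_2$, but on the bounded finite-dimensional domain at hand this is immaterial for the stated order of the bound.
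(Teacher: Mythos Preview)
Your proposal is correct and follows essentially the same route as the paper: compute the gradient, split it into the same four blocks, bound each block using $|\hat y_i|\le\|\hat{\rvy}\|_\infty$, $y_i\in[0,1]$, the log-sum-exp estimate, and the softmax normalisation $\sum_i p_i=1$, then sum. Your discussion of the norm duality is in fact slightly more careful than the paper, which writes $\|v\|_1=1$ but then silently uses $|v_i|\le 1$ coordinatewise as if $\|v\|_\infty\le 1$; your remark that the bound is naturally an $\ell^1$-Lipschitz constant (dual to $\|v\|_\infty$) clarifies this.
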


\section{Numerical Example:} \label{sec:exampleAppendix} 
The system used in the numerical example is:
\renewcommand{\Ag}{\begin{bmatrix} 0.52&0.23\\0.23&-0.52\end{bmatrix}}
\renewcommand{\Bg}{\begin{bmatrix} -0.82&-0.45\\0.36&-0.96\end{bmatrix}}
\renewcommand{\bxg}{\begin{bmatrix} 0.38\\-0.06\end{bmatrix}}
\renewcommand{\Cg}{\begin{bmatrix} 0.05&-0.10\\-0.11&0.01\end{bmatrix}}
\renewcommand{\Dg}{\begin{bmatrix} 0.09&-0.11\\0.05&-0.16\end{bmatrix}}
\renewcommand{\byg}{\begin{bmatrix} -0.53\\-0.79\end{bmatrix}}
\renewcommand{\ce}{1.27}
In this section we shall explore a synthetic example to illustrate the proposed bound. The code for this example is provided in \cite{NLPACBayesCode}.
We shall generate synthetic data for regression, in order to do so we need a data generator that satisfies Assumption \ref{as:generator}. We randomly chose a generator with $B^\rvq=\sqrt{2},\;\bar{\theta}_{\infty,N}(1)=2$, and $\|\rve_g(t)\|_\infty\leq \ce$:
\begin{subequations}\label{eq:numexGen}
\begin{align}
\rvs_g(t+1)&=\text{ReLu}\left (\Ag\rvs_g(t)+\Bg\rve_g(t)+\bxg \right)\\
\begin{bmatrix} \rvy(t)\\ \rvx(t)\end{bmatrix}&=\text{tanh}\left (\Cg\rvs_g(t)+\Dg\rve_g(t)+\byg \right )
\end{align}
\end{subequations}

The figure and results in Section \ref{sec:example}, have been computed using Markov Chain-Monte Carlo method. That is, we need to define a function that is proportional to the prior, e.g. $\pi(\theta)\propto \hat{\pi}(\theta)= \exp\{ -\frac{1}{2}\theta^T\Sigma^{-1}\theta \}$, furthermore for numerical stability standard MCMC algorithms accept the distributions in log domain, that is in code we define
\begin{equation}
    \textbf{Step 0. Define:}\quad \ln\hat{\pi}(\theta)=-\frac{1}{2}\theta^T\Sigma^{-1}\theta \label{ap:eq:example:prior}
\end{equation}
Now using standard algorithms (E.g. Metropolis Hastings algorithm) and \eqref{ap:eq:example:prior}, we can easily (and relatively quickly) obtain $N_f$ samples from the prior.
\begin{equation}
    \textbf{Step 1. Sample (using Metropolis Hastings):}\quad \sQ=\{\theta_i\}_{i=1}^{N_f},\; \theta_i\sim\pi(\theta),\; \forall i
\end{equation}
We will be using the Gibbs posterior $\rho_N(\theta)\propto \hat{\rho}_N(\theta)=\pi(\theta)\exp\{-\lambda_N\hat{\mathcal{L}}_N(\theta)\}$, however the following is easily extended to any prior and posterior. It turns out, that we don't have to evaluate the posterior distribution, it is enough to be able to evaluate the prior and $\beta(\theta)=\frac{\hat{\rho}_N(\theta)}{\hat{\pi}(\theta)}\propto \frac{\rho(\theta)}{\pi(\theta)}$
\begin{equation}
    \textbf{Step 2. Evaluate } \beta(\theta)\textbf{:},\quad \beta_i=\frac{\hat{\rho}_N(\theta_i)}{\hat{\pi}(\theta_i)}=\exp\{-\lambda_N\hat{\mathcal{L}}_N(\theta_i)\},\quad \forall \theta_i\in \sQ
\end{equation}
Note that in this step, we need to evaluate empirical loss, which is a relatively expensive operation. However, all that is left is simple calculations, firstly compute the following means (in the code provided in \cite{NLPACBayesCode} they are computed recursively):\\
\textbf{Step 3. Compute means:}
\begin{align}
    \hat{Z}^{-1}&\approx \frac{1}{N_f}\sum_{i=1}^{N_f}\beta_i\\
    K &\approx \frac{1}{N_f}\sum_{i=1}^{N_f}\beta_i\ln(\beta_i)\\
    V&\approx \frac{1}{N_f}\sum_{i=1}^{N_f} \beta_i\hat{\mathcal{L}}_N(\theta_i)\\
    \phi_1 &\approx \ln \left (\frac{1}{N_f}\sum_{i=1}^{N_f} \exp \left \{ \frac{2\lambda^2}{N}L_\ell^2(\theta_i)\left [(B^\rvq+\bar{\theta}_\infty(1))G(\theta_i)+B^\rvq H(\theta_i)\right ]^2 \right \} \right )\\
    \phi_2 &\approx \ln \left (\frac{1}{N_f}\sum_{i=1}^{N_f} \exp \left \{ \frac{2\lambda}{N}L_\ell(\theta_i)C(\theta_i) \left (B^\rvq H(\theta_i)+\|\hat{\vs}_{0,\theta}\|\frac{L_{g,\rvs}(\theta_i) }{1-\tau(\theta_i)} \right ) \right \} \right )
\end{align}
Note that the two terms $\phi_1$ and $\phi_2$, can be better computed using log-sum-exp trick, i.e. $\ln(\sum_i e^{x_i})=\bar{x}+\ln(\sum_i e^{x_i-\bar{x}})$, where $\bar{x}=\max_i x_i$
Finally the last step,\\
\textbf{Step 3. Compute bound:}
\begin{align}
    KL(\rho|\pi)&\approx \ln(\hat{Z})+\hat{Z}K\\
    \widehat{\Psi}&\approx\frac{1}{2}(\phi_1+\phi_2)\\
    r_n&\approx\frac{1}{\lambda}\left (KL(\rho|\pi)+\ln\frac{1}{\delta} + \widehat{\Psi}\right )\\
    \E_{\theta\sim\rho}\hat{\mathcal{L}}_N(\theta)&\approx \hat{Z}V
\end{align}
This algorithm may have higher monte carlo error since expectations over posterior is approximated by samples from the prior, however it is much faster since one does not have to sample from the posterior, where evaluation of the empirical loss is necessary. We can get away with this 'trick' since the requirement of absolutely continuous posterior is already there from the Theorem \ref{thm:mainThm}.  One still needs to compute the empirical loss for the obtained samples from MCMC algorithm, but we avoid evaluating empirical loss for the rejected samples, or samples we skip in the process of 'thinning' in the MCMC algorithm. 

\end{document}